\newcommand{\ra}[1]{\renewcommand{\arraystretch}{#1}}
\theoremstyle{plain}
\newtheorem{theorem}{Theorem}[section]
\newtheorem{lemma}[theorem]{Lemma}
\newtheorem*{lemma*}{Lemma}
\newtheorem{corollary}[theorem]{Corollary}
\newtheorem{definition}[theorem]{Definition}
\newtheorem{assumption}[theorem]{Assumption}
\theoremstyle{definition}
\newtheorem{remark}{Remark}
\newcommand{\algo}{\texttt{FGD}\xspace}
\newcommand{\palgo}{\texttt{ProjFGD}\xspace}
\newcommand{\R}{\mathbb{R}}
\DeclareMathOperator{\trace}{\textsc{Tr}}
\DeclareMathOperator*{\argmin}{argmin}
\newcommand{\ip}[2]{\left\langle #1, #2 \right\rangle}
\newcommand{\norm}[1]{\left \Vert #1\right \Vert}
\newcommand{\dist}{{\rm{\textsc{Dist}}}}
\newcommand{\obs}{y}
\newcommand{\linmap}{\mathcal{A}}
\newcommand{\C}{\mathcal{C}}
\newcommand{\X}{X}
\newcommand{\Xo}{X^{\star}}
\newcommand{\U}{U}
\newcommand{\f}{f}
\newcommand{\gradf}{\nabla \f}
\newcommand{\Uo}{U^{\star}}
\newcommand{\Q}{Q}
\newcommand{\weta}{\widehat{\eta}}
\newcommand{\Rus}{R_{\U_t}^\star}
\newcommand{\Uw}{\widetilde{\U}}
\def\U{U}
\def\V{V}
\def\Q{Q}
\def\Y{Y}
\def\obs{y}
\def\linmap{\mathcal{A}}
\newcommand{\E}{\mathcal{E}}
\title{Provable Burer-Monteiro factorization \\ for a class of norm-constrained matrix problems}
\author
       {Dohyung Park$^1$, Anastasios Kyrillidis$^1$, Srinadh Bhojanapalli$^{2}$ \\ \vspace{-0.75em}
       Constantine Caramanis$^1$, and Sujay Sanghavi$^1$
       \\ \vspace{1em}
       $^1$The University of Texas at Austin \\
       \{dhpark, anastasios, constantine\}@utexas.edu, \\ sanghavi@mail.utexas.edu\\ 
       $^2$Toyota Technological Institute at Chicago \\
       srinadh@ttic.edu
        }
\begin{document}
\maketitle

%!TEX root = ProjFGDmain.tex
%\begin{abstract}
%We propose a simple and scalable non-convex method for low-rank PSD matrix problems with a generic (strongly) convex objective $f$, and additional matrix norm constraints. 
%Such criteria appear in quantum state tomography and phase retrieval applications, among others.
%%In all cases, the number of variables grows fast, 
%However, without careful design, existing methods quickly run into time and memory bottlenecks, as problem dimensions increase.
%
%To remedy these shortcomings, we propose the \emph{Projected Factored Gradient Descent} (\palgo) algorithm, that operates on the \emph{low-rank factorization} of the variable space.
%Such factorization imputes non-convexity in the optimization;
%%, followed by a projection step on the factored space, to accommodate constraints: this results into 
%%Current state of the art mostly handles the \emph{unconstrained} case.
%%This paper builds upon and extends such recent developments:
%%When given a suitable initialization, 
%nevertheless, we show that 
%our method favors local linear convergence rate in the non-convex factored space, for a class of convex norm-constrained problems.
%We build our theory on a novel \emph{descent lemma}, that extends recent results on the unconstrained version of the problem.
%Our findings are supported by empirical evidence on quantum state tomography and sparse phase retrieval applications.
%\end{abstract}

\begin{abstract}
We study the projected gradient descent method on low-rank matrix problems with a strongly convex objective.
We use the Burer-Monteiro factorization approach to implicitly enforce low-rankness; such factorization introduces non-convexity in the objective.
We focus on constraint sets that include both positive semi-definite (PSD) constraints and specific matrix norm-constraints. %, described later in the text.
Such criteria appear in quantum state tomography and phase retrieval applications. %, among others.

We show that non-convex projected gradient descent favors local linear convergence in the factored space. %, for a class of convex norm-constrained problems.
We build our theory on a novel \emph{descent lemma}, that non-trivially extends recent results on the unconstrained problem.
The resulting algorithm is \emph{Projected Factored Gradient Descent}, abbreviated as \palgo, and shows superior performance compared to state of the art on quantum state tomography and sparse phase retrieval applications.
\end{abstract}

%!TEX root = ProjFGDmain.tex
\section{Introduction}
We consider matrix problems of the form:
\begin{equation}{\label{intro:eq_01}}\vspace{-0.1em}
\begin{aligned}
	& \underset{\X \in \R^{n \times n}}{\text{min}}
	& & f(\X) \quad \text{subject to} \quad \X \succeq 0,~ \X \in \C'. \vspace{-0.1em}
\end{aligned} 
\end{equation} 
$f$ is assumed to be strongly convex and have Lipschitz continuous gradients.
The constraint set contains PSD and additional $\C' \subseteq \R^{n \times n}$ convex constraints on $X$. 

There are several algorithmic solutions for \eqref{intro:eq_01}, operating on the variable space $\X \in \R^{n \times n}$. %;
%even a brief overview of the techniques would go beyond the page limits of this paper.
We mention \cite{jain2010guaranteed, becker2013randomized, balzano2010online, boumal2011rtrmc, wen2012solving, lee2010admira, kyrillidis2014matrix, jain2013low, tanner2013normalized, hardt2014fast, zhang2015global, lin2010augmented, becker2011templates, cai2010singular, becker2011nesta, chen2014coherent, yurtsever2015universal} and point to references therein.
Most of these schemes focus on the matrix sensing / matrix completion problem, and, thus, are designed for specific instances of $f$.
Moreover, the majority does not directly handle additional constraints.

More importantly, these methods often involve computationally expensive eigen-value/vector computations --at least once per iteration-- in order to satisfy the PSD constraint. 
This constitutes their computational bottleneck in large-scale settings.
Thus, it is desirable to find algorithms that scale well in practice.

\paragraph{Our approach.}
One way to avoid this is by \emph{positive semi-definite factorization} $X = UU^\top$.
In particular, we solve instances of \eqref{intro:eq_01} in the factored form as follows:
\begin{equation}{\label{intro:eq_00}}
\begin{aligned}
	& \underset{\U \in \R^{n \times r}}{\text{minimize}}
	& & f(\U\U^\top) \quad \quad \text{subject to} \quad \U \in \C.
\end{aligned}
\end{equation} 
This parametrization, popularized by Burer and Monteiro \cite{burer2003nonlinear, burer2005local}, naturally encodes the PSD constraint, removing the expensive eigen-decomposition projection step.
$r$ can be set to $\text{rank}(X^\star) := r^\star$, where $\X^\star$ is the optimal solution to \eqref{intro:eq_01}, but is often set to be much smaller than $r^\star$: in that case, factor $\U \in \R^{n \times r}$ contains much less variables to maintain and optimize than $\X = \U\U^\top$. 
Thus, such parametrization also makes it easier to update and store the iterates $\U$. 
By construction, the $\U\U^\top$ is a PSD solution, which could also be low-rank. 

$\C \subseteq \R^{n \times r}$ is a compact convex set, that models well $\C'$ in \eqref{intro:eq_01}. 
While in practice we can assume any such constraint $\C$ with \emph{tractable} Euclidean projection operator\footnote{In general, one could artificially introduce a constraint $\C$ in \eqref{intro:eq_00} --even when no $\C'$ constraint is present in \eqref{intro:eq_01}-- for better interpretation of results. 
}, in our theory we mostly focus on sets $\C$ that satisfy the following assumption.
%Our assumption on $\C'$ for our analysis is as follows:
\begin{assumption}{\label{ass:00}}
For $\X \succeq 0$, there is $\U \in \R^{n \times r}$ and $r \leq n$ such that $\X = \U\U^\top$. 
Then, $\C' \in \R^{n \times n}$ is endowed with constraint set $\C \subseteq \R^{n \times r}$ that $(i)$ for each $\X \in \C'$, there is an subset in $\C$ where each $\U \in \C$ satisfies $\X = \U\U^\top$ %, and each such $\U$ maps back only to one $\X$ in $\C'$ 
(see Section \ref{sec:when} for more details), 
%a mapping $\R^{n \times n} \mapsto \R^{n \times r}$, where $\U \in \C$ and $\X \in \C'$ (see definition of faithfulness in Definition \ref{prelim:def_02} and Section \ref{sec:when} for more details), 
and $(ii)$ its projection operator, say $\Pi_{\C}(V)$ for $V \in \R^{n \times r}$, is an entrywise scaling operation on the input $V$.
%\footnote{As it will be apparent later in the text, part $(i)$ in Assumption \eqref{ass:00} is only required in order to infer convergence both in $\X$ and $\U$. 
%Otherwise, our analysis lead to convergence in $\U$ but we cannot
%}
\end{assumption}

Part $(i)$ in Assumption \ref{ass:00} is required in our analysis in order to claim convergence also in the $\X$ space, through the factored $\U$ space. %we require $\C$ and $\C'$ be connected via a continuous map $\U \mapsto \X$, such that any point $\U$ in $\C$ has a representative point $\X = \U\U^\top$ in $\C'$. 
In other case, our theory still leads just to convergence in $\U$.
We defer this discussion to Section \ref{sec:projFGD}.

Criteria of the form \eqref{intro:eq_01} appear in applications from diverse research fields.
As an exemplar, consider density matrix estimation of quantum systems \cite{aaronson2007learnability, gross2010quantum, kalev2015quantum}, where $\C'$ is defined as  $\trace(X) \leq 1$ and satisfies Assumption \ref{ass:00}.
We also experimentally consider problems that only approximately satisfy the above assumption, such as sparse phase retrieval applications \cite{jaganathan2013sparse, shechtman2014gespar, Ohlsson2012CPRL},
and sparse PCA \cite{laue2012hybrid}: in these cases,  $\C'$ contains $\ell_1$-norm constraints on $\X$; see Section \ref{sec:motiv} for more details.
%Due to their widespread occurrence, it is critical to devise easy-to-implement, efficient and provable solvers.

%Another motivation for studying \eqref{intro:eq_00} origins from large-scale problem instances: 

\paragraph{Contributions.}
Our aim is to broaden the results on efficient, non-convex recovery for \emph{constrained} low-rank matrix problems.
Our developments maintain a connection with analogous results in convex optimization, where standard assumptions are made: we consider the common case where 
$f$ is (restricted) smooth and (restricted) strongly convex \cite{agarwal2010fast}. %; extension to just smooth $f$ cases is left for future work.\footnote{We consider here the strongly convex case since such cases are usually more interesting in practice, as we show in the experiments section.}
We provide experimental results for two important tasks in physical sciences: quantum state density estimation and sparse phase retrieval. 

Some highlights of our results are the following:
\begin{itemize}[leftmargin = 0.4cm] \vspace{-0.2cm}
\item A key property for proving convergence in convex optimization is the notion of \emph{descent}. \emph{I.e.}, given current, next and optimal points in the $\X$ space, say $\X_t, \X_{t+1}, \X^\star$, respectively, and the recursion $\X_{t+1} = \X_t - \eta \nabla f(\X_t)$, the condition $\left\langle \X_t - \X_{t+1}, ~\X_t - \X^\star\right\rangle \geq C$ --where $C > 0$ depends on the gradient norm-- implies that $\X_{t+1}$ ``moves" towards the correct direction.\footnote{To see this, observe that $\X^\star - \X_t$ is the best possible direction to follow, while $\X_t - \X_{t+1}$ is the direction we actually follow. Then, such a condition implies that there is a non-trivial positive correlation between these two directions.} 
%Similar results have been proved in \cite{bhojanapalli2015dropping} on the non-convex factored-space, however \emph{without constraints}. 
In this work, we present a \emph{novel descent lemma} that non-trivially extends such conditions for the constrained case in \eqref{intro:eq_00}, under the assumptions mentioned above.
We hope that this result will trigger more attempts towards more generic convex sets. \vspace{-0.1cm}
\item We propose \palgo, a non-convex projected gradient descent algorithm that solves instances of \eqref{intro:eq_00}. \palgo has favorable local convergence guarantees when $f$ is (restricted) smooth and (restricted) strongly convex.
We also present an initialization procedure with guarantees in the supplementary material. \vspace{-0.1cm}
\item Finally, we extensively study the performance of \palgo on two problem cases: $(i)$ quantum state tomography and $(ii)$ sparse phase retrieval. Our findings show significant acceleration when \palgo is used, as compared to state of the art.
\end{itemize}

\subsection{Related work}{\label{sec:related}}
%We elaborate a bit further on an evolving line of work that considers variants of \eqref{intro:eq_00}, where factorization $\U\U^\top$ is used.

The work of \cite{chen2015fast} proposes a first-order algorithm for \eqref{intro:eq_00},
where the nature of $\C$ is more generic, and depends on the problem at hand.
The authors provided a set of conditions (local descent, local Lipschitz, and local smoothness) under which one can prove convergence to an $\varepsilon$-close solution with $O(1/\varepsilon)$ or $O(\log(1/\varepsilon))$ iterations. While the convergence proof is general, checking whether the three conditions hold is a non-trivial problem and requires different analysis for each problem. 
We believe this paper complements \cite{chen2015fast}: in the latter, the closest to our constraints are these of ``max-norm" incoherence constraints; however, in our case, the objective function needs to only satisfy standard strongly convex and smoothness assumptions.

\cite{bhojanapalli2015dropping} proposes the \emph{Factored Gradient Descent} (\algo) algorithm for \eqref{intro:eq_00}, where $\C \equiv \R^{n \times r}$. 
\algo is also a first-order scheme. Key ingredient for convergence is a novel step size selection that can be used for any $f$, as long as it is Lipschitz gradient smooth (and strongly convex for faster convergence).
However, \cite{bhojanapalli2015dropping} cannot accommodate any constraints on $\U$. 
%Notwithstanding this limitation, 
%\cite{bhojanapalli2015dropping} is the first paper that \emph{provably} solves the unconstrained re-parametrized problem in \eqref{intro:eq_00} for generic convex functions $f$, under \emph{common convex assumptions}.

Concurrently, \cite{zhao2015nonconvex} presents a new analysis that handles non-square cases in \eqref{intro:eq_00}.
In that case, we look for a factorization $\X = \U\V^\top \in \R^{n \times p}$. 
The idea is based on the inexact first-order oracle, previously used in \cite{balakrishnan2014statistical}.
Similarly to \cite{bhojanapalli2015dropping}, the proposed theory does not handle any constraints. 

%\paragraph{Roadmap.} 
%%The rest of the paper is organized as follows. 
%Section \ref{sec:prelim} contains some basic definitions and assumptions that are repeatedly used in the main text. 
%%Section \ref{sec:FGD} present an overview of the \algo algorithm in \cite{bhojanapalli2015dropping}. 
%Section \ref{sec:projFGD} describes \palgo and its theoretical guarantees. 
%In Section \ref{sec:motiv}, we motivate the necessity of \palgo via some applications; due to space limitations, only one application is described in the main text (the second application is included in the supplementary material).
%This paper concludes with a discussion on future directions in Section \ref{sec:discussion}.
%Supplementary material contains further experiments, all proofs of theorems in main text, and a proposed initialization procedure.

\paragraph{Roadmap.} 
%The rest of the paper is organized as follows. 
Section \ref{sec:prelim} contains some basic definitions and assumptions that are repeatedly used in the main text. 
%Section \ref{sec:FGD} present an overview of the \algo algorithm in \cite{bhojanapalli2015dropping}. 
Section \ref{sec:projFGD} describes \palgo and its theoretical guarantees. 
In Section \ref{sec:motiv}, we motivate the necessity of \palgo via some applications; due to space limitations, only one application is described in the main text (the second application is included in the supplementary material).
This paper concludes with a discussion on future directions in Section \ref{sec:discussion}.
Supplementary material contains further experiments, all proofs of theorems in main text, and a proposed initialization procedure.
\section{Preliminaries}{\label{sec:prelim}}
\noindent \textbf{Notation.} For matrices $\X, \Y \in \R^{n \times n}$, $\ip{\X}{\Y} = \trace\left(\X^\top \Y \right)$ represents their inner product. 
$\X \succeq 0$ denotes $\X$ is a positive semi-definite (PSD) matrix. 
We use $\norm{\X}_F$ and $\sigma_1(\X)$ for the Frobenius and spectral norms of a matrix, respectively; we also use $\|\X\|_2$ to denote the spectral norm.
Moreover, we denote as $\sigma_i(\X)$ the $i$-th singular value of $\X$.
$\X_r$ denotes the best rank-$r$ approximation of $\X$. 
For $\X$ such that $\X = \U\U^\top$, the gradient of $f$ with respect to $\U$ is $\left(\gradf(\U \U^\top) + \gradf(\U \U^\top)^\top\right)\U$. 
If $f$ is also symmetric, \emph{i.e.}, $f(\X) = f(\X^\top)$, then $ \gradf(\X) = 2\gradf(\X) \cdot \U$. 

An important issue in optimizing $f$ over the factored space is the existence of non-unique possible factorizations. 
We use the following rotation invariant distance metric:
\begin{definition}{\label{prelim:def_04}}
Let matrices $\U, \V \in \R^{n \times r}$. Define:
\begin{align*}
\dist\left(U, V\right) :=\min_{R: R \in \mathcal{O}} \norm{U - V R}_F, 
\end{align*} where $\mathcal{O}$ is the set of $r \times r$ orthonormal matrices $R$.
\end{definition}

\noindent \textbf{Assumptions.} We consider applications that can be described by \emph{strongly} convex functions $f$ with \emph{gradient Lipschitz continuity}.\footnote{Our ideas can be extended in a similar fashion to the case of restricted smoothness and restricted strong convexity \cite{agarwal2010fast}.} We state these standard definitions below for the square case.

\begin{definition}{\label{prelim:def_00}}
Let $f: \R^{n \times n} \rightarrow \R$ be convex and differentiable. $f$ is $\mu$-strongly convex if $\forall \X, \Y \in  \R^{n \times n}$
\begin{equation}\label{eq:sc}
f(\Y) \geq f(\X) + \ip{\gradf\left(\X\right)}{\Y - \X} + \tfrac{\mu}{2} \norm{Y - \X}_F^2.
\end{equation}
\end{definition}

\begin{definition}{\label{prelim:def_01}}
Let $f: \R^{n \times p} \rightarrow \R$ be a convex differentiable function. $f$ is gradient Lipschitz continuous with parameter $L$ (or $L$-smooth) if $\forall \X, \Y \in  \R^{n \times n}$
\begin{equation}
\norm{\gradf\left(\X\right) - \gradf\left(\Y\right)}_F \leq L \cdot \norm{\X - \Y}_F.
\end{equation}
\end{definition} 

For our proofs, we will also make the \emph{faithfulness} assumption, as in \cite{chen2015fast}:
\begin{definition}{\label{prelim:def_02}}
Let $\E$ denote the set of equivalent factorizations that lead to a rank-$r$ matrix $\Xo \in \R^{n \times n}$; 
i.e.,
$\E := \left\{ \Uo \in \R^{n \times r}~:~ \Xo = \Uo\U^{\star\top} \right\}.$
Then, we assume $\E \subseteq \C$, \emph{i.e.}, the resulting convex set $\C$ in \eqref{intro:eq_00} (from $\C'$ in \eqref{intro:eq_01}) \emph{respects the structure of $\E$}.
\end{definition}
This assumption is necessary for arguments regarding the quality of solution obtained in the factored $\U$ space, w.r.t. the original $\X$ space.
%!TEX root = ProjFGDmain.tex
\section{The Projected Factored Gradient Descent (\palgo) algorithm}{\label{sec:projFGD}}

Let us first describe the \palgo algorithm, a projected, first-order scheme. 
%the values of each individual parameter will be apparent later in the text.
The discussion in this part holds for any constraint set $\C$; later in the text, in order to obtain theoretical guarantees, we make further assumptions---such as Assumption \ref{ass:00}.

The pseudocode is provided in Algorithm \ref{algo:projFGD}.
Let  $\Pi_{\mathcal{C}}\left(\V\right)$ denote the projection of an input matrix $\V \in \R^{n \times r}$ onto the convex set $\C$.
For initialization, the starting point is computed as follows: we first compute $\X_0 := \sfrac{1}{\widehat{L}} \cdot \Pi_{+}\left(-\nabla f(0) \right)$, where $\Pi_{+}(\cdot)$ denotes the projection onto the set of PSD matrices and $\widehat{L}$ represents an approximation of $L$. %; see also \cite{bhojanapalli2015dropping}.
Then, \palgo requires a top-$r$ SVD calculation, \emph{only once}, to compute $\widetilde{\U}_0 \in \R^{n \times r}$, such that $\X_0 = \widetilde{\U}_0 \widetilde{\U}_0^\top$; using $\widetilde{\U}_0$, the initial point $\U_0$ satisfies $\U_0 = \Pi_{\C}\left( \widetilde{\U}_0 \right)$, in order to accommodate constraints $\C$.

The main iteration of \palgo applies the simple rule:
\begin{align*}
\U_{t+1} = \Pi_{\mathcal{C}}\left(\U_t - \eta \gradf(\U_t\U_t^\top)  \cdot \U_t\right),
\end{align*} 
with step size:
\begin{align}\label{eq:step_size}
\eta \leq \tfrac{1}{128\left(L\norm{\X_0}_2 + \norm{\gradf(X_0) }_2\right)}.
\end{align}
Here, one can use $\widehat{L}$ to approximate $L$.

\begin{algorithm}[!t]
	\caption{\palgo method}\label{algo:projFGD}
	\begin{algorithmic}[1]		
		\STATE \textbf{Input:} Function $f$, target rank $r$, \# iterations $T$. 
		\STATE Compute $\X_0 := \sfrac{1}{\widehat{L}} \cdot \Pi_{+}\left(-\nabla f(0) \right)$.
		\STATE Set $\widetilde{\U}_0 \in \R^{n \times r}$ such that $\X_0 = \widetilde{\U}_0 \widetilde{\U}_0^\top$.
		\STATE Compute $\U_0 = \Pi_{\C}\left( \widetilde{\U}_0 \right)$.
		\STATE Set step size $\eta$ as in \eqref{eq:step_size}.
		\FOR {$t=0$ to $T-1$}
			\STATE $\U_{t+1} = \Pi_{\mathcal{C}}\left(\U_t - \eta \gradf(\U_t\U_t^\top)  \cdot \U_t\right)$.
		\ENDFOR
		\STATE \textbf{Output:} $\X = \U_T\U_T^\top$. 	
	\end{algorithmic}
\end{algorithm}

Key ingredients to achieve provable convergence are initialization--so that initial point $\U_0$ leads to $\dist(\U_0, \Uo)$ sufficiently small-- and the step size selection. 
For the initialization, apart from the procedure mentioned above, we could also use more specialized spectral methods --see \cite{chen2015fast, zheng2015convergent}-- or even run algorithms on \eqref{intro:eq_01} for only a few iterations --this requires further full or truncated SVDs \cite{tu2015low}.
The discussion regarding our initialization and what type of guarantees one obtains is deferred to the supplementary material.

\subsection{When constrained non-convex problems can be scary?}{\label{sec:when}}

In stark contrast to the \emph{convex} projected gradient descent method,
proving convergence guarantees for \eqref{intro:eq_00} is not a straightforward task. 
First, if we are interested in quantifying the quality of the solution in the factored space w.r.t. $\Xo$,
$\C$ should be \emph{faithful}, according to Definition \ref{prelim:def_02}.
Furthermore, there should exist a mapping $\U \mapsto \X$ 
that relates the constraint set $\C'$, in the original variable space (see \eqref{intro:eq_01}), to the factored one $\C$ (see \eqref{intro:eq_00}). 
In that case, claims about convergence to a point $\Uo$, in the factored space, can be ``transformed" into claims about convergence to a point close to $\X^\star$, in the original space, that also satisfies the constraints.
This is the case for the following constraint case: for any $\X = \U\U^\top$, $\trace(\X) \leq \lambda \Leftrightarrow \|\U\|_F^2 \leq \lambda$, and, thus, satisfying $\|\U\|_F^2 \leq \lambda$, for any $\U$, guarantees that $\trace(\X) \leq \lambda$ for $\X = \U\U^\top$.
Apart from the example above, other characteristic cases include Schatten norms. 

Contrary to this example, consider the case
$\C' := \left\{\X \in \R^{n \times n}:~ \|\X\|_1 \leq \lambda'\right\}$, where, $\|\X\|_1 = \sum_{ij} |\X_{ij}|$.
A natural choice for $\C$ would be $\C := \left\{\U \in \R^{n \times r}:~ \|\U\|_1 \leq \lambda\right\}$, for $\lambda, \lambda' > 0$; however, depending on the selection of $\lambda$, points in $\U \in \C$ might result into points $\X = \U\U^\top$ that $\X \not\in \C'$.
In this case, $\Uo$ of \eqref{intro:eq_00} could be $\not\in \E$ and, thus, convergence guarantees to $\Uo$ might have no meaning in the convergence in $\X$ space.
However, as we show in Section 6.1, in this case $\C$  ``simulates" well $\C'$: if $\U$ is sparse enough, then $\X = \U\U^\top$ could also be sparse, so proper selection of $\lambda$ plays a key role.
Even in this case, \palgo performs competitively compared to state-of-the-art approaches. %; however, no convergence guarantees to $\X^\star$ apply in this case.

Second, 
the projection step itself complicates considerably the non-convex analysis, as we show in the supplementary material.
In our theory, we focus on convex sets $\C$ that satisfy \eqref{intro:eq_proj} where $\Pi_\C(\V)$ can be equivalently seen as scaling the input.
\emph{E.g.}, when $\C \equiv \left\{\U \in \R^{n \times r}:~\|\U\|_F \leq \lambda\right\}$, $\Pi_\C(\V) = \xi(\V) \cdot \V$ where 
$\xi(\V) := \tfrac{\lambda}{\|\V\|_F}$, for $\V \not\in \C$. 
Our theory highlights that, even for this simple case, proving convergence is not a straightforward task.

\subsection{Theoretical guarantees of {\rm \palgo} for $\C := \{ U \in \R^{m \times r} : \|U\|_F \le \lambda \}$}

We provide theoretical guarantees for \palgo in the case where the constraint satisfies 
\begin{align}{\label{intro:eq_proj}}
\Pi_\C(\V) = \argmin_{\U \in \C} \tfrac{1}{2} \|\U - \V\|_F^2 = \left\{
	\begin{array}{ll}
		\V  & \mbox{if } \V \in \C, \\
		\xi(V) \cdot \V & \mbox{if } \V \not\in \C,
	\end{array}
\right.
\end{align} 
\emph{i.e.}, the projection operation is an \emph{entry-wise scaling}.
Such settings include the Frobenius norm constraint $\C = \{ U \in \R^{m \times r} : \|U\|_F \le \lambda \}$, which appears in quantum state tomography. 
Moreover, for this case, the constraint has one-to-one correspondence with the trace constraint in the original $X$ space; thus any argument in the $\U$ space applies for the $\X$ space also.

We assume the optimum $\X^\star$ satisfies $\text{rank}(\X^\star) = r^\star$. 
%To solve \eqref{intro:eq_01}, we optimize over the re-parameterized problem \eqref{intro:eq_00}, with the \emph{faithful} constraint set $\C$ satisfying the properties above.
For our analysis, we will use the following step sizes: 
\begin{align*}
\widehat{\eta} &= \tfrac{1}{128(L\|\X_t\|_2 + \|Q_{\U_t} Q_{\U_t}^\top\gradf(\X_t)\|_2)}, \\ \eta^\star &= \tfrac{1}{128(L\|\X^\star\|_2 + \|\gradf(\X^\star)\|_2)}, 
\end{align*} 
where $Q_A$ is a basis for column space of  $A$.
By Lemma A.5 in \cite{bhojanapalli2015dropping}, we know that $\widehat{\eta} \geq \tfrac{5}{6} \eta$ and $\tfrac{10}{11}\eta^\star \leq \eta \leq \tfrac{11}{10} \eta^\star$. %, for $C = \tfrac{1}{16}$ in \eqref{eq:step_size}.
Due to such relationships, in our proof we will work with step size $\widehat{\eta}$: this is equivalent --up to constants-- to the original step size $\eta$, used in the algorithm. 
Thus, any results below will automatically imply similar results hold for $\eta$, by using the bounds between step sizes.

\begin{theorem}[(Local) Convergence rate for restricted strongly convex and smooth $f$]\label{thm:projFGD_guarantees}
Let $\C \subseteq \R^{n \times r}$ be a convex, compact, and faithful set, with projection operator satisfying \eqref{intro:eq_proj}. 
Let $\U_t \in \C$ be the current estimate and $\X_t = \U_t\U_t^\top$.
Assume current point $\U_t$ satisfies $ \dist(\U_t, \Uo) \leq \rho' \sigma_{r}(\Uo)$, for $\rho' := c \cdot \tfrac{\mu}{L} \cdot \tfrac{\sigma_r(\X^\star)}{\sigma_1(\X^\star)}, ~c \leq \tfrac{1}{200}$, and given $\xi_t(\cdot) \gtrsim 0.78 $ per iteration, the new estimate of \palgo, $\U_{t+1} = \Pi_{\mathcal{C}}\left(\U_t - \widehat{\eta} \gradf(\U_t\U_t^\top)  \cdot \U_t\right) = \xi_t \cdot \left(\U_t - \widehat{\eta} \gradf(\U_t\U_t^\top)  \cdot \U_t\right)$ satisfies
\begin{equation}
\dist(\U_{t+1}, \Uo)^2 \leq \alpha \cdot \dist(\U_t, \Uo)^2, \label{conv:eq_01}
\end{equation}
where $\alpha := 1 - \frac{\mu \cdot \sigma_r(\Xo)}{550(L \|\Xo\|_2 + \|\gradf(\Xo)\|_2)} < 1$. Further, $\U_{t+1}$ satisfies $ \dist(\U_{t+1}, \Uo) \leq \rho' \sigma_{r}(\Uo). $
\end{theorem}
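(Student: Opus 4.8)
The plan is to establish the contraction \eqref{conv:eq_01} by combining a descent-lemma-style bound in the unconstrained factored space with a control on how the projection $\Pi_\C$ perturbs the distance to $\Uo$. Concretely, write $\U_{t+1} = \xi_t \cdot \widehat{\U}_{t+1}$ where $\widehat{\U}_{t+1} := \U_t - \weta \gradf(\X_t)\U_t$ is the unconstrained gradient step, and recall that by faithfulness $\Uo \in \C$, so $\Pi_\C$ is non-expansive toward $\Uo$: $\dist(\U_{t+1}, \Uo) = \dist(\Pi_\C(\widehat{\U}_{t+1}), \Pi_\C(\Uo))$ cannot be larger than $\dist(\widehat{\U}_{t+1}, \Uo)$ if we pick the rotation $R$ optimally and use that $\Uo R \in \C$ as well (since $\E \subseteq \C$). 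However, because the step size $\weta$ is tailored to $\U_t$ rather than to the optimum, and because the scaling $\xi_t$ can be strictly less than $1$, I would not simply invoke non-expansiveness and be done; instead I would carry the $\xi_t$ factor through explicitly, using the hypothesis $\xi_t \gtrsim 0.78$ to absorb the resulting slack into the constants hidden in $\alpha$.

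\textbf{Step 1: expand the squared distance.} Fix the optimal rotation $R$ for the current iterate, set $\Delta_t := \U_t - \Uo R$, and write
\begin{align*}
\dist(\U_{t+1},\Uo)^2 \le \norm{\xi_t\widehat{\U}_{t+1} - \Uo R}_F^2 = \xi_t^2\norm{\widehat{\U}_{t+1} - \Uo R}_F^2 + (\xi_t^2 - 2\xi_t + 1)\,\text{(cross terms)},
\end{align*}
and then expand $\norm{\widehat{\U}_{t+1} - \Uo R}_F^2 = \norm{\Delta_t}_F^2 - 2\weta\ip{\gradf(\X_t)\U_t}{\Delta_t} + \weta^2\norm{\gradf(\X_t)\U_t}_F^2$. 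This reduces the problem to lower-bounding the inner product term $\ip{\gradf(\X_t)\U_t}{\Delta_t}$ — this is precisely the \emph{descent lemma} the paper advertises — and upper-bounding the gradient-norm term.

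\textbf{Step 2: the descent lemma.} The heart of the argument is to show $\ip{\gradf(\X_t)\U_t}{\Delta_t} \gtrsim \tfrac{1}{\weta}\cdot(\text{constant})\cdot\norm{\Delta_t}_F^2 \cdot \text{(curvature)} + (\text{gradient-norm slack})$, i.e. the factored gradient has non-trivial positive correlation with the error direction $\Delta_t$. I would follow the strategy of \cite{bhojanapalli2015dropping}: decompose $\gradf(\X_t)\U_t = \tfrac{1}{2}(\gradf(\X_t)+\gradf(\X_t)^\top)\U_t$, relate $\ip{\gradf(\X_t)\U_t}{\Delta_t}$ to the ambient-space quantity $\ip{\gradf(\X_t)}{\X_t - \Xo}$ using the identity $\X_t - \Xo = \Delta_t\U_t^\top + \Uo R\Delta_t^\top - \Delta_t\Delta_t^\top$, apply $\mu$-strong convexity and $L$-smoothness of $f$ in the $\X$ space, and then use the local assumption $\dist(\U_t,\Uo)\le\rho'\sigma_r(\Uo)$ to control the quadratic cross terms ($\norm{\Delta_t\Delta_t^\top}$ is second-order small) and to lower bound $\sigma_r(\U_t)$, $\sigma_r(\X_t)$ in terms of $\sigma_r(\Xo)$. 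The main obstacle — and the reason the paper calls this a \emph{non-trivial} extension — is that here $\U_t$ need not coincide with the unconstrained iterate the descent lemma of \cite{bhojanapalli2015dropping} was proved for: one must verify that the scaling operation and the constraint do not destroy the descent inequality, which is why the bound $\xi_t \gtrsim 0.78$ and the step-size relations $\weta \ge \tfrac56\eta$, $\tfrac{10}{11}\eta^\star \le \eta \le \tfrac{11}{10}\eta^\star$ are invoked — they let me replace $\weta$ by $\eta^\star$ up to universal constants and keep all curvature terms uniformly comparable to $\tfrac{\mu\sigma_r(\Xo)}{L\norm{\Xo}_2 + \norm{\gradf(\Xo)}_2}$.

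\textbf{Step 3: assemble the contraction and close the induction.} Plugging the descent lemma and the gradient-norm bound $\weta^2\norm{\gradf(\X_t)\U_t}_F^2 \le (\text{const})\cdot\weta\cdot(L\norm{\Xo}_2 + \norm{\gradf(\Xo)}_2)^{-1}\cdot\norm{\Delta_t}_F^2\cdot(\ldots)$ back into Step 1, and bounding the $\xi_t$-dependent cross terms using $0.78 \le \xi_t \le 1$, yields $\dist(\U_{t+1},\Uo)^2 \le \big(1 - \tfrac{\mu\sigma_r(\Xo)}{550(L\norm{\Xo}_2+\norm{\gradf(\Xo)}_2)}\big)\dist(\U_t,\Uo)^2$, which is \eqref{conv:eq_01} with the claimed $\alpha < 1$; here I would just verify that $\rho' \le \tfrac{1}{200}\cdot\tfrac{\mu}{L}\cdot\tfrac{\sigma_r(\Xo)}{\sigma_1(\Xo)}$ is small enough that all the ``$\le$'' steps produce the numerical constant $550$. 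Finally, since $\alpha < 1$ we get $\dist(\U_{t+1},\Uo) \le \dist(\U_t,\Uo) \le \rho'\sigma_r(\Uo)$, which proves the last sentence and makes the statement self-propagating, so it can be chained over iterations. The genuinely delicate bookkeeping is tracking the interaction of $\xi_t$ with the descent term; everything else is the now-standard factored-gradient calculus.
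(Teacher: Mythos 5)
Your overall architecture (expand the squared distance, prove a descent lemma, reassemble, close the induction) matches the paper's, and your Step~1 algebra is essentially the paper's decomposition specialized to the scaling projection. The genuine gap is in Step~2: the descent inequality you propose to prove --- a lower bound on $\ip{\gradf(\X_t)\U_t}{\U_t - \Uo \Rus}$ by a curvature term plus ``gradient-norm slack'' --- is exactly the \emph{unconstrained} descent lemma of \cite{bhojanapalli2015dropping}, and Section 3.3 of the paper shows it is false in the constrained setting: at $\U_t \equiv \Uo$ (up to rotation) the left-hand side is zero while $\gradf(\Xo)\Uo \neq 0$ in general, since the gradient need not vanish at a constrained optimum. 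Hence no inequality of the form $2\weta\ip{\gradf(\X_t)\U_t}{\Delta_t} \geq \weta^2\norm{\gradf(\X_t)\U_t}_F^2 + c\,\weta\mu\,\sigma_r(\Xo)\dist(\U_t,\Uo)^2$ can hold. You frame the constraint as something one must ``verify does not destroy'' the unconstrained inequality; in fact it does destroy it, and the repair is structural rather than a matter of constants.

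The missing idea is that the projection residual $\norm{\U_{t+1} - \Uw_{t+1}}_F^2$ must be added to the \emph{left-hand side} of the descent lemma (the paper's Lemma 3.3 reads $2\weta\ip{\gradf(\X_t)\U_t}{\U_t - \Uo\Rus} + \norm{\U_{t+1}-\Uw_{t+1}}_F^2 \geq \cdots$), while a matching $-\norm{\U_{t+1}-\Uw_{t+1}}_F^2$ is produced in the main recursion via the obtuse-angle property of convex projections, $\ip{\U_{t+1}-\Uw_{t+1}}{\U_{t+1}-\Uo\Rus} \leq 0$, which is available because $\Uo\Rus\in\C$ by faithfulness. Moreover, the proof of this augmented lemma replaces ``the gradient vanishes at the optimum'' by ``$\X_{t+1}$ is feasible, hence $f(\X_{t+1}) \geq f(\Xo)$,'' applies smoothness between $\X_t$ and $\X_{t+1}$ rather than between $\X_t$ and $\Xo$, and then shows that the resulting $\xi$-dependent quantities --- the term $(1-\xi^2)\ip{\gradf(\X_t)\U_t}{\U_t}$ coming from that smoothness step together with the expansion of $\norm{\U_{t+1}-\Uw_{t+1}}_F^2 = (1-\xi)^2\norm{\Uw_{t+1}}_F^2$ --- combine into a nonnegative quantity precisely when $\xi \gtrsim 0.78$. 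That is where the threshold actually enters; in your sketch $\xi_t \gtrsim 0.78$ is invoked only to ``absorb slack into constants,'' which cannot rescue an inequality that fails at the optimum itself. Your Step~3 and the self-propagation of the basin condition are fine once the correct descent lemma is in hand.
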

 
The complete proof of the theorem is provided in the supplementary material.
The assumption $ \dist(\U_t, \Uo) \leq \rho' \sigma_{r}(\Uo)$ only leads to a local convergence result.  
\cite{chen2015fast} provide some initialization procedures for different applications, where we can find an initial point $\U_0$ such that $ \dist(\U_0, \Uo) \leq \rho' \sigma_{r}(\Uo)$ is satisfied.
In the supplementary material, we present a similar generic initialization procedure that results in exact recovery of the optimum, 
under further assumptions.
We borrow such procedure in Section \ref{sec:motiv} for our experiments.

\noindent \textbf{$\xi_t(\cdot)$ requirement.} The assumption $\xi_t(\cdot) \gtrsim 0.78$ implies the iterates of \palgo (before the projection step) are retained relatively close to the set $\C$.\footnote{Intuitively, we expect the estimates $\U_t$, before the projection, to be further from $\C$ during the first steps of \palgo; as the number of iterations increases, the sequence of solutions gets closer to $\Uo$ and thus $\xi_t(\cdot) \rightarrow 1$.}
For some cases, this can be easily satisfied by setting the step size small enough, as indicated below; the proof can be found in Section 7.

\begin{corollary}{\label{cor:projFGD_schatten}}
If $\C = \left\{\U \in \R^{n \times r}: ~\|\U\|_F \leq \lambda \right\}$, 
%where $\|\cdot\|$ is a Schatten norm, 
then \palgo inherently satisfies $\tfrac{128}{129} \leq \xi_t(\cdot) \leq 1$, for every $t$. I.e., it guarantees \eqref{conv:eq_01} without assumptions on $\xi_t(\cdot)$.
\end{corollary}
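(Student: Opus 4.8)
The argument is elementary: all we need is the closed form of the Euclidean projection onto a Frobenius ball, together with the observation that the step size $\widehat{\eta}$ is calibrated so that a single gradient step inflates the Frobenius norm by at most a factor $\tfrac{129}{128}$. Recall that for $\C = \{U \in \R^{n \times r} : \|U\|_F \le \lambda\}$ the projection admits the explicit form $\Pi_\C(V) = \min\{1,\, \lambda/\|V\|_F\}\cdot V$; hence, in the notation of \eqref{intro:eq_proj}, $\xi_t(\cdot) = \min\{1,\, \lambda/\|V_t\|_F\}$, where $V_t := \U_t - \widehat{\eta}\,\gradf(\X_t)\cdot\U_t$ denotes the iterate \emph{before} projection. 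The bound $\xi_t(\cdot) \le 1$ is then immediate, and for the lower bound it suffices to show $\|V_t\|_F \le \tfrac{129}{128}\,\lambda$, since this gives $\lambda/\|V_t\|_F \ge \tfrac{128}{129}$.

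By the triangle inequality, $\|V_t\|_F \le \|\U_t\|_F + \widehat{\eta}\,\|\gradf(\X_t)\cdot\U_t\|_F$, and $\|\U_t\|_F \le \lambda$ because $\U_t \in \C$ (which holds for all $t$ by induction, since $\U_0 = \Pi_\C(\widetilde{\U}_0)$ and each subsequent $\U_{t+1}$ is an output of $\Pi_\C$). So the claim reduces to $\widehat{\eta}\,\|\gradf(\X_t)\cdot\U_t\|_F \le \tfrac{\lambda}{128}$, and this is exactly where the definition of the step size enters. Writing $Q_{\U_t}$ for an orthonormal basis of the column space of $\U_t$, we have $\U_t = Q_{\U_t}Q_{\U_t}^\top \U_t$, so $\gradf(\X_t)\cdot\U_t = (\gradf(\X_t)\,Q_{\U_t})(Q_{\U_t}^\top\,\U_t)$; submultiplicativity of the Frobenius norm and invariance of the spectral norm under left-multiplication by an isometry then give
\begin{align*}
\|\gradf(\X_t)\cdot\U_t\|_F &\le \|\gradf(\X_t)\,Q_{\U_t}\|_2 \cdot \|Q_{\U_t}^\top\,\U_t\|_F \\
&= \|Q_{\U_t}Q_{\U_t}^\top\,\gradf(\X_t)\|_2 \cdot \|\U_t\|_F \;\le\; \lambda\,\|Q_{\U_t}Q_{\U_t}^\top\,\gradf(\X_t)\|_2,
\end{align*}
using symmetry of $\gradf(\X_t)$. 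Substituting $\widehat{\eta} = \tfrac{1}{128(L\|\X_t\|_2 + \|Q_{\U_t}Q_{\U_t}^\top\gradf(\X_t)\|_2)}$ and the elementary bound $\tfrac{a}{b+a} \le 1$ for $a,b \ge 0$ yields $\widehat{\eta}\,\|\gradf(\X_t)\cdot\U_t\|_F \le \tfrac{\lambda}{128}$, hence $\|V_t\|_F \le \lambda + \tfrac{\lambda}{128} = \tfrac{129}{128}\lambda$ and $\xi_t(\cdot) \ge \tfrac{128}{129}$, as claimed.

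I do not anticipate any genuine obstacle here; the only point requiring care is the bookkeeping that matches $\|\gradf(\X_t)\cdot\U_t\|_F$ to the spectral-norm quantity $\|Q_{\U_t}Q_{\U_t}^\top\gradf(\X_t)\|_2$ appearing in the denominator of $\widehat{\eta}$ --- this is precisely why the step size is stated with $Q_{\U_t}Q_{\U_t}^\top\gradf(\X_t)$ rather than $\gradf(\X_t)$ --- together with keeping the factor-$2$ convention for the gradient of $f$ in the factored variable consistent so that the numerical constant lands at $128$. Finally, since $\tfrac{128}{129} > 0.78$, the hypothesis $\xi_t(\cdot) \gtrsim 0.78$ of Theorem~\ref{thm:projFGD_guarantees} holds at every iteration with no additional assumption, so the contraction \eqref{conv:eq_01} follows unconditionally for this choice of $\C$.
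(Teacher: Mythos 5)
Your proposal is correct and follows essentially the same route as the paper: a triangle inequality on $\widetilde{\U}_{t+1} = \U_t - \widehat{\eta}\,\gradf(\X_t)\U_t$, the bound $\|\gradf(\X_t)\U_t\|_F \le \|\gradf(\X_t)Q_{\U_t}Q_{\U_t}^\top\|_2\,\|\U_t\|_F$, and the observation that $\widehat{\eta} \le \tfrac{1}{128\,\|Q_{\U_t}Q_{\U_t}^\top\gradf(\X_t)\|_2}$, yielding $\|\widetilde{\U}_{t+1}\|_F \le \tfrac{129}{128}\lambda$ and hence $\xi_t \ge \tfrac{128}{129}$. Your explicit induction that $\U_t \in \C$ and the $\min\{1,\cdot\}$ form of the projection are small clarifications the paper leaves implicit, but the argument is the same.
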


We conjecture that the lower bound on $\xi_t(\cdot)$ could possibly be improved with a different analysis.

\noindent \textbf{Key lemma.} 
The proof of the above theorem primarily depends on the following ``descent" lemma for \palgo. 

\begin{lemma}[Descent lemma]\label{lem:gradU,U-U_r_ bound}
Let $\widetilde{\U}_{t+1} = \U_t - \widehat{\eta} \gradf(\X_t^\top)  \cdot \U_t$.
For $f$ $L$-smooth and $\mu$-strongly convex, and under the same assumptions with Theorem \ref{thm:projFGD_guarantees}, the following inequality holds true:
\begin{align*}
&2 \weta \big \langle \gradf(\U_t\U_t^\top) \cdot \U_t, ~\U_t - \Uo R_{\U_t}^\star \big \rangle +  \|\U_{t+1} - \widetilde{\U}_{t+1}\|_F^2 \geq \nonumber \\ 
&\weta^2\|\gradf(\U_t\U_t^\top) U_t\|_F^2 +  \tfrac{3\weta \mu}{10} \cdot \sigma_r(\Xo) \cdot \dist(\U_t, \Uo)^2.
\end{align*}
\end{lemma}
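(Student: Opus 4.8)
The plan is to separate the left-hand side into the contribution from the unconstrained update (which should already follow from the analysis in \cite{bhojanapalli2015dropping}) and a correction term coming from the projection, and then to show that the correction term is harmless under the hypothesis $\xi_t(\cdot)\gtrsim 0.78$ together with the ``closeness'' assumption $\dist(\U_t,\Uo)\le\rho'\sigma_r(\Uo)$. First I would invoke the unconstrained descent lemma of \cite{bhojanapalli2015dropping}: for $\widetilde{\U}_{t+1}=\U_t-\weta\gradf(\X_t)\U_t$ one has a bound of the form
\begin{align*}
2\weta\big\langle \gradf(\U_t\U_t^\top)\U_t,\ \U_t-\Uo\Rus\big\rangle \ \ge\ \weta^2\|\gradf(\U_t\U_t^\top)\U_t\|_F^2 + c_0\,\weta\,\mu\,\sigma_r(\Xo)\,\dist(\U_t,\Uo)^2,
\end{align*}
for some absolute constant $c_0$ strictly larger than the target $3/10$; this uses $L$-smoothness, $\mu$-strong convexity, the faithfulness assumption (Definition \ref{prelim:def_02}) so that $\Rus$ is the minimizing rotation and $\Uo\Rus\in\E\subseteq\C$, and the step-size choice $\weta$. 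The quantity $\|\U_{t+1}-\widetilde{\U}_{t+1}\|_F^2$ on the left is then pure slack that can only help; so morally the inequality is already true with a better constant, and the whole content is to check that the projection does not destroy the cross-term.

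Here is where the projection enters. Because $\Pi_\C$ is an entrywise scaling, $\U_{t+1}=\xi_t\cdot\widetilde{\U}_{t+1}$ with $\xi_t\le 1$, so $\U_{t+1}-\widetilde{\U}_{t+1}=(\xi_t-1)\widetilde{\U}_{t+1}$ and $\|\U_{t+1}-\widetilde{\U}_{t+1}\|_F^2=(1-\xi_t)^2\|\widetilde{\U}_{t+1}\|_F^2\ge 0$. The only subtlety is that the first-order optimality of the projection gives, for the true minimizer/optimum point $\Uo\Rus\in\C$, the inequality $\langle \widetilde{\U}_{t+1}-\U_{t+1},\ \U_{t+1}-\Uo\Rus\rangle\ge 0$, i.e. $\langle \widetilde{\U}_{t+1}-\U_{t+1},\ \widetilde{\U}_{t+1}-\Uo\Rus\rangle\ge \|\widetilde{\U}_{t+1}-\U_{t+1}\|_F^2$. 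The strategy is to write the target cross-term $\langle \gradf(\X_t)\U_t,\ \U_t-\Uo\Rus\rangle$ by first passing through $\widetilde{\U}_{t+1}$: since $\weta\gradf(\X_t)\U_t=\U_t-\widetilde{\U}_{t+1}$, we get
\begin{align*}
2\weta\big\langle \gradf(\X_t)\U_t,\ \U_t-\Uo\Rus\big\rangle = 2\big\langle \U_t-\widetilde{\U}_{t+1},\ \U_t-\Uo\Rus\big\rangle.
\end{align*}
I would then expand $\U_t-\Uo\Rus=(\U_t-\widetilde{\U}_{t+1})+(\widetilde{\U}_{t+1}-\Uo\Rus)$, so the right side becomes $2\|\U_t-\widetilde{\U}_{t+1}\|_F^2+2\langle \U_t-\widetilde{\U}_{t+1},\ \widetilde{\U}_{t+1}-\Uo\Rus\rangle$, and insert the projection inequality into the second piece. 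The point is that the extra negative terms generated this way are controlled by $\|\widetilde{\U}_{t+1}-\U_{t+1}\|_F^2=(1-\xi_t)^2\|\widetilde{\U}_{t+1}\|_F^2$, which is exactly the term sitting on the left-hand side of the lemma, and by cross terms of the form $(1-\xi_t)\cdot\|\widetilde{\U}_{t+1}\|_F\cdot\dist(\U_t,\Uo)$; bounding $\|\widetilde{\U}_{t+1}\|_F$ in terms of $\|\Uo\|_F$ and $\dist(\U_t,\Uo)$ (using closeness) and using $\xi_t\ge 0.78$ makes these absorbable into the $\tfrac{3\weta\mu}{10}\sigma_r(\Xo)\dist(\U_t,\Uo)^2$ budget, since the unconstrained constant $c_0$ is strictly larger than $3/10$.

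The main obstacle I anticipate is the bookkeeping of constants in this absorption step: one must show that the loss incurred when replacing the unconstrained cross-term by the projected update — i.e. the difference between $\langle\gradf(\X_t)\U_t,\U_t-\Uo\Rus\rangle$ and what the projection inequality actually controls — is at most $(c_0-\tfrac{3}{10})\weta\mu\sigma_r(\Xo)\dist(\U_t,\Uo)^2$ plus the available slack $\|\U_{t+1}-\widetilde{\U}_{t+1}\|_F^2$. This requires a careful choice of how aggressively to apply Young's inequality to the cross terms $(1-\xi_t)\|\widetilde{\U}_{t+1}\|_F\dist(\U_t,\Uo)$, and a clean upper bound on $\|\widetilde{\U}_{t+1}\|_F$ that does not blow up — here the step-size smallness (the $1/128$ in $\weta$) and the bound $\dist(\U_t,\Uo)\le\rho'\sigma_r(\Uo)$ are what save us, since they force $\widetilde{\U}_{t+1}$ to stay within a bounded multiple of $\|\Uo\|_F$. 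Everything else is the standard strong-convexity/smoothness manipulation already present in the unconstrained proof, and I would cite \cite{bhojanapalli2015dropping} for those pieces rather than reprove them. The threshold $0.78$ is presumably exactly what comes out of balancing these constants, consistent with the sharper $128/129$ obtained in Corollary \ref{cor:projFGD_schatten} for the Frobenius-ball case.
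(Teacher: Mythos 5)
There is a genuine gap at the very first step of your plan. You propose to invoke the unconstrained descent lemma of \cite{bhojanapalli2015dropping} as a black box, asserting that
$2\weta\langle \gradf(\X_t)\U_t,\ \U_t-\Uo\Rus\rangle \ \ge\ \weta^2\|\gradf(\X_t)\U_t\|_F^2 + c_0\,\weta\mu\,\sigma_r(\Xo)\,\dist(\U_t,\Uo)^2$
with $c_0>3/10$, and then treat $\|\U_{t+1}-\widetilde\U_{t+1}\|_F^2$ as ``pure slack.'' But this is precisely the inequality that the paper shows \emph{fails} in the constrained setting (Section 3.3): its proof in \cite{bhojanapalli2015dropping} relies on $\gradf(\Xo)\Uo=0$, which does not hold at a constrained optimum (e.g.\ noisy matrix sensing, where $\gradf(\Xo)=-2\mathcal{A}^*(w)\neq 0$). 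Concretely, at $\U_t=\Uo$ the cross term on the left vanishes while $\weta^2\|\gradf(\Xo)\Uo\|_F^2>0$ remains on the right, so the unconstrained bound cannot be ``morally already true with a better constant.'' The actual proof must replace Lemma 6.3 of \cite{bhojanapalli2015dropping} with a new bound on $\langle\gradf(\X_t),\Delta\Delta^\top\rangle$ (where $\Delta=\U_t-\Uo\Rus$) that does not assume a vanishing gradient; this is the content of Lemma \ref{lem:DD_bound_sc}, obtained via the decomposition $\langle\gradf(\X_t)\U_t,\U_t-\Uo\Rus\rangle=\tfrac12\langle\gradf(\X_t),\X_t-\Xo\rangle+\tfrac12\langle\gradf(\X_t),\Delta\Delta^\top\rangle$, and your proposal has no substitute for it.

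The second problem is your treatment of the projection. The term $\|\U_{t+1}-\widetilde\U_{t+1}\|_F^2$ is not slack: the smoothness/optimality chain $f(\X_t)\ge f(\X_{t+1})-\langle\gradf(\X_t),\X_{t+1}-\X_t\rangle-\tfrac{L}{2}\|\X_{t+1}-\X_t\|_F^2\ge f(\Xo)-\cdots$ must be run with the \emph{feasible} iterate $\X_{t+1}=\xi^2(\X_t-\weta\gradf(\X_t)\X_t\Lambda_t-\weta\Lambda_t^\top\X_t\gradf(\X_t))$, and the scaling by $\xi^2$ produces a term $(1-\xi^2)\,\weta\langle\gradf(\X_t)\U_t,\U_t\rangle$ of indefinite sign. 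The explicit expansion $\|\U_{t+1}-\widetilde\U_{t+1}\|_F^2=(1-\xi)^2\|\U_t\|_F^2+(1-\xi)^2\weta^2\|\gradf(\X_t)\U_t\|_F^2-2(1-\xi)^2\weta\langle\gradf(\X_t)\U_t,\U_t\rangle$ is needed to combine with that indefinite term into a complete square plus a nonnegative residual, and this is exactly where the threshold $\xi\gtrsim 0.78$ is used. Your variational inequality $\langle\widetilde\U_{t+1}-\U_{t+1},\U_{t+1}-\Uo\Rus\rangle\ge 0$ is the tool used in the proof of Theorem \ref{thm:projFGD_guarantees} (to control $\langle\U_{t+1}-\widetilde\U_{t+1},\widetilde\U_{t+1}-\Uo\Rus\rangle$), not in the proof of this lemma, and your proposed absorption of cross terms of the form $(1-\xi_t)\|\widetilde\U_{t+1}\|_F\,\dist(\U_t,\Uo)$ into the $\dist^2$ budget has no analogue in a working argument because the budget surplus $c_0-\tfrac{3}{10}$ you rely on does not exist.
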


\subsection{Main differences with \cite{bhojanapalli2015dropping}}
In this subsection, we highlight the main differences with the analysis of \cite{bhojanapalli2015dropping}.

As we already mentioned, the proof in \cite{bhojanapalli2015dropping} does not handle constraints. 
In particular, one key factor is the gradient at an optimal point does not vanish and thus the following descent lemma bound (Lemma 6.1 in \cite{bhojanapalli2015dropping}) does not hold:
\begin{align}
&2 \weta \big \langle \gradf(\U_t\U_t^\top) \cdot \U_t, ~\U_t - \Uo R_{\U_t}^\star \big \rangle \geq \nonumber \\ &\tfrac{4\weta^2}{3}\|\gradf(\U_t\U_t^\top) U_t\|_F^2 +  \tfrac{3\weta \mu}{20} \cdot \sigma_r(\Xo) \cdot \dist(\U_t, \Uo)^2. \label{eq:FGD_descent_eq}
\end{align}
%However, their result and its analysis does not directly apply in our case: their techniques are oblivious of any convex constraints on the $\X$-space (or the $\U$-space).
To see this, in the unconstrained case, if $\U_t \equiv \Uo$ (up to some rotation), the following holds for \cite{bhojanapalli2015dropping}
\begin{small}
\begin{align*}
&0 := 2 \weta \big \langle \gradf(\Xo) \cdot \Uo, ~\Uo - \Uo R_{\U_t}^\star \big \rangle \geq \tfrac{4\weta^2}{3}\|\gradf(\Xo) \Uo\|_F^2 \\ &+  \tfrac{3\weta \mu}{20} \cdot \sigma_r(\Xo) \cdot \dist(\Uo, \Uo)^2 =: 0,
\end{align*}
\end{small} 
since at the optimum we have $\gradf(\Xo) \Uo = 0.$ 
When the latter does not hold, this descent lemma  does not hold. 
As a simple example where this happens, consider the noisy matrix sensing setting: $y = \mathcal{A}(\X^\star) + w$, where $w$ is a non-negligible noise term. 
Then, for objective $f(X) = \|y - \mathcal{A}(X)\|_2^2$, observe that $$\nabla f(X^\star) = -2\mathcal{A}^*\left(y - \mathcal{A}(\X^\star)\right) = -2\mathcal{A}^*\left(w\right) \neq 0,$$ where $\mathcal{A}^*$ is the adjoint operator for $\mathcal{A}$.
Finally, in \cite{bhojanapalli2015dropping}, Lemma 6.3 (pp.23) assumes zero gradient at the optimum, while our Lemma \ref{lem:DD_bound_sc} (supp.material) follows a different approach. 
\section{Applications}{\label{sec:motiv}}

We present two characteristic applications. % that can be expressed in the form \eqref{intro:eq_00}.
For each application, we define the problem, enumerate state-of-the-art algorithms and provide numerical results.
We refer the reader to Section 6 for additional experiments. 

\subsection{Quantum state tomography}
Building on Aaronson's work on quantum state tomography (QST) \cite{aaronson2007learnability}, 
we are interested in learning the (almost) \emph{pure}\footnote{Purity is a structural property of the density matrix: A quantum systems is \emph{pure} if its density matrix is rank one and, \emph{almost pure} if it can be well-approximated by a low rank matrix.} $q$-bit state of a quantum system --known as the density matrix-- via a limited set of measurements. 
In math terms, the problem can be cast as follows. 
Let us define the \emph{density matrix} $\X^{\star} \in \mathbb{C}^{n\times n}$ of a $q$-bit quantum system as an unknown Hermitian, positive semi-definite matrix that satisfies $\text{rank}(\X^{\star}) = r$ and is normalized as $\trace(\X^{\star}) = 1$ \cite{gross2010quantum}; here, $n = 2^q$. 
Our task is to recover $\X^\star$ from a set of QST measurements $\obs \in \R^m,~m \ll n^2,$ that satisfy $\obs = \linmap(\X^{\star}) + \eta$.
Here, $(\linmap(\X^{\star}))_i = \trace(E_i \X^{\star})$ and $\eta_i$ can be modeled as independent, zero-mean normal variables. 
The operators $E_i \in  \R^{n \times n}$ are typically the tensor product of the $2\times 2$ Pauli matrices\footnote{\cite{liu2011universal} showed that, for almost all such tensor constructions --of $m = O(r n \log^c n), ~c > 0,$ Pauli measurements-- satisfy the so-called rank-$r$ restricted isometry property (RIP) for all $\X \in \left\{\X~:~\X \succeq 0, \text{rank}(\X) \leq r, \|\X\|_*\le \sqrt{r} \|\X\|_F\right\}$: 
\begin{equation}\label{eq: RIP-Pauli}
\left(1-\delta_r\right)\|\X\|_F^2 \leq \|\linmap(\X)\|_F^2 \leq \left(1+\delta_r\right)\|\X\|_F^2,
\end{equation}
where $\|\cdot\|_*$ is the nuclear norm (i.e., the sum of singular values), which reduces to $\trace(\X)$ since $\X \succeq 0$.} %This key observation enables us to leverage the recent theoretical and algorithmic advances in low-rank matrix recovery from a few affine measurements.} 
\cite{liu2011universal}. 

The above lead to the following \emph{non-convex} problem formulation\footnote{As pointed out in \cite{kalev2015quantum}, it is in fact advantageous in practice to choose $\trace(\X) \neq 1$, as it improves the robustness to noise. Here, we force $\trace(\X) \leq 1$.}:
\begin{equation}{\label{eq: orig QT}}
\begin{aligned}
	& \underset{\X \succeq 0}{\text{minimize}}
	& &  \|\linmap(\X) - \obs\|_F^2 \quad \quad \text{subject to} \quad  \text{rank}(\X)= r, \;\trace(\X)\leq1.
\end{aligned}
\end{equation} 

%Experimental quantum physicists attempt to create pure states, and can create many nearly identical copies of a single pure state. When measuring these states, due to non-idealities (drift over time, depolarizing channels) the system appears as a mixed state, and hence, the entire state matrix must be reconstructed. This is done by performing measurements with a complete basis.
%In math, the QT measurements

%Thus, \textcolor{magenta}{Find comments why we need fast algorithms}.
%concerted effort to develop techniques  for density matrix recovery from limited measurements.
%
%The rank constraint makes the problem non-convex, so the approach in \cite{QuantumTomoPRL} uses the following relaxation:
%%For reference, we state the formulation that has been previously used (e.g.~in~\cite{QuantumTomoPRL}):
%This approach is \footnote{We note that this normalization is not the optimal \emph{projection} on the constraint space as defined by the trace equation.} the final estimator so that it has a trace of 1. Within the algorithm, the constant $\lambda$ depends on the presumed noise level. It is also possible to remove the $\lambda \|\X\|_*$ term and instead add the  $\trace(\X) \le 1$ which does not significantly change the estimator.
\begin{figure*}[t!]
\centering
\includegraphics[width=0.33\textwidth]{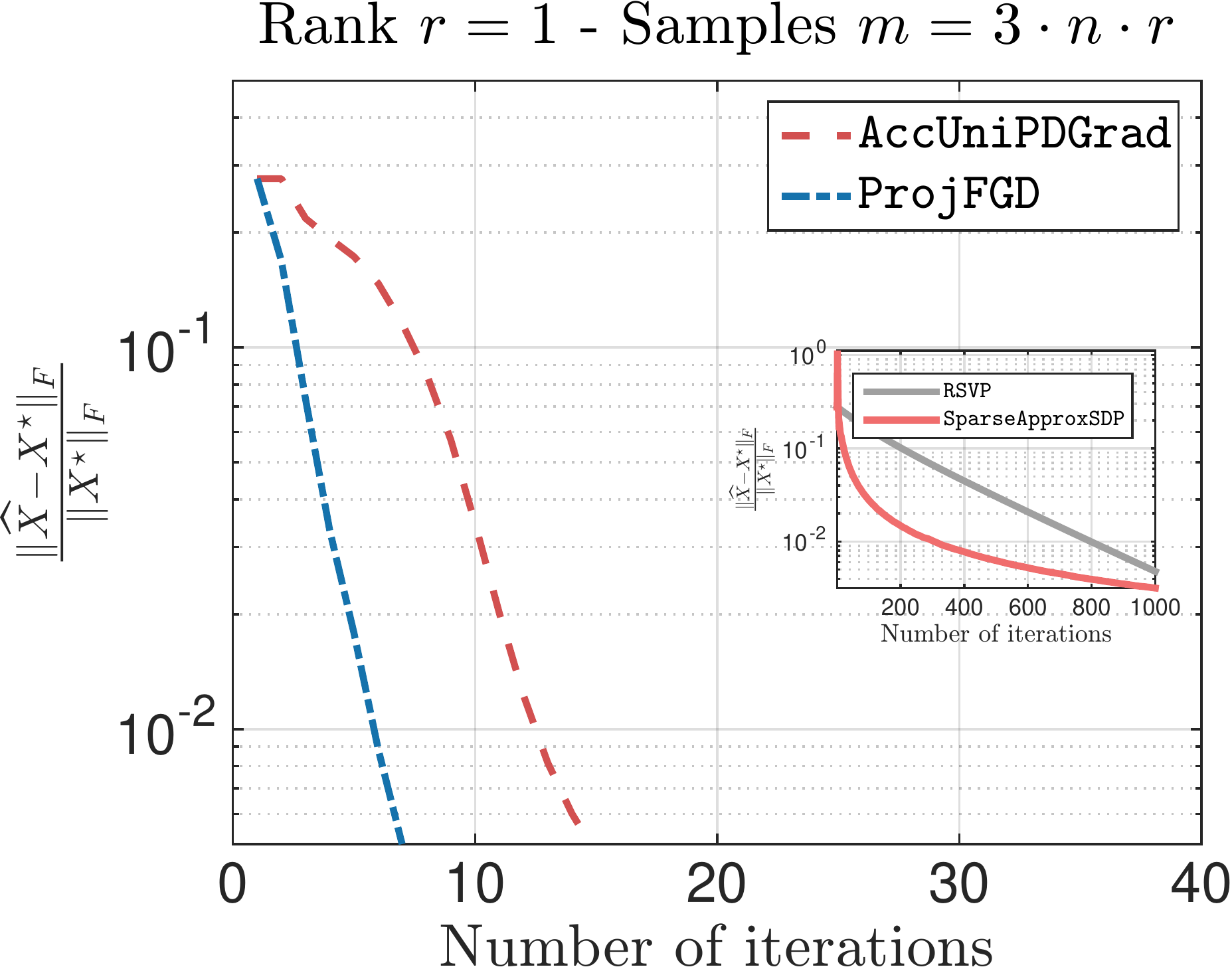} \hspace{-0.2cm}
\includegraphics[width=0.33\textwidth]{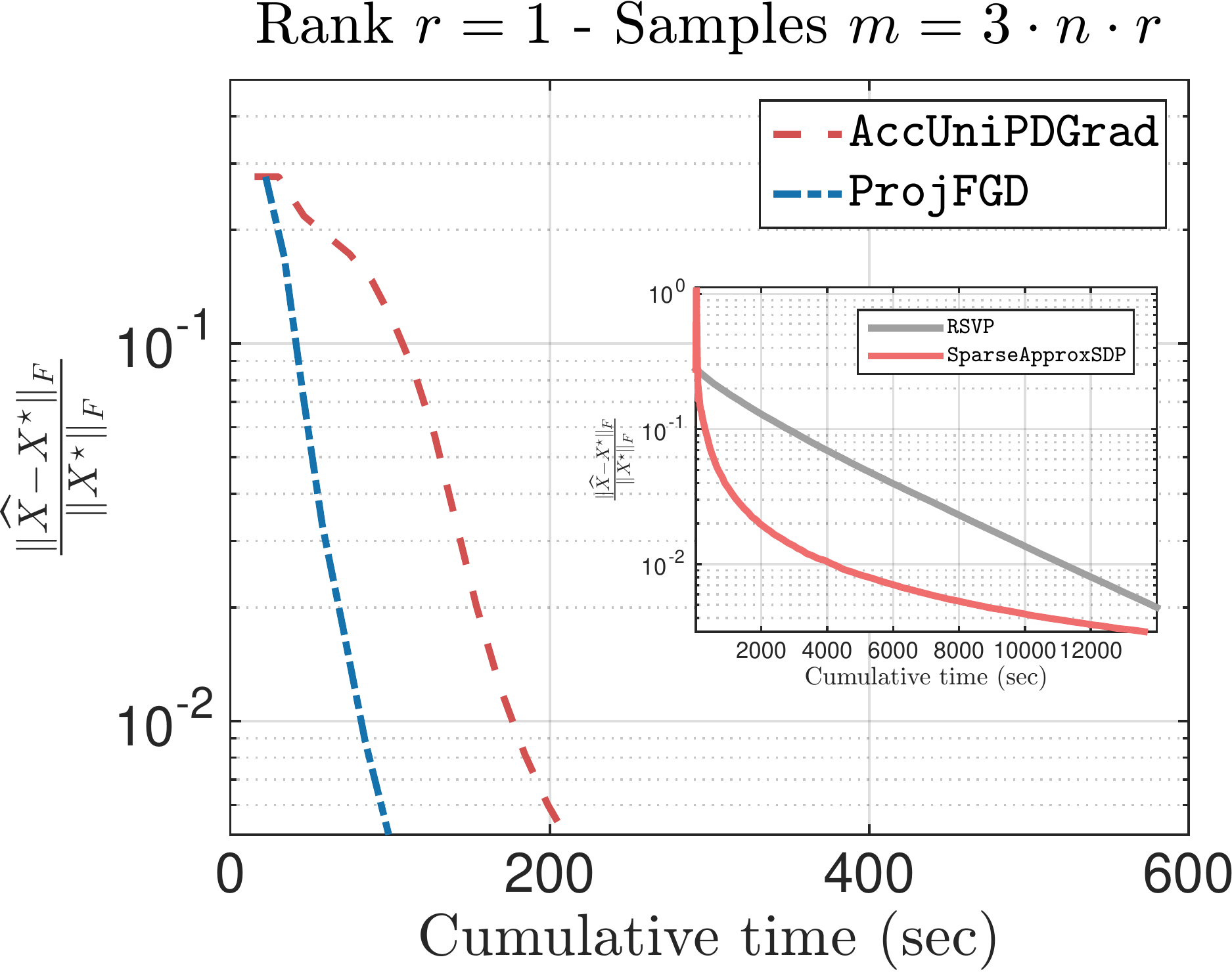} \hspace{-0.2cm}
\includegraphics[width=0.33\textwidth]{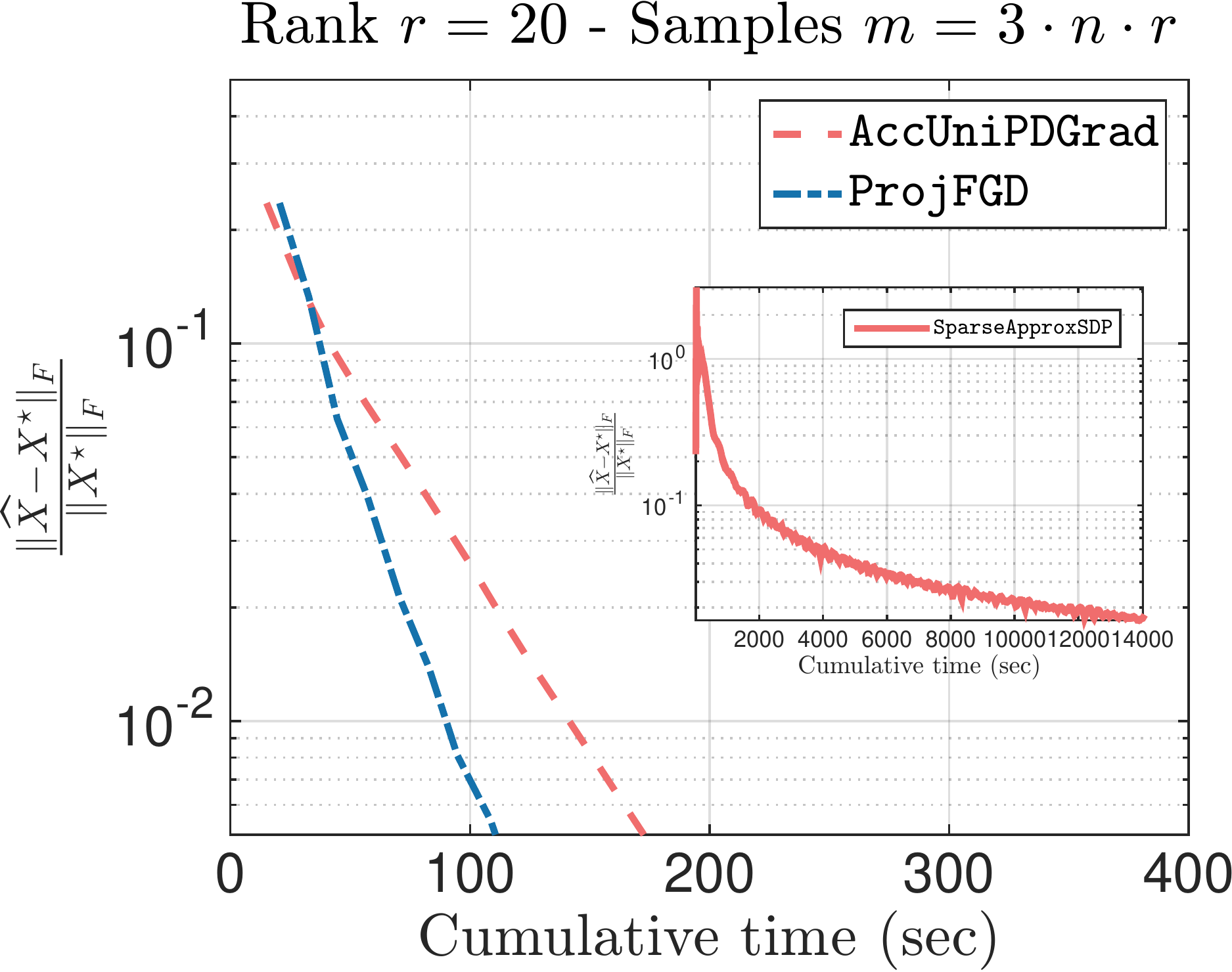} %\hspace{-0.3cm}
\caption{\small{\textbf{Left and middle panels}: Convergence performance of algorithms under comparison w.r.t. $\tfrac{\|\widehat{X} - \X^\star\|_F}{\|\X^\star\|_F}$ vs. $(i)$ the total number of iterations (left) and $(ii)$ the total execution time. Both cases correspond to $C_{\rm sam} = 3$, $r = 1$ (pure setting) and $q = 12$ (\textit{i.e.}, $n = 4096$). \textbf{Right panel}: Almost pure state ($r = 20$). Here, $C_{\rm sam} = 3$. }
}
\label{fig:exp1}
\end{figure*}

\noindent \textbf{State-of-the-art approaches.}
One of the first provable algorithmic solutions for the QST problem was through convexification \cite{recht2010guaranteed}:
this includes nuclear norm minimization approaches \cite{gross2010quantum}, as well as proximal variants, as follows:
\begin{equation} \label{eq:conventional}
\begin{aligned}
	& \underset{\X \succeq 0}{\text{minimize}}
	& & \|\linmap(\X) - \obs\|_F^2 + \lambda \|\X\|_*.
\end{aligned}
\end{equation}
Here, $\|\cdot\|_*$ reduces to $\trace(\X)$ since $\X \succeq 0$. 
This approach is considered in the seminal work \cite{gross2010quantum} and is both tractable and amenable to theoretical analysis. 
The approach does not include any constraint on $\X$.\footnote{\emph{E.g.}, in order to take the trace constraint $\trace(\X) = 1$ into account, either $\lambda$ should be precisely tuned to satisfy this constraint or the final estimator is normalized heuristically to satisfy this constraint \cite{flammia2012quantum}.}
As one of the most recent algorithms, we mention the work of \cite{yurtsever2015universal} where a universal primal-dual convex framework is presented, with the QST problem as application.

From a non-convex perspective, \cite{hazan2008sparse} presents \texttt{SparseApproxSDP} algorithm that solves \eqref{eq: orig QT},
%\begin{equation}{\label{intro:eq_02}}
%\begin{aligned}
%	& \underset{\X \in \R^{n \times n}}{\text{minimize}}
%	& & f(\X) \quad \quad \text{subject to} \quad \X \succeq 0, \trace(\X) = 1,
%\end{aligned} 
%\end{equation} 
when the objective is a generic gradient Lipschitz smooth function.
\texttt{SparseApproxSDP} solves \eqref{eq: orig QT} by updating a putative low-rank solution with rank-1 refinements, coming from the gradient. 
This way, \texttt{SparseApproxSDP} avoids computationally expensive operations per iteration, such as full SVDs.
In theory, at the $r$-th iteration, \texttt{SparseApproxSDP}
is guaranteed to compute a $\tfrac{1}{r}$-approximate solution, with rank at most $r$, \textit{i.e.}, achieves a sublinear 
$O\left(\tfrac{1}{\varepsilon}\right)$ convergence rate. 
However, depending on $\varepsilon$, \texttt{SparseApproxSDP} might not return a low rank solution. 
Finally, \cite{becker2013randomized} propose Randomized Singular Value Projection (\texttt{RSVP}), a projected gradient descent algorithm for \eqref{eq: orig QT}, which merges gradient calculations with truncated SVDs via randomized approximations for computational efficiency. 

Since the size of these problems grows exponentially with the number of quantum bits,
%\footnote{In particular, the number of free parameters usually grows as some power of the ambient dimension, which, in turn, grows exponentially with the number of subsystems \cite{kalev2015quantum}.}, 
designing fast algorithms that minimize the computational effort required for \eqref{eq: orig QT} or \eqref{eq:conventional} is mandatory.

\noindent \textbf{Numerical results.} 
In this case, the factorized version of \eqref{eq: orig QT} can be described as:
\begin{equation}{\label{eq:fac_QT}}
\begin{aligned}
	& \underset{\U \in \R^{n \times r}}{\text{minimize}}
	& &  \|\linmap(\U\U^\top) - \obs\|_F^2 \quad \quad \text{subject to} \quad  \|\U\|_F^2 \leq 1.
\end{aligned}
\end{equation} 
We compare $\palgo$ with the algorithms described above; as a convex representative implementation, we use the efficient scheme of \cite{yurtsever2015universal}. 
We consider two settings: $\X^\star \in \R^{n \times n}$ is $(i)$ a pure state (\emph{i.e.}, $\text{rank}(\X^\star) = 1$) and, 
$(ii)$ an almost pure state (\emph{i.e.}, $\text{rank}(\X^\star) = r$, for some $r > 1$). 
For all cases, $\trace\left(\X^\star\right) = 1$ and $\obs = \linmap(\X^\star) + \eta$, with $\|\eta\| = 10^{-3}$.
%For simplicity, we consider noiseless settings, where 
We use Pauli operators for $\linmap$, as described in \cite{liu2011universal}. 
The number of measurements $m$ satisfy $m = C_{\rm sam} \cdot r \cdot n \log(n) $, for various values of $C_{\rm sam}$. % and $\eta$ is modeled as independent Gaussian noise with various standard deviations. 

%\emph{Implementation details.} 
For all algorithms, we used the correct rank input and trace constraint parameter. 
All methods that require an SVD routine use \texttt{lansvd}$(\cdot)$ from the \texttt{PROPACK} software package. % \cite{larsen2004propack}. 
Experiments and algorithms are implemented on \textsc{Matlab} environment; we used non-specialized and non-\texttt{mex}ified code parts for all algorithms. 
For initialization, we use the same starting point for all algorithms, which is either specific (Section 8) or random. 
We set the tolerance parameter to $\texttt{tol} := 5\cdot 10^{-6}$. %(for stopping condition) 

\begin{wrapfigure}{r}{0.45\textwidth} 
\begin{minipage}{0.45\textwidth}
   \vspace{0em}
%	\begin{table}[!ht]
		\centering
		\ra{1.3}
		\begin{scriptsize}
		\rowcolors{2}{white}{black!05!white}
		\begin{tabular}{c c c c c} \toprule
		& \phantom{a} & \multicolumn{3}{c}{Setting: $q = 13$, $C_{\rm sam} = 3$.} \\
		\cmidrule {3-5}
		Algorithm & \phantom{a} & $\tfrac{\|\widehat{\X} - \X^\star\|_F}{\|\X^\star\|_F}$ & \phantom{a}  & Time \\
		\cmidrule{1-1} \cmidrule {3-3} \cmidrule{5-5} 
		\texttt{AccUniPDGrad} & & 7.4151e-02 & & 2354.4552 \\ 
		\texttt{ProjFGD} & & 8.6309e-03 & & 1214.0654 \\ 
		\bottomrule
		\end{tabular}
		\end{scriptsize}
		%\caption{Summary of comparison results for reconstruction and efficiency. As a stopping criterion, we used $\sfrac{\|\X_{i+1} - \X_{i}\|_2}{\|\X_{i+1}\|_2} \leq 5 \cdot 10^{-6}$, where $\X_i$ is the estimate at the $i$-th iteration. Time reported is in seconds.} \label{tbl:small_Comp}
		\caption{\small{Comparison results for reconstruction and efficiency. Time reported is in seconds.}} \label{tbl:small_Comp}
		%\end{table}
	\end{minipage} \vspace{0em}
\end{wrapfigure}
\emph{Convergence plots.} 
Figure \ref{fig:exp1} (two-leftmost plots) illustrates the iteration and timing complexities of each algorithm under comparison, for a pure state density recovery setting ($r = 1$). 
Here, $q = 12$ which corresponds to a $\tfrac{n(n+1)}{2} = 8,390,656$ dimensional problem; moreover, we assume $C_{\rm sam} = 3$ and thus the number of measurements are $m = 12,288$. 
For initialization, we use the proposed initialization in Section 8 for all algorithms: we compute $-\mathcal{A}^*(y)$, extract factor $U_0$ as the best-$r$ PSD approximation of $-\mathcal{A}^*(y)$, and project $\U_0$ onto $\C$. %\footnote{\cite{zheng2015convergent, tu2015low} prove that such initialization provides sufficient conditions that turn the local guarantees of our algorithm into global guarantees.}

It is apparent that \palgo converges faster to a vicinity of $\X^\star$, compared to the rest of the algorithms; observe also the sublinear rate of \texttt{SparseApproxSDP} in the inner plots, as reported in \cite{hazan2008sparse}.

Figure \ref{tbl:small_Comp} contains recovery error and execution time results for the case $q = 13$ ($n = 8096$); in this case, we solve a $\tfrac{n(n+1)}{2} = 33,558,528$ dimensional problem. 
For this case, \texttt{RSVP} and \texttt{SparseApproxSDP} algorithms were excluded from the comparison. 
Appendix provides extensive results, where similar performance is observed for other values of $q$, $C_{\rm sam}$.

Figure \ref{fig:exp1} (rightmost plot) considers the more general case where $r = 20$ (almost pure state density) and $q = 12$. 
In this case, $m = 245,760$ for $C_{\rm sam} = 3$. 
As $r$ increases, algorithms that utilize an SVD routine spend more CPU time on singular value/vector calculations. 
Certainly, the same applies for matrix-matrix multiplications; however, in the latter case, the complexity scale is milder than that of the SVD calculations. 
Further metadata are also provided in Figure \ref{tbl:small_Comp2}. 

\begin{wrapfigure}{l}{0.55\textwidth} 
\begin{minipage}{0.55\textwidth}
   \vspace{0em}
%\begin{table}[!ht]
	\centering
	\ra{1.3}
	\begin{scriptsize}
	\rowcolors{2}{white}{black!05!white}
	\begin{tabular}{c c c c c} \toprule
	& \multicolumn{2}{c}{Setting: $r = 5$.}  & \multicolumn{2}{c}{Setting: $r = 20$.} \\
	\cmidrule {2-3} \cmidrule{4-5}
	Algorithm & $\tfrac{\|\widehat{\X} - \X^\star\|_F}{\|\X^\star\|_F}$ & Time & $\tfrac{\|\widehat{\X} - \X^\star\|_F}{\|\X^\star\|_F}$ & Time \\
	\midrule
	\texttt{RSVP} & 5.15e-02 &  0.78 & 1.71e-02 & 0.38  \\ 
	\texttt{SparseApproxSDP} & 3.17e-02 & 3.74 & 5.49e-02 & 4.38 \\ 
	\texttt{AccUniPDGrad} & 2.01e-02 & 0.36 & 1.54e-02 & 0.33\\ 
	\texttt{ProjFGD} & 1.20e-02 & 0.06 & 7.12e-03 & 0.04 \\ 
	\bottomrule
	\end{tabular}
	\end{scriptsize}
	\caption{\small{Results for reconstruction and efficiency. Time reported is in seconds. For all cases, $C_{\rm sam} = 3$ and $q = 10$.}} \label{tbl:small_Comp2}
\end{minipage}  \vspace{0em}
\end{wrapfigure}
For completeness, in the appendix we also provide results (for the noiseless case) that illustrate the effect of random initialization:
Similar to above, \palgo shows competitive behavior by finding a better solution faster, irrespective of initialization point. %More results with random initializations are given in the Appendix. 

\emph{Timing evaluation (total and per iteration)}. 
Figure \ref{fig:exp2} highlights the efficiency of our algorithm in terms of time complexity, for various problem configurations.
Our algorithm has fairly low per iteration complexity (where the most expensive operation for this problem is matrix-matrix and matrix-vector multiplications). 
Since our algorithm shows also fast convergence in terms of \# of iterations, this overall results into faster convergence towards a good approximation of $\X^\star$, even as the dimension increases. 
Figure \ref{fig:exp2}(right) shows how the total execution time scales with parameter $r$. %\cite{roulet2015renegar}. %, where, as rank decreases, the problem becomes computationally harder \cite{roulet2015renegar}. 

\begin{figure}[t!]
\centering
\includegraphics[width=0.45\columnwidth]{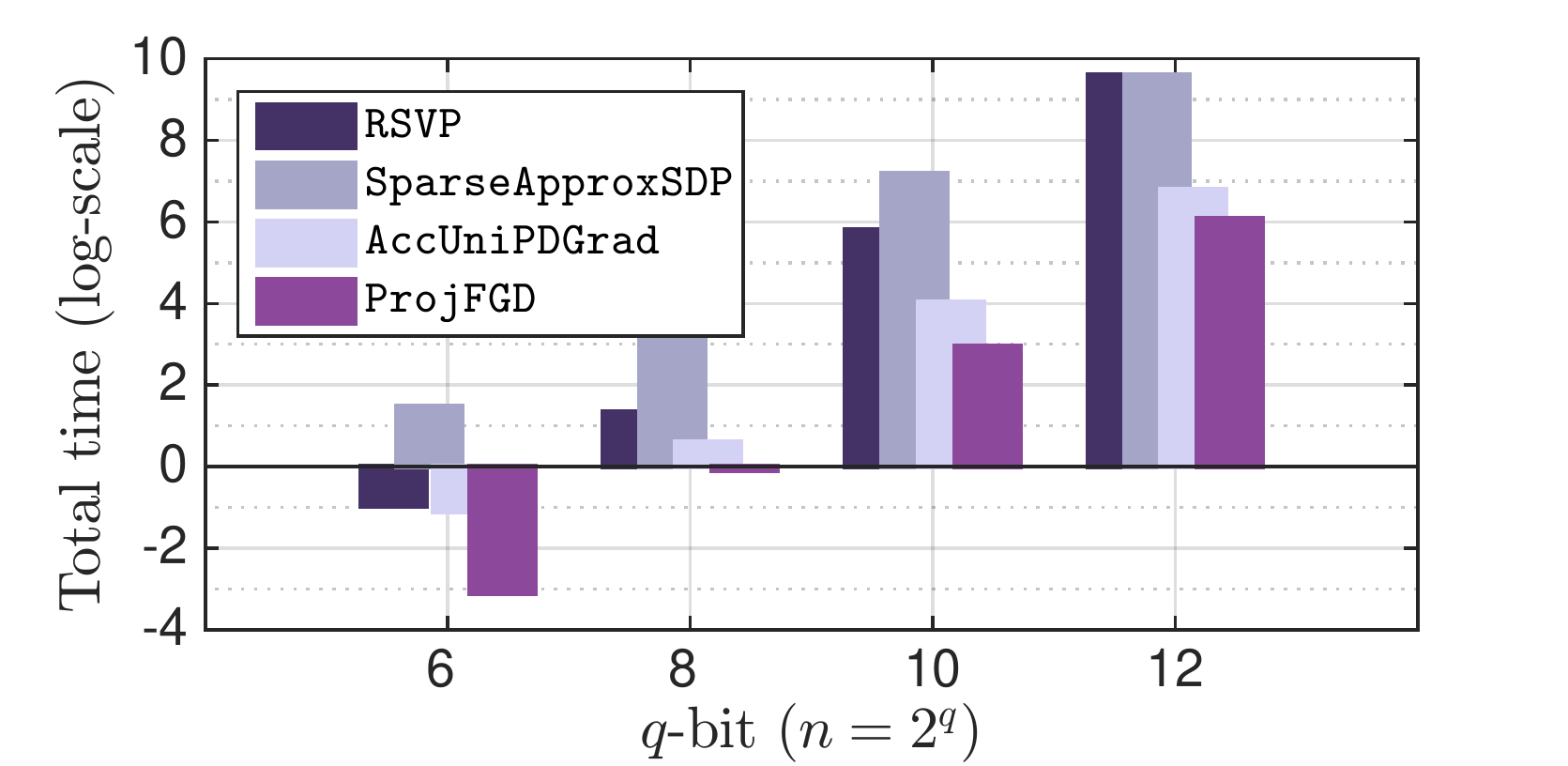} \hspace{-0.8cm}
\includegraphics[width=0.45\columnwidth]{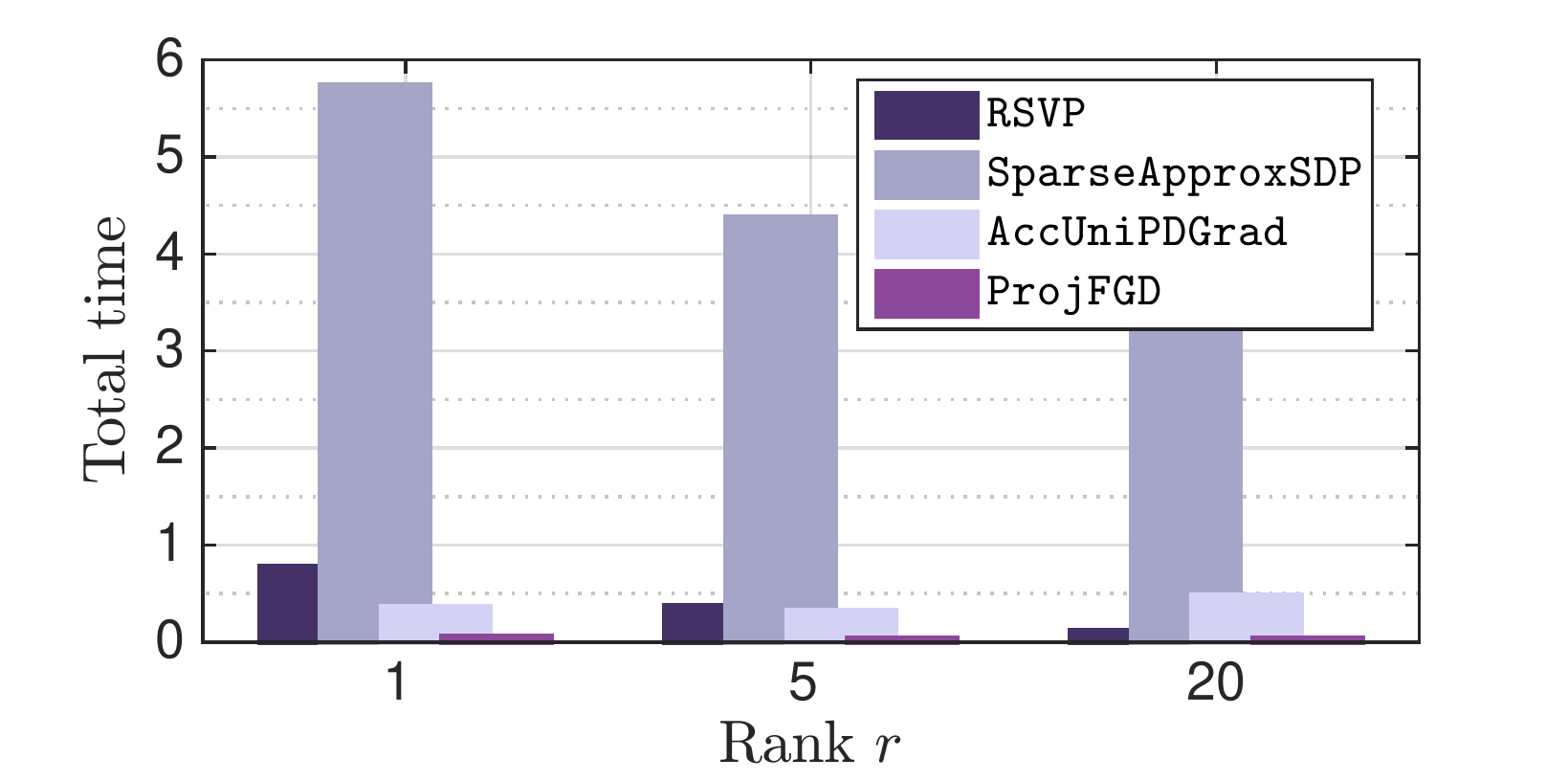}
\caption{\small{Timing bar plot: $y$-axis shows total execution time \textbf{(log-scale)} and $x$-axis corresponds to different $q$ values. Left panel corresponds to $r = 1$ and $C_{\rm sam} = 6$; right panel corresponds to $q = 10$ and $C_{\rm sam} = 6$.}
}
\label{fig:exp2}
\end{figure}

%\begin{figure}[t!]
%\centering
%\includegraphics[width=1\columnwidth]{./figs/QST_timeTotal_C_10_init_ours}
%\caption{Timing bar plot: $y$-axis shows total execution time (linear scale) and $x$-axis corresponds to different rank $r$ values. All settings correspond to $q = 10$ and $C_{\rm sam} = 6$.
%}
%\label{fig:exp3}
%\end{figure}

\emph{Overall performance}. 
\palgo shows a competitive performance, as compared to the state-of-the-art algorithms; we would like to emphasize also that projected gradient descent schemes, such as \cite{becker2013randomized}, are also efficient in small- to medium-sized problems, due to their
fast convergence rate. 
Moreover, convex approaches might show better sampling complexity performance (\emph{i.e.}, as $C_{\rm sam}$ decreases).
For more experimental results (under noiseless settings), we defer the reader to Appendix, due to space restrictions.

\subsection{Sparse phase retrieval}{\label{sec:SPT}}
Consider the sparse phase retrieval (SPR) problem \cite{candes2013phaselift, candes2015phase, li2013sparse}: 
we are interested in recovering a (sparse) unknown vector $x^\star \in \mathbb{C}^n$, via its \emph{lifted}, rank-1 representation $\X^\star = x^\star x^{\star H} \in \mathbb{C}^{n \times n}$, from a set of quadratic measurements:
\begin{align*}
y_i = \trace(a_i^H \X a_i) + \eta_i, \quad i = 1, \dots, m.
\end{align*}
Here, $a_i \in \mathbb{C}^n$ are given measurement vectors (often Fourier vectors) and $\eta_i$ is an additive error term.
The above description leads to the following non-convex optimization criterion:
\begin{equation}\label{eq:SPT}
\begin{aligned}
	& \underset{\X \succeq 0}{\text{minimize}}
	& & \|\linmap(\X) - \obs\|_F^2, \\
	& \text{subject to} 
	& & \text{rank}(\X) = 1, \left(\|\X\|_1 \leq \lambda\right).
\end{aligned}
\end{equation} Here, $\linmap: \mathbb{C}^{n \times n} \rightarrow \mathbb{C}^{m}$ such that $(\linmap(\X))_i = \trace(\Phi_i \X)$ where $\Phi_i = a_i a_i^H$. 
In the case where we know $x^\star$ is sparse \cite{li2013sparse, cprl2012Ohlsson}, we can further constrain the lifted variable $\X$ to satisfy $\|\X\|_1 \leq \lambda$, $\lambda > 0$; 
this way we implicitly also restrict the number of non-zeros in its factors and can recover $\Xo$ from a \emph{limited} set of measurements.

\paragraph{Transforming \eqref{eq:SPT} into a factored formulation.} 
%Following similar motions with the QST problem, consider the sparse phase retrieval formulation in \eqref{eq:SPT}. 
Given the rule $\X = uu^H$, where $u \in \mathbb{C}^{n}$, one can consider the factored problem re-formulation:
\begin{equation}{\label{transform2:eq_01}}
\begin{aligned}
	& \underset{u \in \mathbb{C}^n}{\text{{\rm minimize}}}
	& & \|\linmap(uu^H) - \obs\|_F^2 \\
	& \text{{\rm subject to}}
	& &\|u\|_1 \leq \lambda'.
\end{aligned}
\end{equation} 
for some $\lambda' > 0$.

\begin{remark}{\label{rem:transform2_00}}
\textit{In contrast to the QST problem, where there is a continuous map between the constraints in the original $\X$ space and in the factored $\U$ space (\emph{i.e.}, $\trace(\X) \leq \lambda \Leftrightarrow \|\U\|_F^2 \leq \lambda$), this is not true for the SPR problem: 
As we state in the main text, points $\U \in \C$ can result into $\X \not\in \C'$, depending on the selection of $\lambda, \lambda'$ values (i.e., $\C$ is unfaithful). 
In this case, the convergence theorem \ref{thm:projFGD_guarantees} in the $\U$ factor space only proves convergence to a point $\Uo$ in the factored space, which is not necessarily related to the optimal point $\Xo$ in the original space.
However, as we show next, in practice, even in this case \palgo returns a competitive (if not better) solution, compared to state-of-the-art approaches.}
\end{remark} 

\begin{figure*}[t!]
\centering
\includegraphics[width=0.4\textwidth]{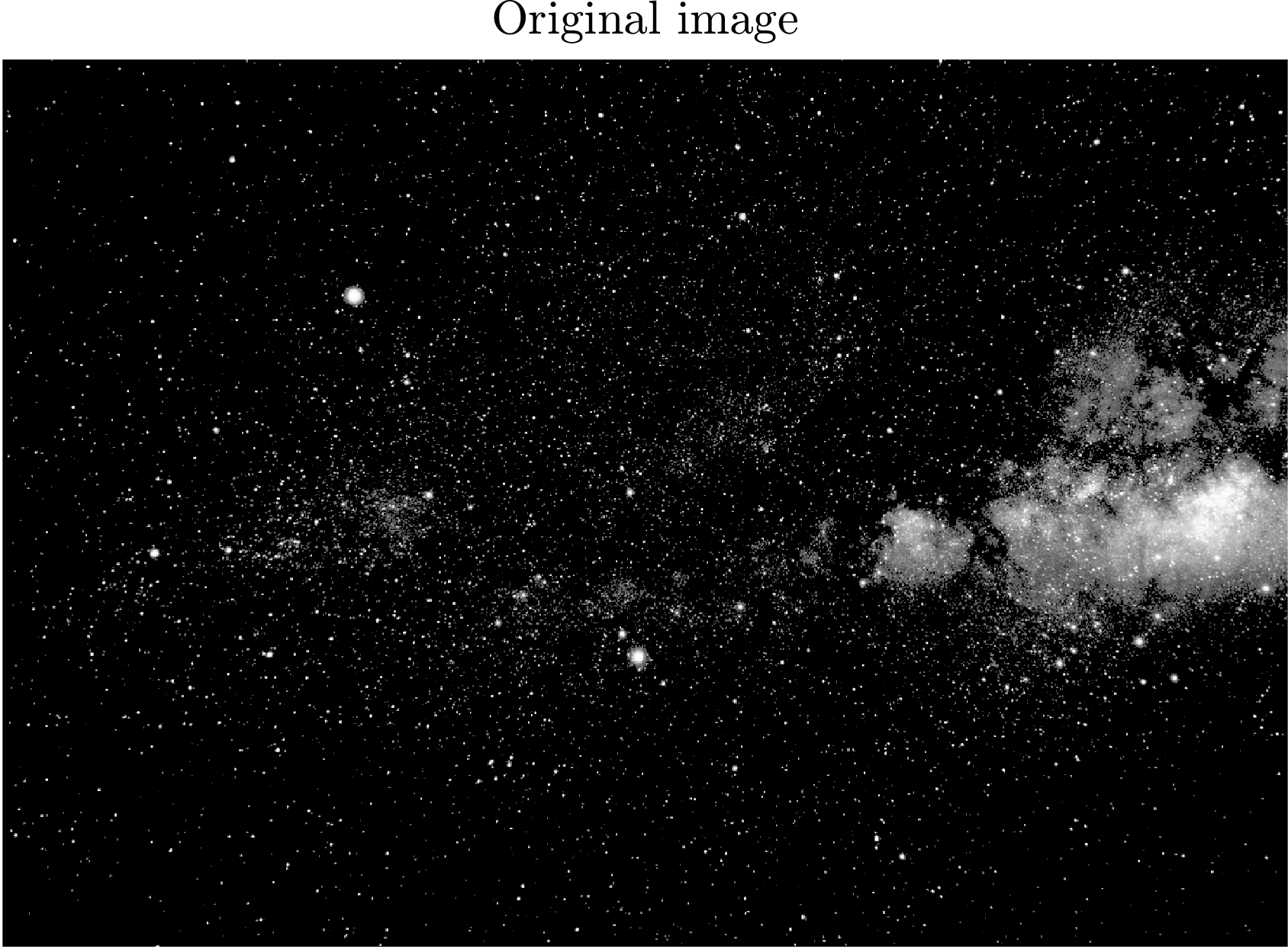} 
\includegraphics[width=0.4\textwidth]{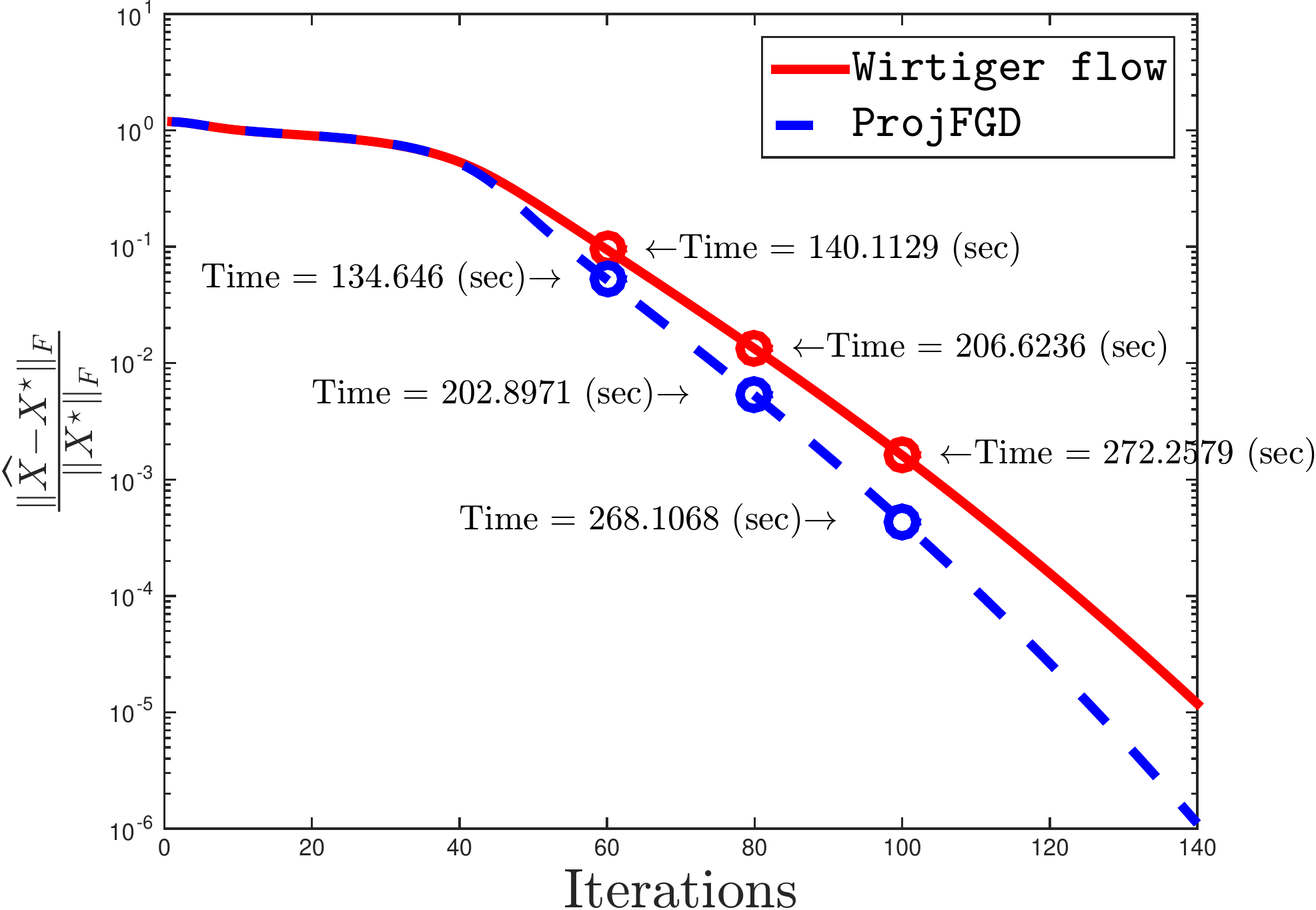} 
\caption{Left panel: Original image for the sparse phase retrieval problem. The dimension of the image is $883 \times 1280 = 1,130,240$ variables. Right panel: Convergence performance of algorithms under comparison w.r.t. $\tfrac{\|\widehat{X} - \X^\star\|_F}{\|\X^\star\|_F}$ vs. the total number of iterations; markers on top of curves indicate the total execution time until that point.
}
\label{fig:app_phase_exp1}
\end{figure*}

\paragraph{State-of-the-art approaches.} 
One of the most widely used methods for the phase retrieval problem comes from the seminal work of Gerchberg-Saxton \cite{gerchberg1972practical} and Fienup \cite{fienup1987phase}: 
they propose a greedy scheme that alternates projections on the range of $\{a_i\}_{i=1}^m$ and on the non-convex set of vectors $b$ such that $b = |Ax|$. 
Main disadvantage of such greedy methods is that often they get stuck to locally minimum points.

An popularized alternative to these greedy methods is via \emph{semidefinite relaxations}. 
\cite{candes2015phase} proposes \texttt{PhaseLift}, where the rank constraint is replaced by the nuclear norm surrogate. 
However, it is well-known that such SDP relaxations can be computationally prohibitive, when solved using off-the-self software packages, even for small problem instances; 
some specialized and more efficient convex relaxation algorithms are given in \cite{fogel2013phase}.

In \cite{candes2015phase2}, the authors present \texttt{Wirtinger Flow} algorithm, a non-convex scheme for solving phase retrieval problems. 
Similar to our approach, \texttt{Wirtinger Flow} consists of three components: 
$(i)$ a careful initialization step using a spectral method, 
$(ii)$ a specialized step size selection and, 
$(iii)$ a recursion where gradient steps on the factored variable space are performed. 
%To this end, our algorithm follows similar motions with \texttt{Wirtinger Flow}. 
Other approaches include Approximate Message Passing algorithms \cite{schniter2015compressive} and ADMM approaches \cite{cprl2012Ohlsson}.

\paragraph{Numerical results.} 
%We applied our ideas on two different problem scenarios. 
%In the first scenario, 
We test our algorithm on image recovery, according to the description given in \cite[Section 4.2]{candes2015phase2}.
Here, we consider grayscale images that are by nature also sparse (Figure \ref{fig:app_phase_exp1} - left panel). 
This way, we can also consider $\ell_1$-norm constraints, as in the criterion \eqref{transform2:eq_01}. 
We generate $L = 21$ random octanary patterns and, using these $21$ samples, we obtain the coded diffraction patterns using the grayscale image as input.
As dictated by \cite[Section 4.2]{candes2015phase2}, we perform 50 power method iterations for initialization.

For this experiment, we highlight $(i)$ how our algorithm \palgo performs in practice, and $(ii)$ how the additional sparsity constraint could lead to better performance. 
Figure \ref{fig:app_phase_exp1} (right panel) depicts the relative error $\tfrac{\|\widehat{\X} - \Xo\|_F}{\|\Xo\|_F}$ w.r.t. the iteration count for two algorithms: $(i)$ \texttt{Wirtiger flow} \cite{candes2015phase2}, and $(ii)$ \palgo. 
We observe that \palgo shows a slightly better performance, compared to \texttt{Wirtiger flow}, both in terms of iterations --\emph{i.e.}, we reach to a better solution within the same number of iterations-- and in terms of execution time --\emph{i.e.}, given a time wall, \palgo returns an estimate of better quality within the same amount of time.
We note that both algorithms used step sizes that were slightly different in values, while \palgo further performs also a projection step.\footnote{The step size in \texttt{Wirtiger flow} satisfies $\eta := \tfrac{\mu_t}{\|\U_0\|_F^2}$, for $\mu_t = \min\left\{1 - e^{t/t_0},~0.4 \right\}$ and $t_0 \approx 330$.}
Figure \ref{fig:app_phase_exp2} shows some reconstructed images returned by the algorithms under comparison, during their execution.
In all cases, both algorithms perform appealingly, finding a good approximation of the original image in less than 5 minutes; comparing the two algorithms, we note that \palgo returns a solution, within the same number of iterations, with at least 5 dB higher Peak Signal to Noise Ration (PSNR), in less time.

%In the second experiment, we consider the setting described in \cite{fogel2013phase} for molecular image recovery. Some results are presented in Figure \ref{}. 

\begin{figure}[t!]
\centering
\includegraphics[width=0.24\textwidth]{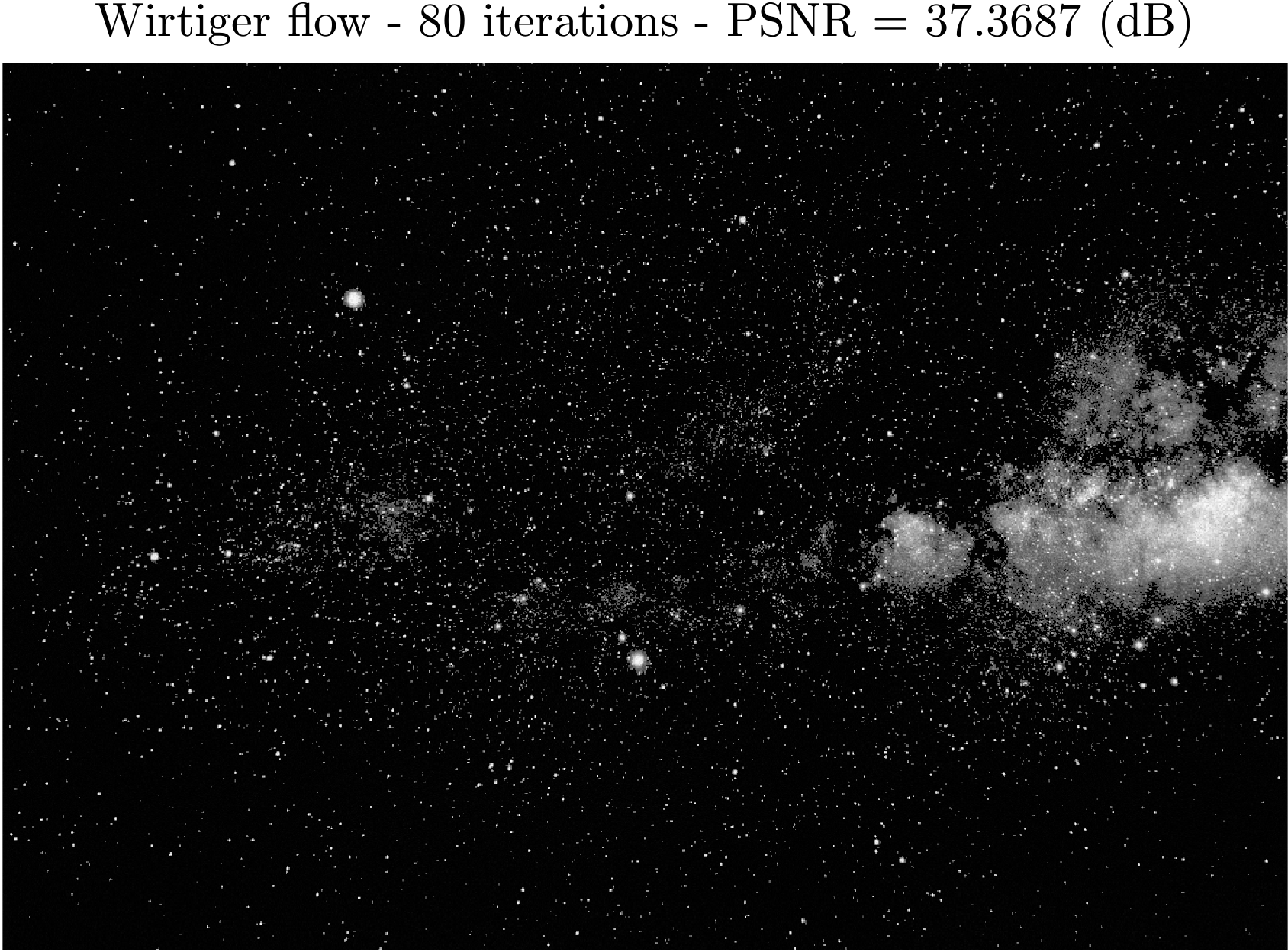}  \includegraphics[width=0.24\textwidth]{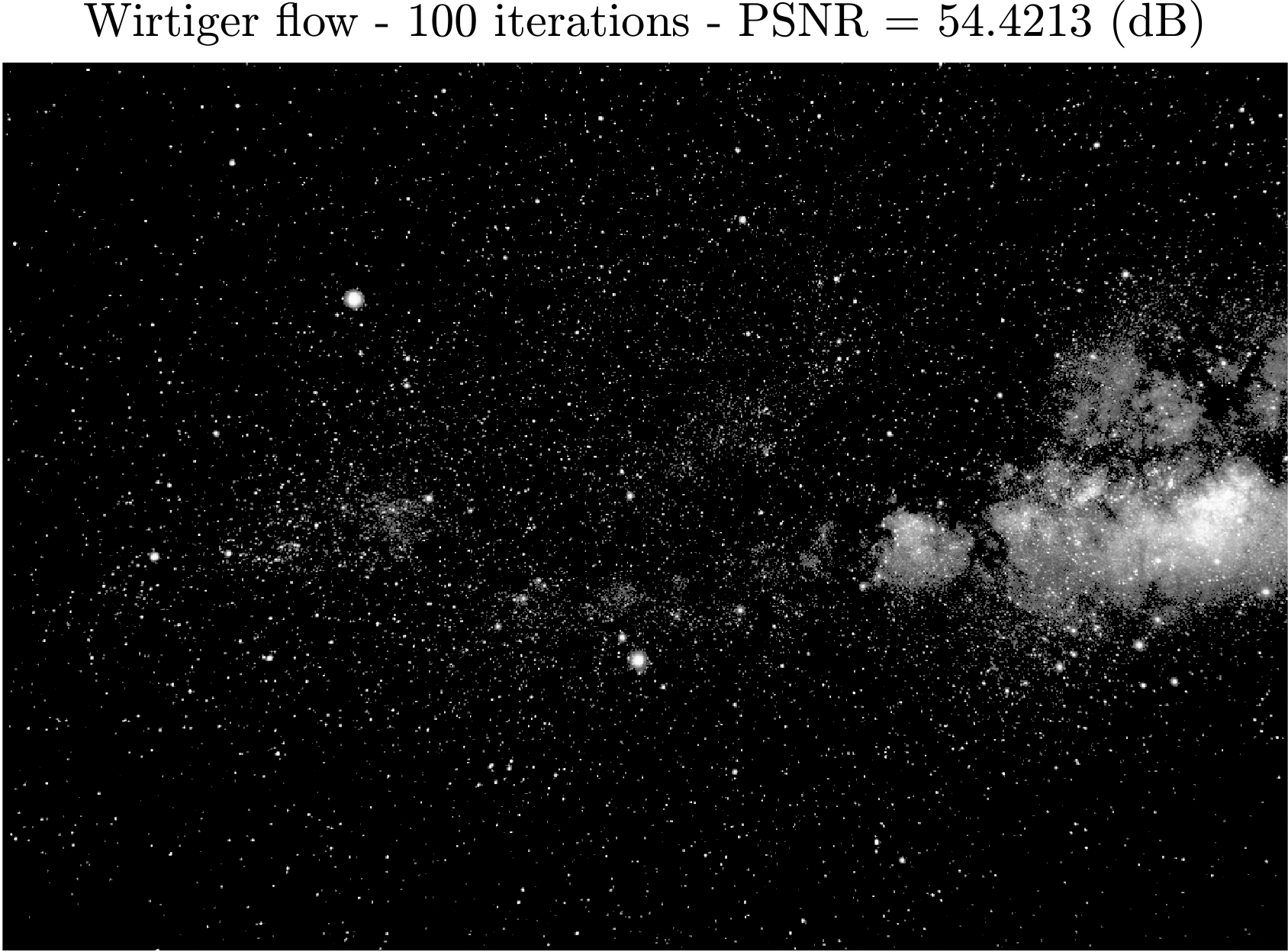} 
\includegraphics[width=0.24\textwidth]{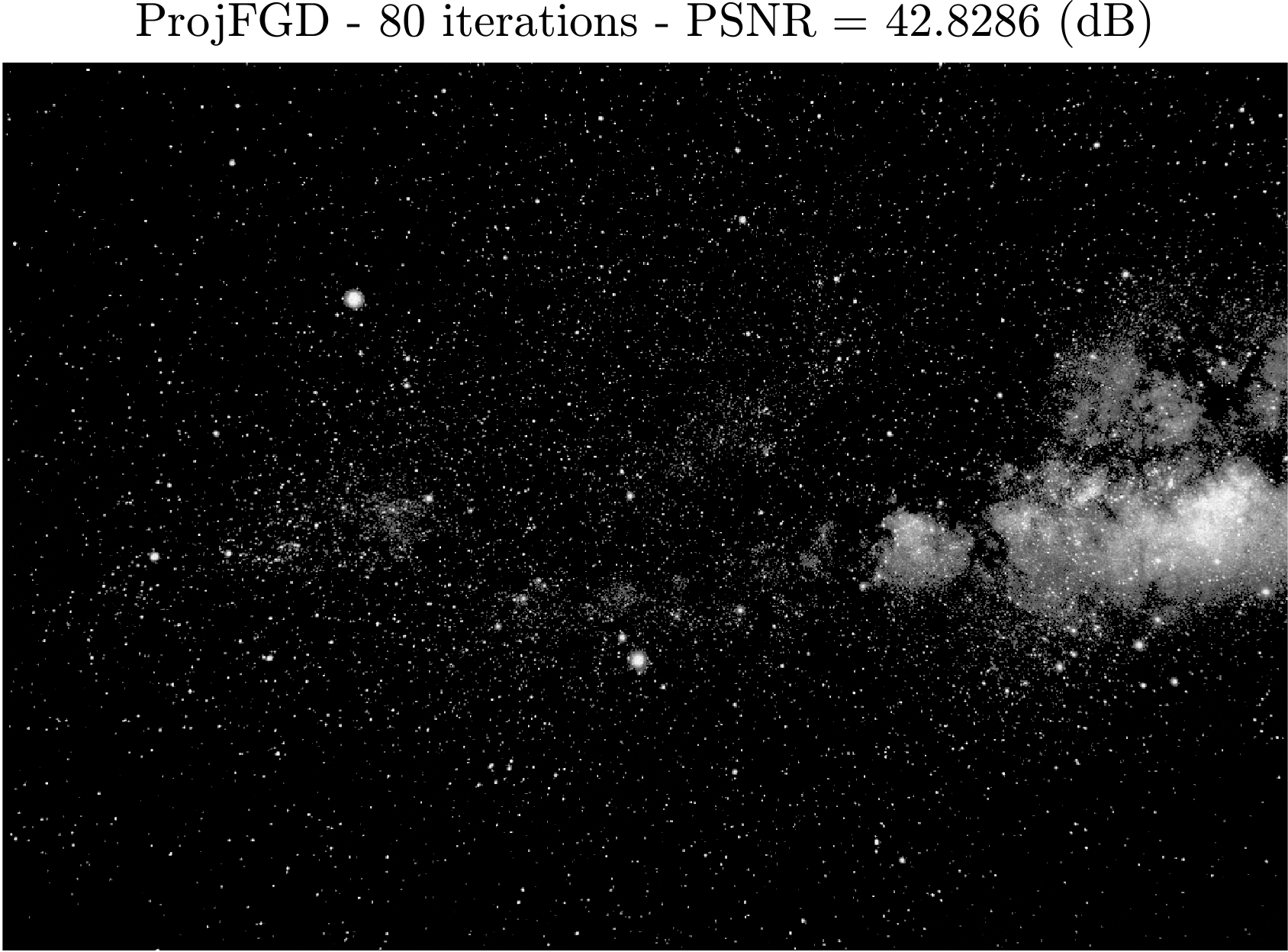}  \includegraphics[width=0.24\textwidth]{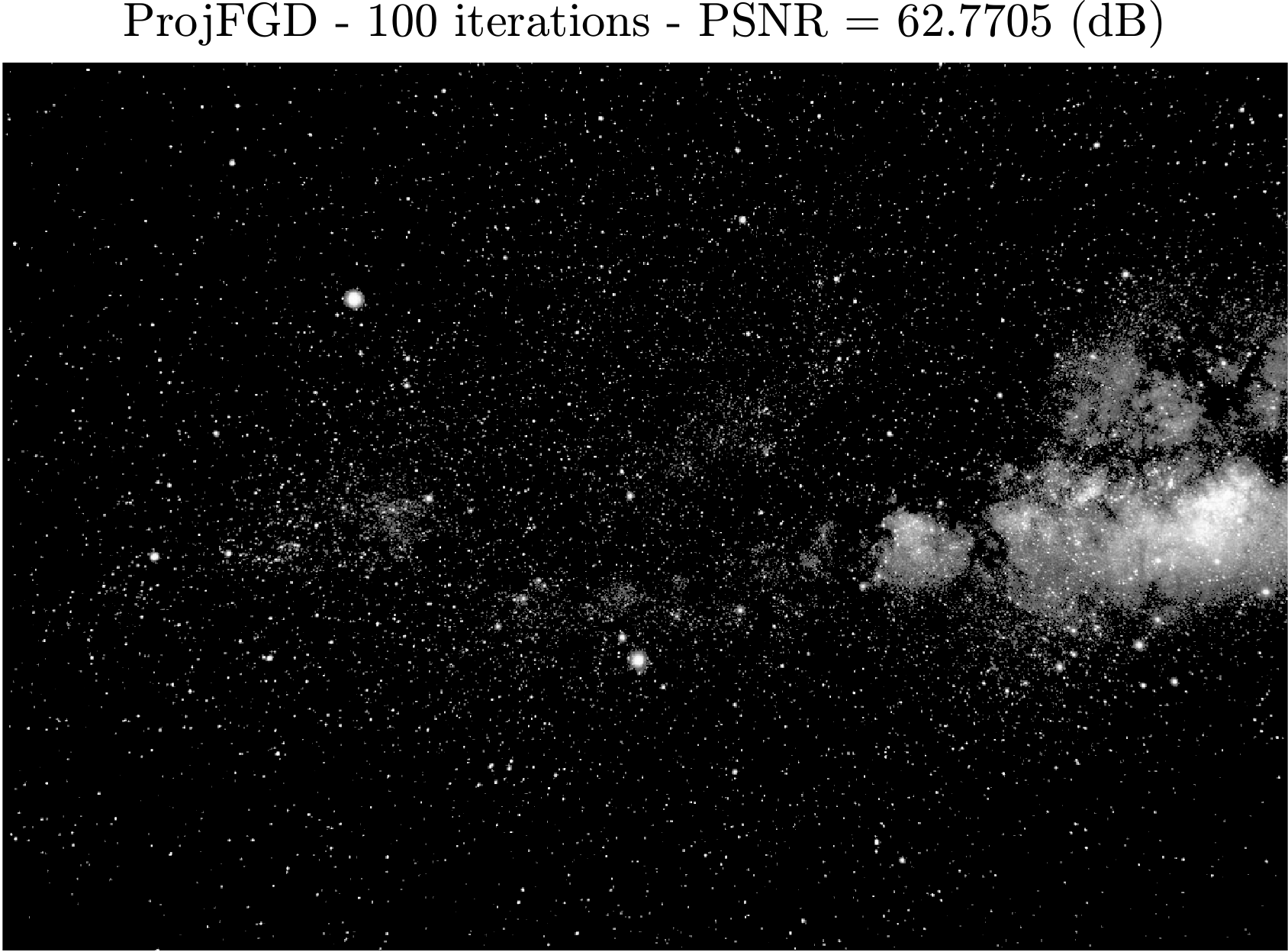}
\caption{First two figures: Reconstructed image by using \texttt{Wirtiger flow} algorithm after 80 iterations (left panel) and 100 iterations (right panel). 
Last two figures: Reconstructed image by using \palgo algorithm after 80 iterations (left panel) and 100 iterations (right panel). 
}
\label{fig:app_phase_exp2}
\end{figure}

\section{Discussion}{\label{sec:discussion}}
We consider a class of low-rank matrix problems where the solution is assumed PSD and further constrained with some matrix constraints, described in the text. %the constraint set is simple enough, according to definition in \eqref{intro:eq_proj}.
This paper proposes \palgo, a non-convex projected gradient descent algorithm that operates on the factors of the PSD putative solution. 
When the objective function is smooth and strongly convex in the original variable space, \palgo has (local) linear rate convergence guarantees (which can become global, given a proper initialization).

Our current analysis restricts to the Assumption \ref{ass:00}; extending the proof for more complex constraints sets is one possible research direction for future work, where an analogous of \emph{gradient mapping} \cite{nesterov2013introductory} might be required. 
Furthermore, considering barrier functions in the objective function, in order to accommodate the constraints, could be a possible extension. 
We hope this work will trigger future attempts along these directions.

\begin{small}
\bibliographystyle{plain}
\bibliography{ProjFGD}

\begin{thebibliography}{10}

\bibitem{aaronson2007learnability}
S.~Aaronson.
\newblock The learnability of quantum states.
\newblock In {\em Proceedings of the Royal Society of London A: Mathematical,
  Physical and Engineering Sciences}, volume 463, pages 3089--3114, 2007.

\bibitem{agarwal2010fast}
A.~Agarwal, S.~Negahban, and M.~Wainwright.
\newblock Fast global convergence rates of gradient methods for
  high-dimensional statistical recovery.
\newblock In {\em Advances in NIPS}, pages 37--45, 2010.

\bibitem{balakrishnan2014statistical}
S.~Balakrishnan, M.~Wainwright, and B.~Yu.
\newblock Statistical guarantees for the {EM} algorithm: {F}rom population to
  sample-based analysis.
\newblock {\em arXiv preprint arXiv:1408.2156}, 2014.

\bibitem{balzano2010online}
L.~Balzano, R.~Nowak, and B.~Recht.
\newblock Online identification and tracking of subspaces from highly
  incomplete information.
\newblock In {\em Communication, Control, and Computing (Allerton), 2010 48th
  Annual Allerton Conference on}, pages 704--711. IEEE, 2010.

\bibitem{becker2011nesta}
S.~Becker, J.~Bobin, and E.~Cand{\`e}s.
\newblock {NESTA}: {A} fast and accurate first-order method for sparse
  recovery.
\newblock {\em SIAM Journal on Imaging Sciences}, 4(1):1--39, 2011.

\bibitem{becker2011templates}
S.~Becker, E.~Cand{\`e}s, and M.~Grant.
\newblock Templates for convex cone problems with applications to sparse signal
  recovery.
\newblock {\em Mathematical Programming Computation}, 3(3):165--218, 2011.

\bibitem{becker2013randomized}
S.~Becker, V.~Cevher, and A.~Kyrillidis.
\newblock Randomized low-memory singular value projection.
\newblock In {\em 10th International Conference on Sampling Theory and
  Applications (Sampta)}, 2013.

\bibitem{bhojanapalli2015dropping}
S.~Bhojanapalli, A.~Kyrillidis, and S.~Sanghavi.
\newblock Dropping convexity for faster semi-definite optimization.
\newblock In {\em 29th Annual Conference on Learning Theory}, pages 530--582,
  2016.

\bibitem{boumal2011rtrmc}
N.~Boumal and P.-A. Absil.
\newblock {RTRMC}: {A} {R}iemannian trust-region method for low-rank matrix
  completion.
\newblock In {\em Advances in neural information processing systems}, pages
  406--414, 2011.

\bibitem{bubeck2014theory}
S.~Bubeck.
\newblock Theory of convex optimization for machine learning.
\newblock {\em arXiv preprint arXiv:1405.4980}, 2014.

\bibitem{burer2003nonlinear}
S.~Burer and R.~Monteiro.
\newblock A nonlinear programming algorithm for solving semidefinite programs
  via low-rank factorization.
\newblock {\em Mathematical Programming}, 95(2):329--357, 2003.

\bibitem{burer2005local}
S.~Burer and R.~Monteiro.
\newblock Local minima and convergence in low-rank semidefinite programming.
\newblock {\em Mathematical Programming}, 103(3):427--444, 2005.

\bibitem{cai2010singular}
J.~Cai, E.~Cand{\`e}s, and Z.~Shen.
\newblock A singular value thresholding algorithm for matrix completion.
\newblock {\em SIAM Journal on Optimization}, 20(4):1956--1982, 2010.

\bibitem{candes2015phase}
E.~Candes, Y.~Eldar, T.~Strohmer, and V.~Voroninski.
\newblock Phase retrieval via matrix completion.
\newblock {\em SIAM Review}, 57(2):225--251, 2015.

\bibitem{candes2015phase2}
E.~Candes, X.~Li, and M.~Soltanolkotabi.
\newblock Phase retrieval via {W}irtinger flow: {T}heory and algorithms.
\newblock {\em Information Theory, IEEE Transactions on}, 61(4):1985--2007,
  2015.

\bibitem{candes2013phaselift}
E.~Candes, T.~Strohmer, and V.~Voroninski.
\newblock Phaselift: {E}xact and stable signal recovery from magnitude
  measurements via convex programming.
\newblock {\em Communications on Pure and Applied Mathematics},
  66(8):1241--1274, 2013.

\bibitem{chen2014coherent}
Y.~Chen, S.~Bhojanapalli, S.~Sanghavi, and R.~Ward.
\newblock Coherent matrix completion.
\newblock In {\em Proceedings of The 31st International Conference on Machine
  Learning}, pages 674--682, 2014.

\bibitem{chen2015fast}
Y.~Chen and M.~Wainwright.
\newblock Fast low-rank estimation by projected gradient descent: {G}eneral
  statistical and algorithmic guarantees.
\newblock {\em arXiv preprint arXiv:1509.03025}, 2015.

\bibitem{fienup1987phase}
C.~Fienup and J.~Dainty.
\newblock Phase retrieval and image reconstruction for astronomy.
\newblock {\em Image Recovery: Theory and Application}, pages 231--275, 1987.

\bibitem{flammia2012quantum}
S.~Flammia, D.~Gross, Y.-K. Liu, and J.~Eisert.
\newblock Quantum tomography via compressed sensing: {E}rror bounds, sample
  complexity and efficient estimators.
\newblock {\em New Journal of Physics}, 14(9):095022, 2012.

\bibitem{fogel2013phase}
F.~Fogel, I.~Waldspurger, and A.~d'Aspremont.
\newblock Phase retrieval for imaging problems.
\newblock {\em arXiv preprint arXiv:1304.7735}, 2013.

\bibitem{gerchberg1972practical}
R.~Gerchberg.
\newblock A practical algorithm for the determination of phase from image and
  diffraction plane pictures.
\newblock {\em Optik}, 35:237, 1972.

\bibitem{gross2010quantum}
D.~Gross, Y.-K. Liu, S.~Flammia, S.~Becker, and J.~Eisert.
\newblock Quantum state tomography via compressed sensing.
\newblock {\em Physical review letters}, 105(15):150401, 2010.

\bibitem{hardt2014fast}
M.~Hardt and M.~Wootters.
\newblock Fast matrix completion without the condition number.
\newblock In {\em Proceedings of The 27th Conference on Learning Theory}, pages
  638--678, 2014.

\bibitem{hazan2008sparse}
E.~Hazan.
\newblock Sparse approximate solutions to semidefinite programs.
\newblock In {\em LATIN 2008: Theoretical Informatics}, pages 306--316.
  Springer, 2008.

\bibitem{jaganathan2013sparse}
K.~Jaganathan, S.~Oymak, and B.~Hassibi.
\newblock Sparse phase retrieval: {C}onvex algorithms and limitations.
\newblock In {\em Information Theory Proceedings (ISIT)}, pages 1022--1026.
  IEEE, 2013.

\bibitem{jain2010guaranteed}
P.~Jain, R.~Meka, and I.~Dhillon.
\newblock Guaranteed rank minimization via singular value projection.
\newblock In {\em Advances in Neural Information Processing Systems}, pages
  937--945, 2010.

\bibitem{jain2013low}
P.~Jain, P.~Netrapalli, and S.~Sanghavi.
\newblock Low-rank matrix completion using alternating minimization.
\newblock In {\em Proceedings of the 45th annual ACM symposium on Symposium on
  theory of computing}, pages 665--674. ACM, 2013.

\bibitem{kalev2015quantum}
A.~Kalev, R.~Kosut, and I.~Deutsch.
\newblock Quantum tomography protocols with positivity are compressed sensing
  protocols.
\newblock {\em Nature partner journals (npj) Quantum Information}, 1:15018,
  2015.

\bibitem{kyrillidis2014matrix}
A.~Kyrillidis and V.~Cevher.
\newblock Matrix recipes for hard thresholding methods.
\newblock {\em Journal of mathematical imaging and vision}, 48(2):235--265,
  2014.

\bibitem{laue2012hybrid}
S.~Laue.
\newblock A hybrid algorithm for convex semidefinite optimization.
\newblock In {\em Proceedings of the 29th International Conference on Machine
  Learning (ICML-12)}, pages 177--184, 2012.

\bibitem{lee2010admira}
K.~Lee and Y.~Bresler.
\newblock {ADMiRA}: {A}tomic decomposition for minimum rank approximation.
\newblock {\em Information Theory, IEEE Transactions on}, 56(9):4402--4416,
  2010.

\bibitem{li2013sparse}
X.~Li and V.~Voroninski.
\newblock Sparse signal recovery from quadratic measurements via convex
  programming.
\newblock {\em SIAM Journal on Mathematical Analysis}, 45(5):3019--3033, 2013.

\bibitem{lin2010augmented}
Z.~Lin, M.~Chen, and Y.~Ma.
\newblock The augmented {L}agrange multiplier method for exact recovery of
  corrupted low-rank matrices.
\newblock {\em arXiv preprint arXiv:1009.5055}, 2010.

\bibitem{liu2011universal}
Y.-K. Liu.
\newblock Universal low-rank matrix recovery from {P}auli measurements.
\newblock In {\em Advances in Neural Information Processing Systems}, pages
  1638--1646, 2011.

\bibitem{mirsky1975trace}
L.~Mirsky.
\newblock A trace inequality of {J}ohn von {N}eumann.
\newblock {\em Monatshefte f{\"u}r Mathematik}, 79(4):303--306, 1975.

\bibitem{nesterov2013introductory}
Y.~Nesterov.
\newblock {\em Introductory lectures on convex optimization: {A} basic course},
  volume~87.
\newblock Springer Science \& Business Media, 2013.

\bibitem{Ohlsson2012CPRL}
H.~Ohlsson, A.~Yang, R.~Dong, and S.~Sastry.
\newblock {CPRL} -- an extension of compressive sensing to the phase retrieval
  problem.
\newblock In {\em Advances in Neural Information Processing Systems}, pages
  1376--1384, 2012.

\bibitem{cprl2012Ohlsson}
H~Ohlsson, A.~Yang, R.~Dong, and S.~Sastry.
\newblock {CPRL} -- an extension of compressive sensing to the phase retrieval
  problem.
\newblock In {\em Advances in Neural Information Processing Systems 25}, pages
  1376--1384, 2012.

\bibitem{recht2010guaranteed}
B.~Recht, M.~Fazel, and P.~Parrilo.
\newblock Guaranteed minimum-rank solutions of linear matrix equations via
  nuclear norm minimization.
\newblock {\em SIAM review}, 52(3):471--501, 2010.

\bibitem{schniter2015compressive}
P.~Schniter and S.~Rangan.
\newblock Compressive phase retrieval via generalized approximate message
  passing.
\newblock {\em Signal Processing, IEEE Transactions on}, 63(4):1043--1055,
  2015.

\bibitem{shechtman2014gespar}
Y.~Shechtman, A.~Beck, and Y.~Eldar.
\newblock {GESPAR}: {E}fficient phase retrieval of sparse signals.
\newblock {\em Signal Processing, IEEE Transactions on}, 62(4):928--938, 2014.

\bibitem{tanner2013normalized}
J.~Tanner and K.~Wei.
\newblock Normalized iterative hard thresholding for matrix completion.
\newblock {\em SIAM Journal on Scientific Computing}, 35(5):S104--S125, 2013.

\bibitem{tu2015low}
S.~Tu, R.~Boczar, M.~Soltanolkotabi, and B.~Recht.
\newblock Low-rank solutions of linear matrix equations via {P}rocrustes flow.
\newblock {\em arXiv preprint arXiv:1507.03566}, 2015.

\bibitem{wen2012solving}
Z.~Wen, W.~Yin, and Y.~Zhang.
\newblock Solving a low-rank factorization model for matrix completion by a
  nonlinear successive over-relaxation algorithm.
\newblock {\em Mathematical Programming Computation}, 4(4):333--361, 2012.

\bibitem{yurtsever2015universal}
A.~Yurtsever, Q.~Tran-Dinh, and V.~Cevher.
\newblock A universal primal-dual convex optimization framework.
\newblock In {\em Advances in Neural Information Processing Systems 28}, pages
  3132--3140. 2015.

\bibitem{zhang2015global}
D.~Zhang and L.~Balzano.
\newblock Global convergence of a grassmannian gradient descent algorithm for
  subspace estimation.
\newblock {\em arXiv preprint arXiv:1506.07405}, 2015.

\bibitem{zhao2015nonconvex}
T.~Zhao, Z.~Wang, and H.~Liu.
\newblock A nonconvex optimization framework for low rank matrix estimation.
\newblock In {\em Advances in Neural Information Processing Systems 28}, pages
  559--567. 2015.

\bibitem{zheng2015convergent}
Q.~Zheng and J.~Lafferty.
\newblock A convergent gradient descent algorithm for rank minimization and
  {SDP} from random linear measurements.
\newblock In {\em Advances in NIPS}, pages 109--117, 2015.

\end{thebibliography}
\end{small}

%\onecolumn
%\input{proof2}

%\section{Appendix A}
%
%\input{transform}
%
%\subsection{PSD constraints are enough for low-rank density matrix recovery.}
%In the recent work of \cite{kalev2015quantum}, the authors claim a surprising result, inspired by \cite{bruckstein2008uniqueness}:
%PSD constraints alone reduces adequately the set of low-rank solutions that satisfy the observations, 
%which makes the use of nuclear norm trivial. 
%Within our context, this translates into the fact that the criterion:
%\begin{equation*}
%\begin{aligned}
%	& \underset{\X \succeq 0}{\text{minimize}}
%	& & \|\linmap(\X) - \obs\|_F^2,
%\end{aligned}
%\end{equation*}
%can still recover $\X^\star$ under the same conditions regarding rankness $r$, the nature of linear map $\linmap$ and the number of measurements $m$.
%
%Even in this case, handling the PSD constraint requires full SVDs\footnote{\emph{I.e.}, while removing the nuclear norm term simplifies the objective expression or the constraint set, handling the PSD constraints is still a daunting task in high-dimensions.}.
%Using our framework, one can set $r = n$ so that $\U \in \R^{n \times n}$ and $\X = \U\U^\top$ and solve:
%\begin{equation*}
%\begin{aligned}
%	& \underset{\U \in \R^{n \times n}}{\text{minimize}}
%	& & \|\linmap(\U\U^\top) - \obs\|_F^2,
%\end{aligned}
%\end{equation*}
%In most cases, the main operation in our case is matrix-matrix
%multiplications. This operation can naturally be computed in a parallel or distributed
%manner. For more details, we refer the reader to \cite{bertsekas1989parallel}[Chapter 1].

%\section{Appendix B}
\section*{Additional experiments}{\label{sec:add_exp}}

\subsection{Quantum state tomography -- more results}
Figures \ref{fig:app_exp3}-\ref{fig:app_exp4} show further results regarding the QST problem, where $r = 1$ and $q = 10, 12$, respectively. 
For each case, we present both the performance in terms of number of iterations needed, as well as what is the cumulative time required.
For all algorithms, we use as initial point $U_0 = \Pi_{\C}(\widetilde{\U_0})$ such that $\X_0 = \widetilde{\U}_0 \widetilde{\U}_0^\top$ where $\X_0 = \Pi_{+}\left(-\mathcal{A}^*(y)\right)$ and $\Pi_{+}(\cdot)$ is the projection onto the PSD cone.
Configurations are described in the caption of each figure. 
Table \ref{tbl:Comp} contains information regarding total time required for convergence and quality of solution for all these cases.
Results on almost pure density states, \emph{i.e.}, $r > 1$, are provided in Figure \ref{fig:app_exp5}. 

For completeness, we also provide results that illustrate the effect of initialization. 
In this case, we consider a random initialization and the same initial point is used for all algorithms. 
Some results are illustrated in Figure \ref{fig:app_exp6}; table \ref{tbl:Comp3} contains metadata of these experiments. 
Similar to above, \palgo shows competitive behavior by finding a better solution faster, irrespective of initialization point. 

%\begin{figure*}[t!]
%\centering
%\includegraphics[width=0.32\textwidth]{} 
%\includegraphics[width=0.32\textwidth]{} 
%\includegraphics[width=0.32\textwidth]{} \\
%\includegraphics[width=0.32\textwidth]{} 
%\includegraphics[width=0.32\textwidth]{} 
%\includegraphics[width=0.32\textwidth]{} \\
%\caption{Convergence performance of algorithms under comparison w.r.t. $\tfrac{\|\widehat{X} - \X^\star\|_F}{\|\X^\star\|_F}$ vs. $(i)$ the total number of iterations (top) and $(ii)$ the total execution time (bottom). First, second and third column corresponds to $C_{\rm sam} = 3, 6$ and $10$, respectively. For all cases, $r = 1$ (pure state setting) and $q = 8$.
%}
%\label{fig:app_exp2}
%\end{figure*}

\begin{figure*}[t!]
\centering
\includegraphics[width=0.32\textwidth]{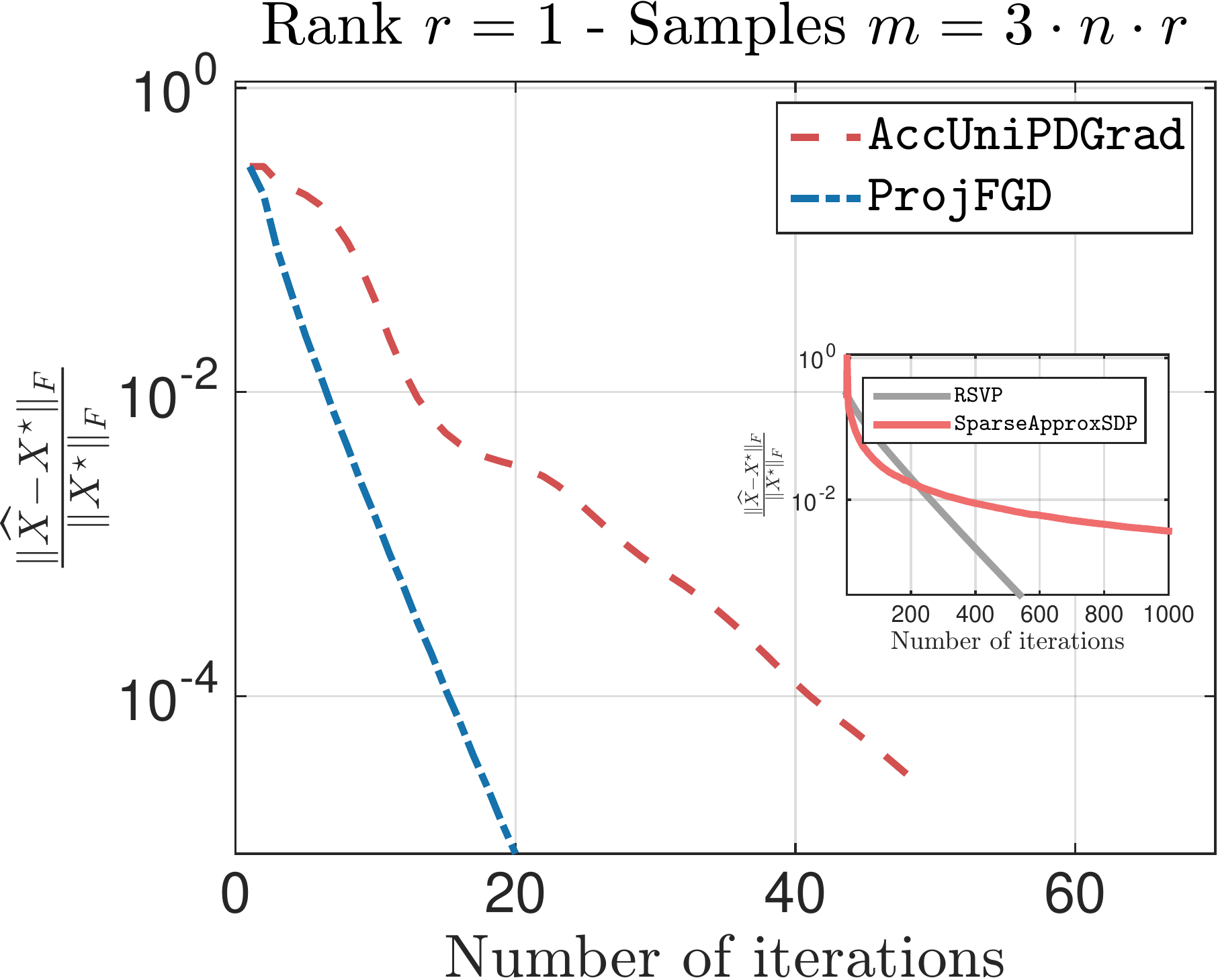} 
\includegraphics[width=0.32\textwidth]{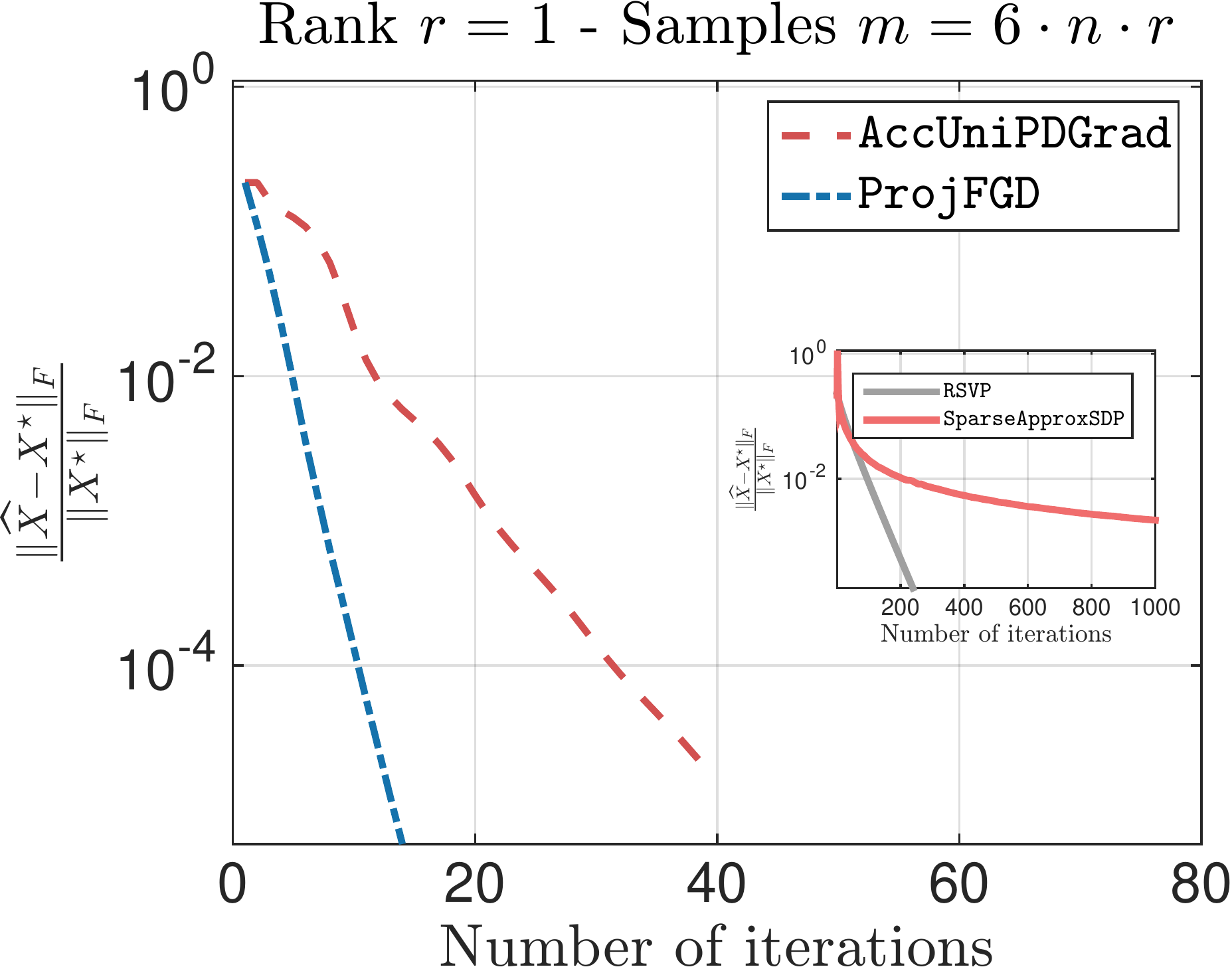} 
\includegraphics[width=0.32\textwidth]{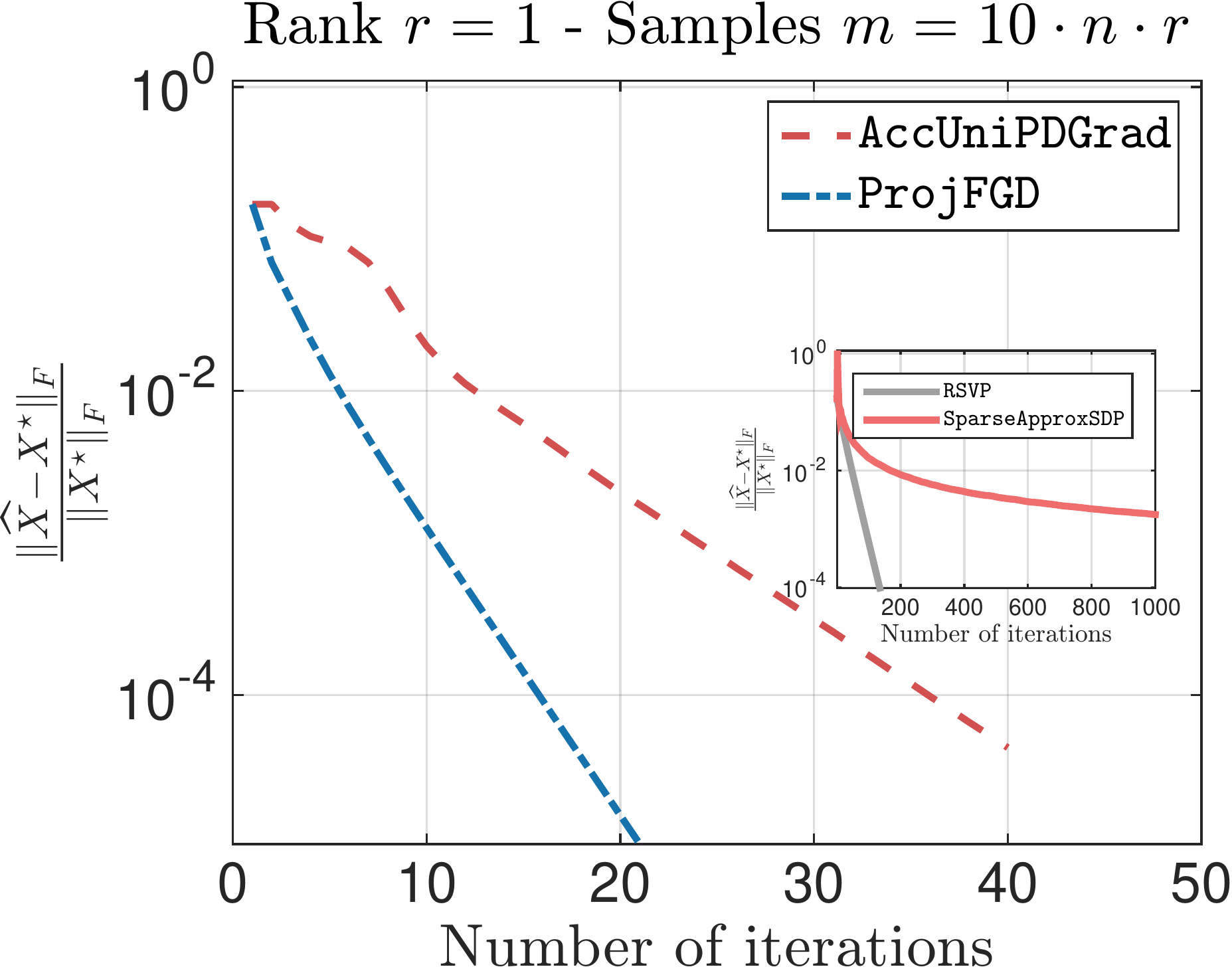} \\
\includegraphics[width=0.32\textwidth]{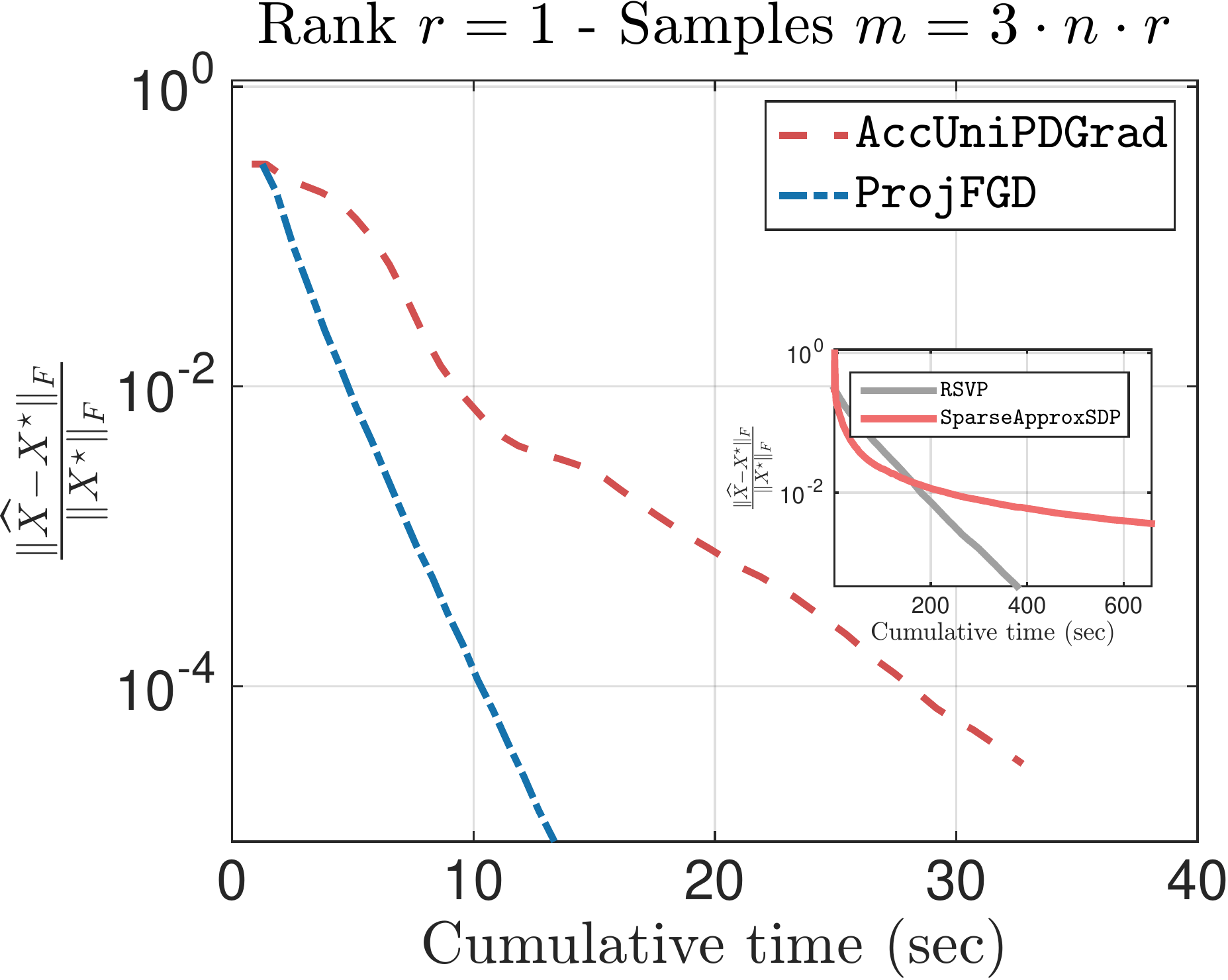} 
\includegraphics[width=0.32\textwidth]{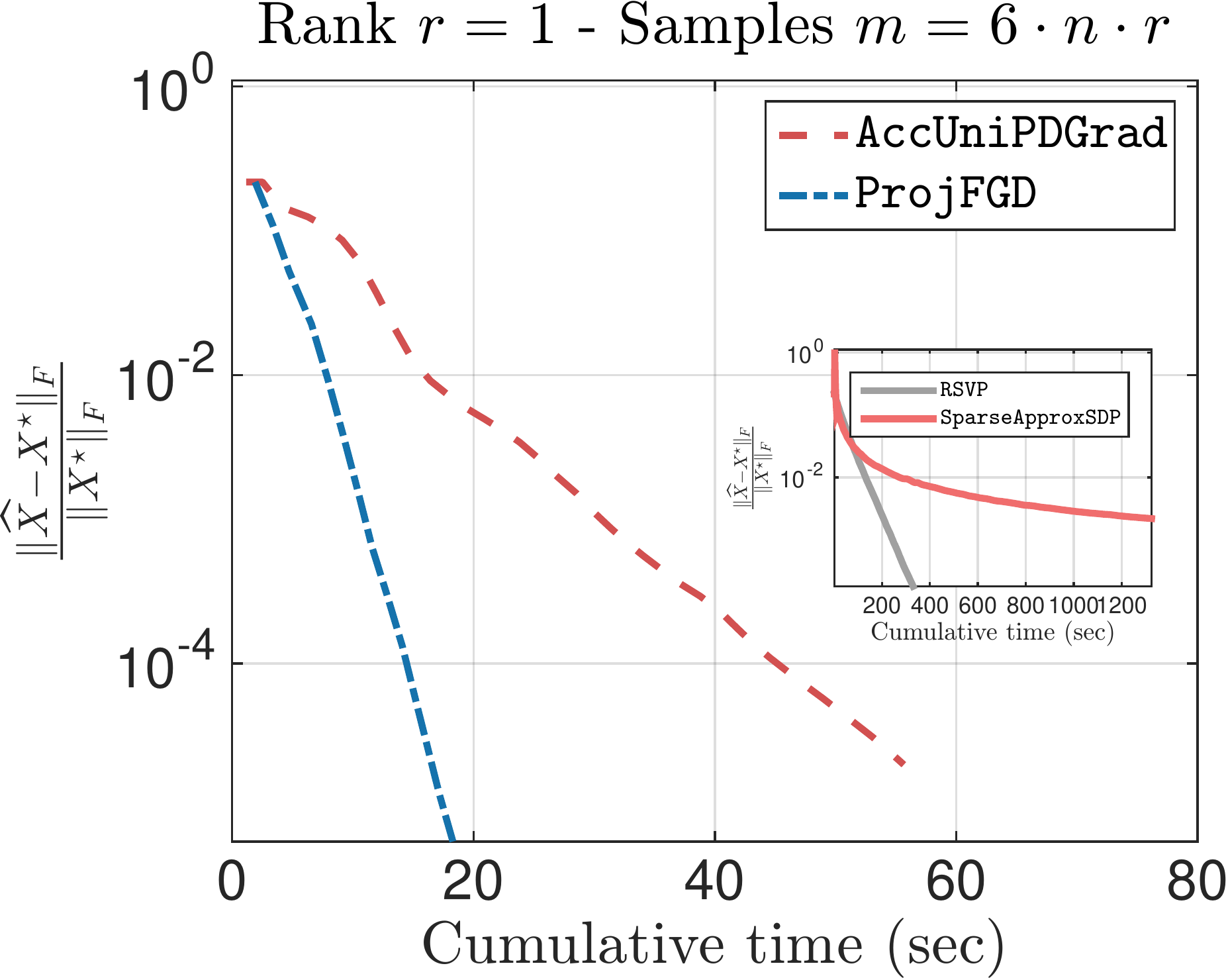} 
\includegraphics[width=0.32\textwidth]{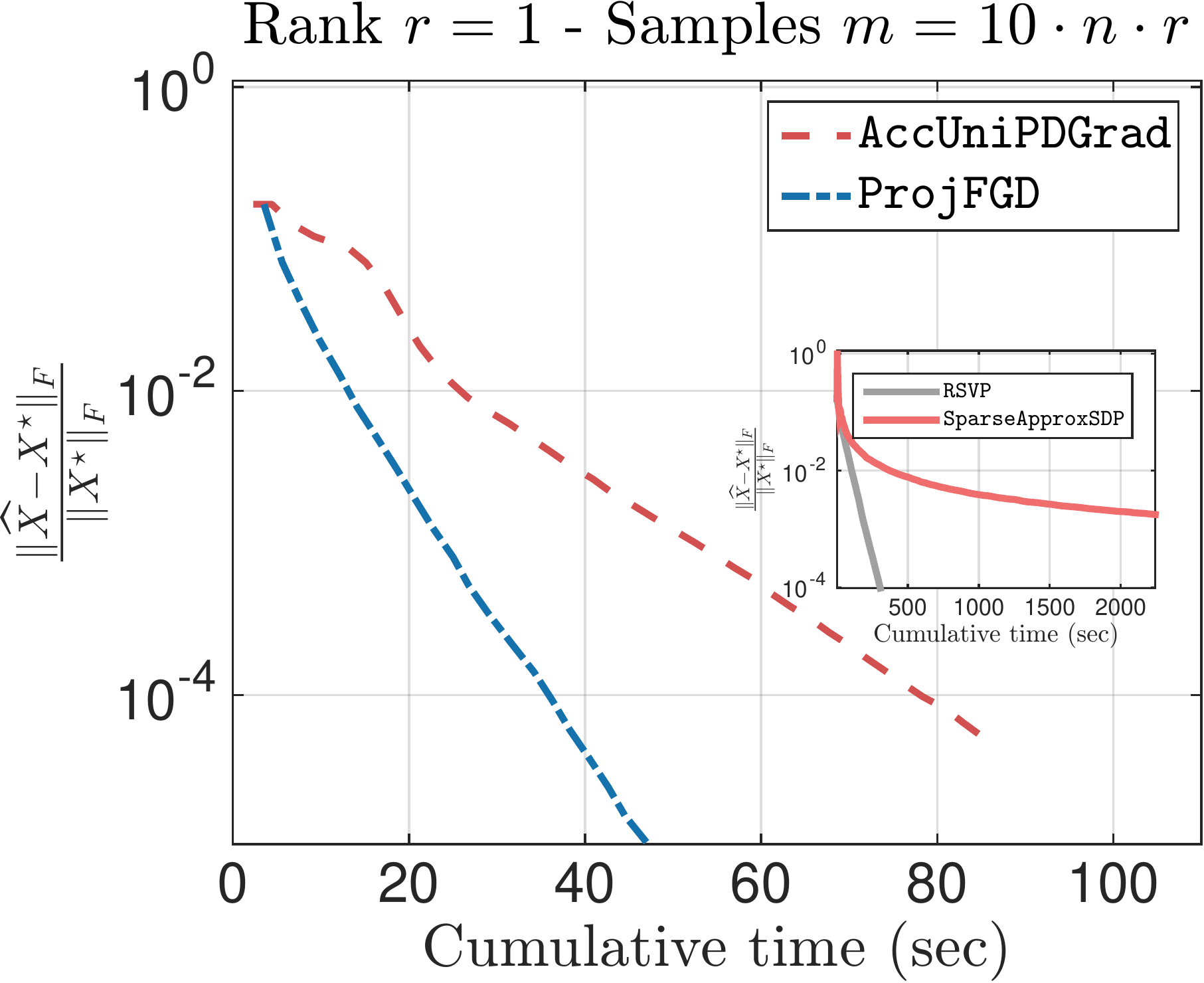} \\
\caption{\textbf{Quantum state tomography:} Convergence performance of algorithms under comparison w.r.t. $\tfrac{\|\widehat{X} - \X^\star\|_F}{\|\X^\star\|_F}$ vs. $(i)$ the total number of iterations (top) and $(ii)$ the total execution time (bottom). First, second and third column corresponds to $C_{\rm sam} = 3, 6$ and $10$, respectively. For all cases, $r = 1$ (pure state setting) and $q = 10$. Initial point is $U_0 = \Pi_{\C}(\widetilde{\U_0})$ such that $\X_0 = \widetilde{\U}_0 \widetilde{\U}_0^\top$ where $\X_0 = \Pi_{+}\left(-\mathcal{A}^*(y)\right)$.
}
\label{fig:app_exp3}
\end{figure*}

\begin{figure*}[t!]
\centering
\includegraphics[width=0.32\textwidth]{./figs/pureQST_q_12_C_3_init_ours_XcurvesIter} 
\includegraphics[width=0.32\textwidth]{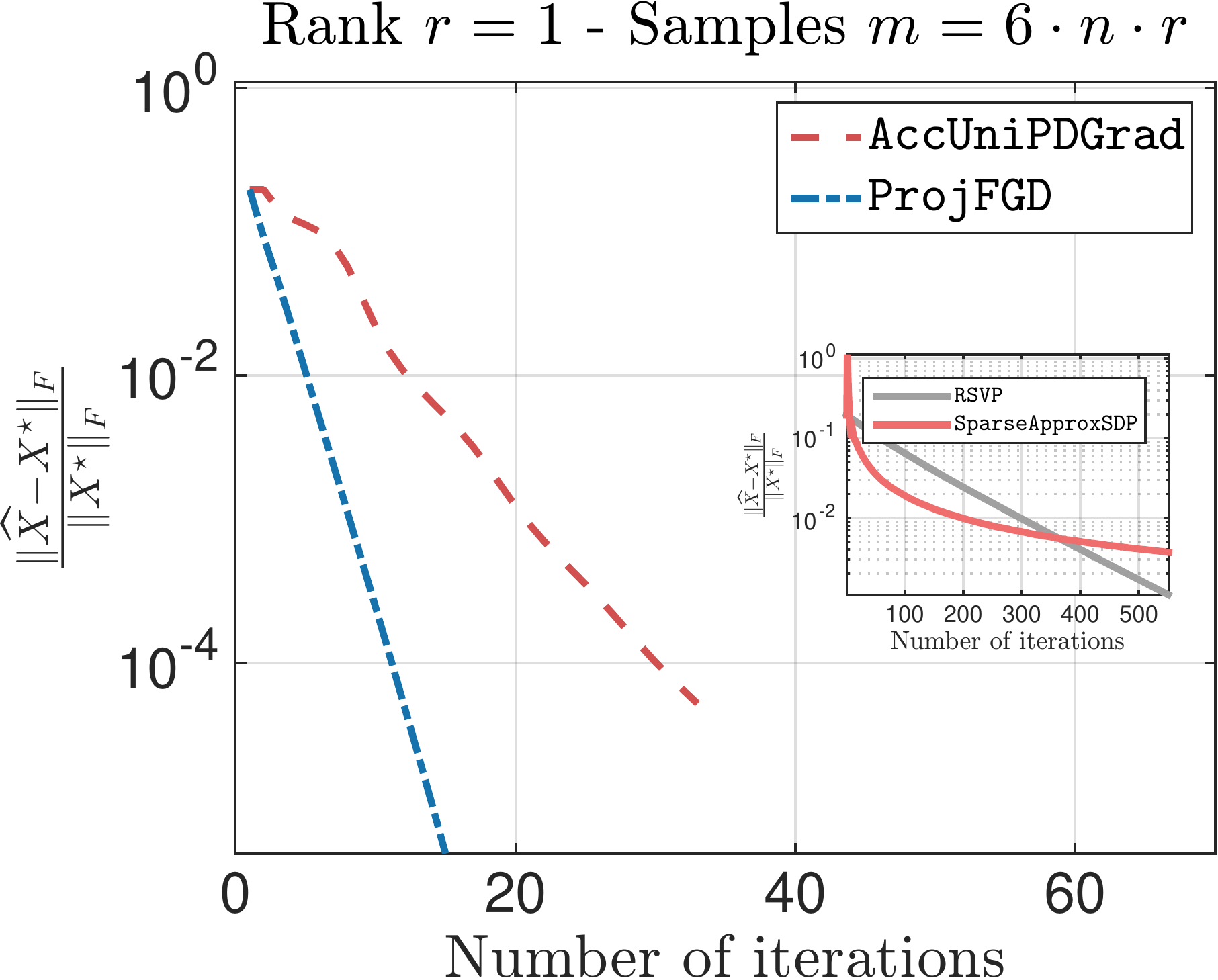} 
\includegraphics[width=0.32\textwidth]{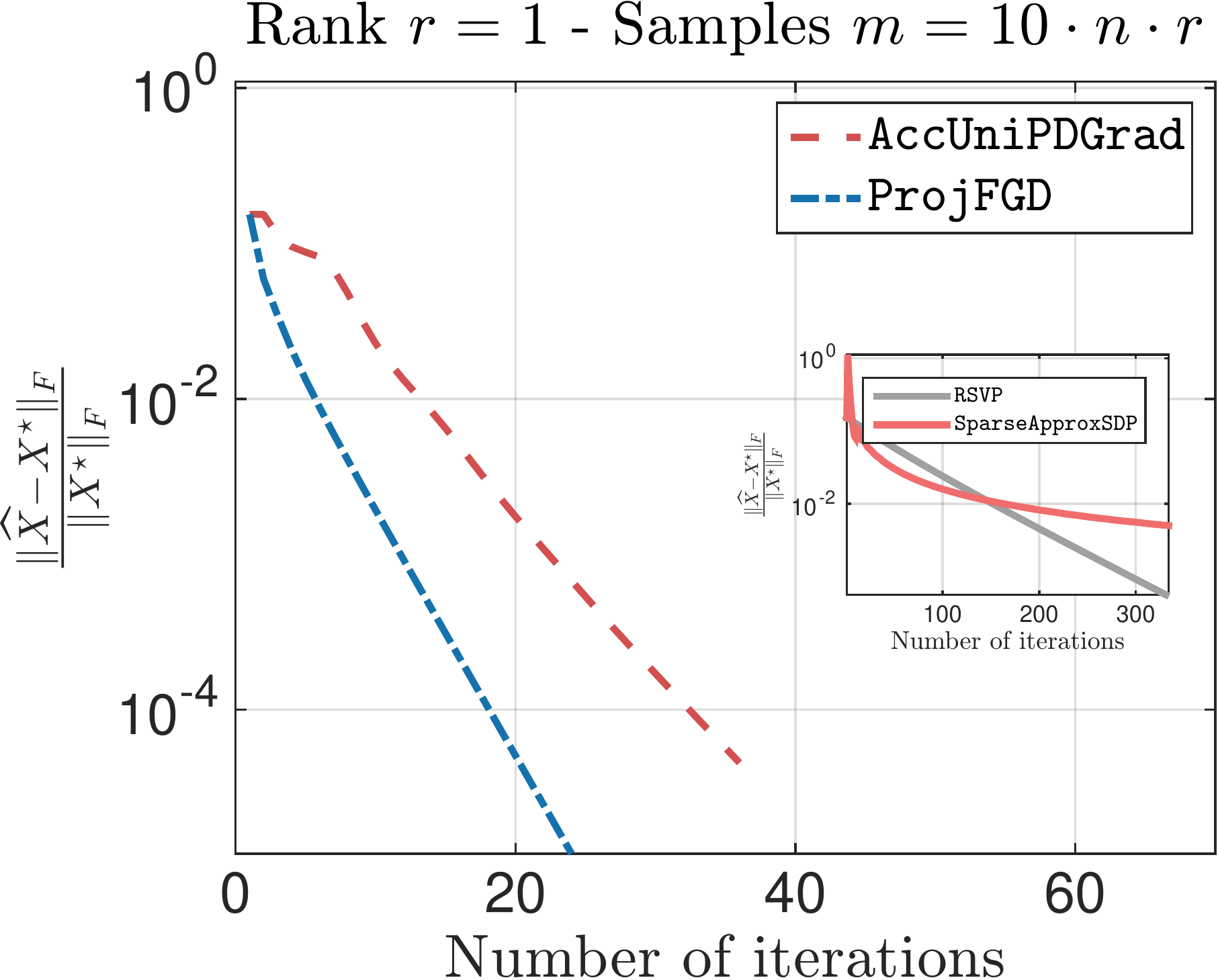} \\
\includegraphics[width=0.32\textwidth]{./figs/pureQST_q_12_C_3_init_ours_XcurvesTime} 
\includegraphics[width=0.32\textwidth]{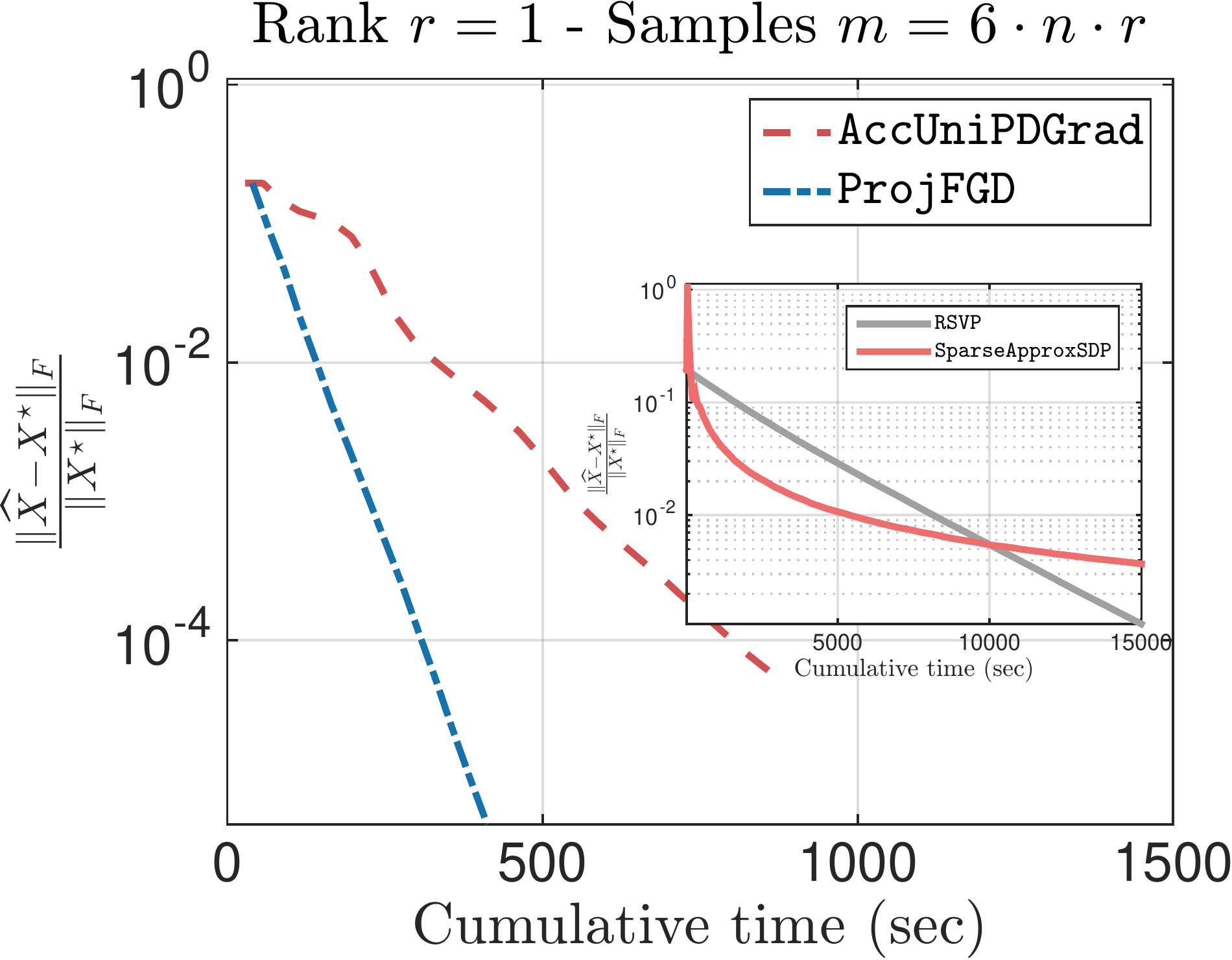} 
\includegraphics[width=0.32\textwidth]{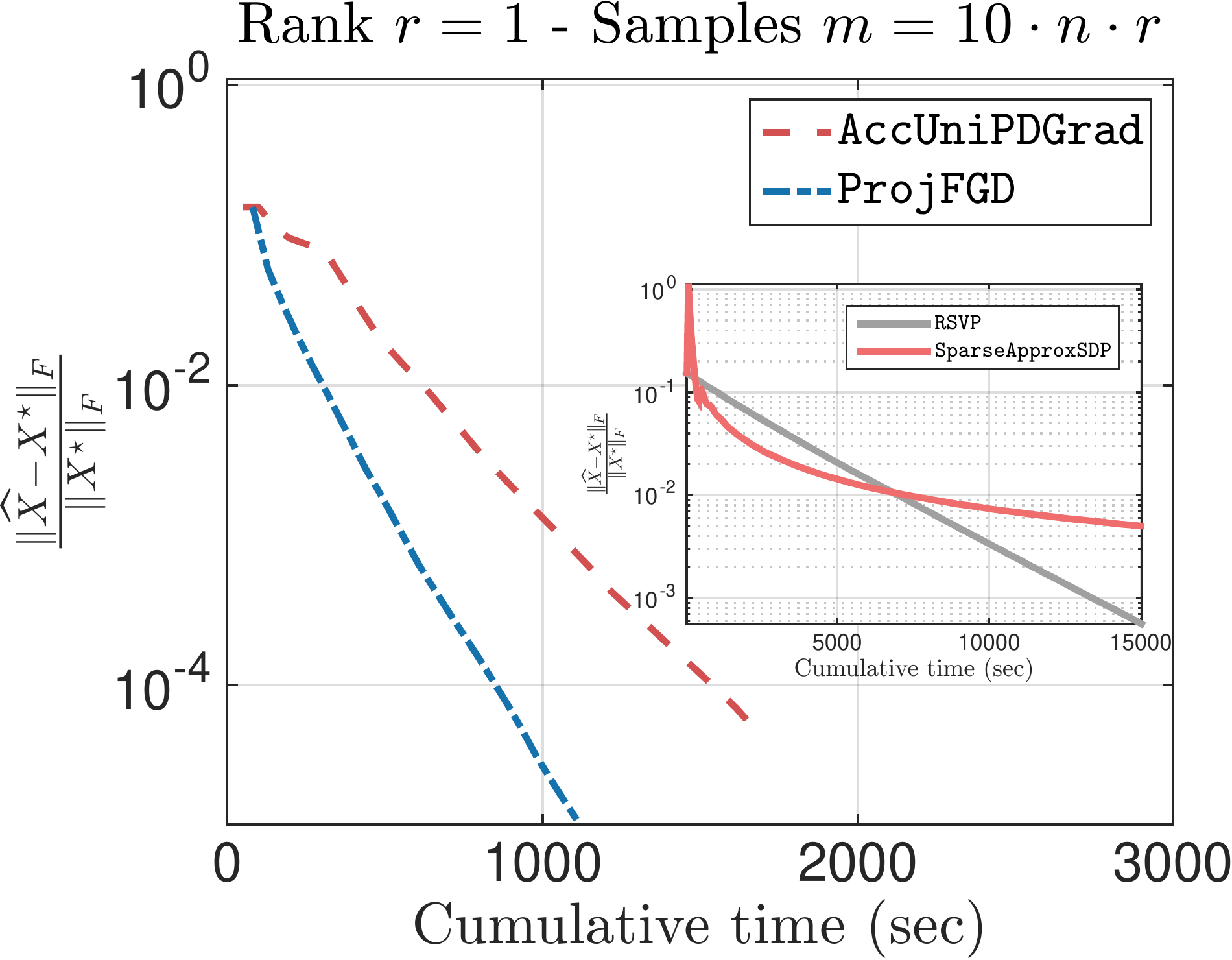} \\
\caption{\textbf{Quantum state tomography:} Convergence performance of algorithms under comparison w.r.t. $\tfrac{\|\widehat{X} - \X^\star\|_F}{\|\X^\star\|_F}$ vs. $(i)$ the total number of iterations (top) and $(ii)$ the total execution time (bottom). First, second and third column corresponds to $C_{\rm sam} = 3, 6$ and $10$, respectively. For all cases, $r = 1$ (pure state setting) and $q = 12$. Initial point is $U_0 = \Pi_{\C}(\widetilde{\U_0})$ such that $\X_0 = \widetilde{\U}_0 \widetilde{\U}_0^\top$ where $\X_0 = \Pi_{+}\left(-\mathcal{A}^*(y)\right)$.
}
\label{fig:app_exp4}
\end{figure*}

\begin{table*}[!h]
\centering
\ra{1.3}
\begin{scriptsize}
\rowcolors{2}{white}{black!05!white}
\begin{tabular}{c c c c c c c c c c c c c} \toprule
& \phantom{a} & \multicolumn{3}{c}{$q = 6$, $C_{\rm sam} = 3$.} & \phantom {a} & \multicolumn{3}{c}{$q = 6$, $C_{\rm sam} = 6$.} & \phantom {a} & \multicolumn{3}{c}{$q = 6$, $C_{\rm sam} = 10$.}\\
\cmidrule {3-5} \cmidrule{7-9} \cmidrule{11-13} 
Algorithm & \phantom{a} & $\tfrac{\|\widehat{\X} - \X^\star\|_F}{\|\X^\star\|_F}$ & \phantom{a}  & Total time 
			   & \phantom{a} & $\tfrac{\|\widehat{\X} - \X^\star\|_F}{\|\X^\star\|_F}$ & \phantom{a}  & Total time 
			   & \phantom{a} & $\tfrac{\|\widehat{\X} - \X^\star\|_F}{\|\X^\star\|_F}$ & \phantom{a}  & Total time  \\
\cmidrule{1-1} \cmidrule {3-3} \cmidrule{5-5} \cmidrule{7-7} \cmidrule{9-9} \cmidrule {11-11} \cmidrule{13-13}
\texttt{RSVP} & & 5.1496e-05 & & 0.7848 & & 1.8550e-05 & & 0.3791 & & 6.6328e-06 & & 0.1203 \\ 
\texttt{SparseApproxSDP} & & 4.6323e-03 & & 3.7404 & & 2.2469e-03 & & 4.3775 & & 1.4776e-03 & & 3.8536 \\ 
\texttt{AccUniPDGrad} & & 4.0388e-05 & & 0.3634 & & 2.4064e-05 & & 0.3311 & & 1.9032e-05 & & 0.4911\\ 
\texttt{ProjFGD} & & 2.4116e-05 & & 0.0599 & & 1.6052e-05 & & 0.0441 & & 1.1419e-05 & & 0.0446 \\ 
\midrule 
& \phantom{a} & \multicolumn{3}{c}{$q = 8$, $C_{\rm sam} = 3$.} & \phantom {a} & \multicolumn{3}{c}{$q = 8$, $C_{\rm sam} = 6$.} & \phantom {a} & \multicolumn{3}{c}{$q = 8$, $C_{\rm sam} = 10$.}\\
\cmidrule {3-5} \cmidrule{7-9} \cmidrule{11-13} 
 \texttt{RSVP} & & 1.5774e-04 & & 5.7347 & & 5.2470e-05 & & 3.8649 & & 2.9583e-05 & & 4.6548 \\ 
\texttt{SparseApproxSDP} & & 4.1639e-03 & & 16.1074 & & 2.2011e-03 & & 33.7608 & & 1.7631e-03 & & 85.0633 \\ 
\texttt{AccUniPDGrad} & & 3.5122e-05 & & 1.1006 & & 2.4634e-05 & & 1.8428 & & 1.7719e-05 & & 3.9440 \\ 
\texttt{ProjFGD} & & 2.4388e-05 & & 0.6918 & & 1.5431e-05 & & 0.8994 & & 1.0561e-05 & & 1.8804 \\ 
 \midrule 
 & \phantom{a} & \multicolumn{3}{c}{$q = 10$, $C_{\rm sam} = 3$.} & \phantom {a} & \multicolumn{3}{c}{$q = 10$, $C_{\rm sam} = 6$.} & \phantom {a} & \multicolumn{3}{c}{$q = 10$, $C_{\rm sam} = 10$.}\\
 \cmidrule {3-5} \cmidrule{7-9} \cmidrule{11-13} 
\texttt{RSVP} & & 4.6056e-04 & & 379.8635 & & 1.8017e-04 & & 331.1315 & & 9.7585e-05 & & 307.9554 \\ 
\texttt{SparseApproxSDP} & & 3.6310e-03 & & 658.7082 & & 2.1911e-03 & & 1326.5374 & & 1.7687e-03 & & 2245.2301 \\ 
\texttt{AccUniPDGrad} & & 3.0456e-05 & & 33.3585 & & 1.9931e-05 & & 56.9693 & & 4.5022e-05 & & 88.2965 \\ 
\texttt{ProjFGD} & & 9.2352e-06 & & 13.9547 & & 5.8515e-06 & & 19.3982 & & 1.0460e-05 & & 49.4528 \\ 
 \midrule 
 & \phantom{a} & \multicolumn{3}{c}{$q = 12$, $C_{\rm sam} = 3$.} & \phantom {a} & \multicolumn{3}{c}{$q = 12$, $C_{\rm sam} = 6$.} & \phantom {a} & \multicolumn{3}{c}{$q = 12$, $C_{\rm sam} = 10$.}\\
 \cmidrule {3-5} \cmidrule{7-9} \cmidrule{11-13} 
\texttt{RSVP} & & 4.7811e-03 & & 14029.1525 & & 1.0843e-03 & & 15028.2836 & & 5.6169e-04 & & 15067.7249 \\ 
\texttt{SparseApproxSDP} & & 3.1717e-03 & & 13635.4238 & & 3.6954e-03 & & 15041.6235 & & 5.0197e-03 & & 15051.4497 \\ 
\texttt{AccUniPDGrad} & & 8.8050e-05 & & 461.2084 & & 5.2367e-05 & & 904.0507 & & 4.5660e-05 & & 1759.6698  \\ 
\texttt{ProjFGD} & & 8.4761e-06 & & 266.8203 & & 4.7399e-06 & & 440.7193 & & 1.1871e-05 & & 1159.2885 \\ 
\bottomrule
\end{tabular}
\end{scriptsize}
\caption{\textbf{Quantum state tomography:} Summary of comparison results for reconstruction and efficiency. As a stopping criterion, we used $\sfrac{\|\X_{i+1} - \X_{i}\|_2}{\|\X_{i+1}\|_2} \leq 5 \cdot 10^{-6}$, where $\X_i$ is the estimate at the $i$-th iteration. Time reported is in seconds. Initial point is $U_0 = \Pi_{\C}(\widetilde{\U_0})$ such that $\X_0 = \widetilde{\U}_0 \widetilde{\U}_0^\top$ where $\X_0 = \Pi_{+}\left(-\mathcal{A}^*(y)\right)$.} \label{tbl:Comp}
\end{table*}

\begin{figure*}[h!]
\centering
\includegraphics[width=0.24\textwidth]{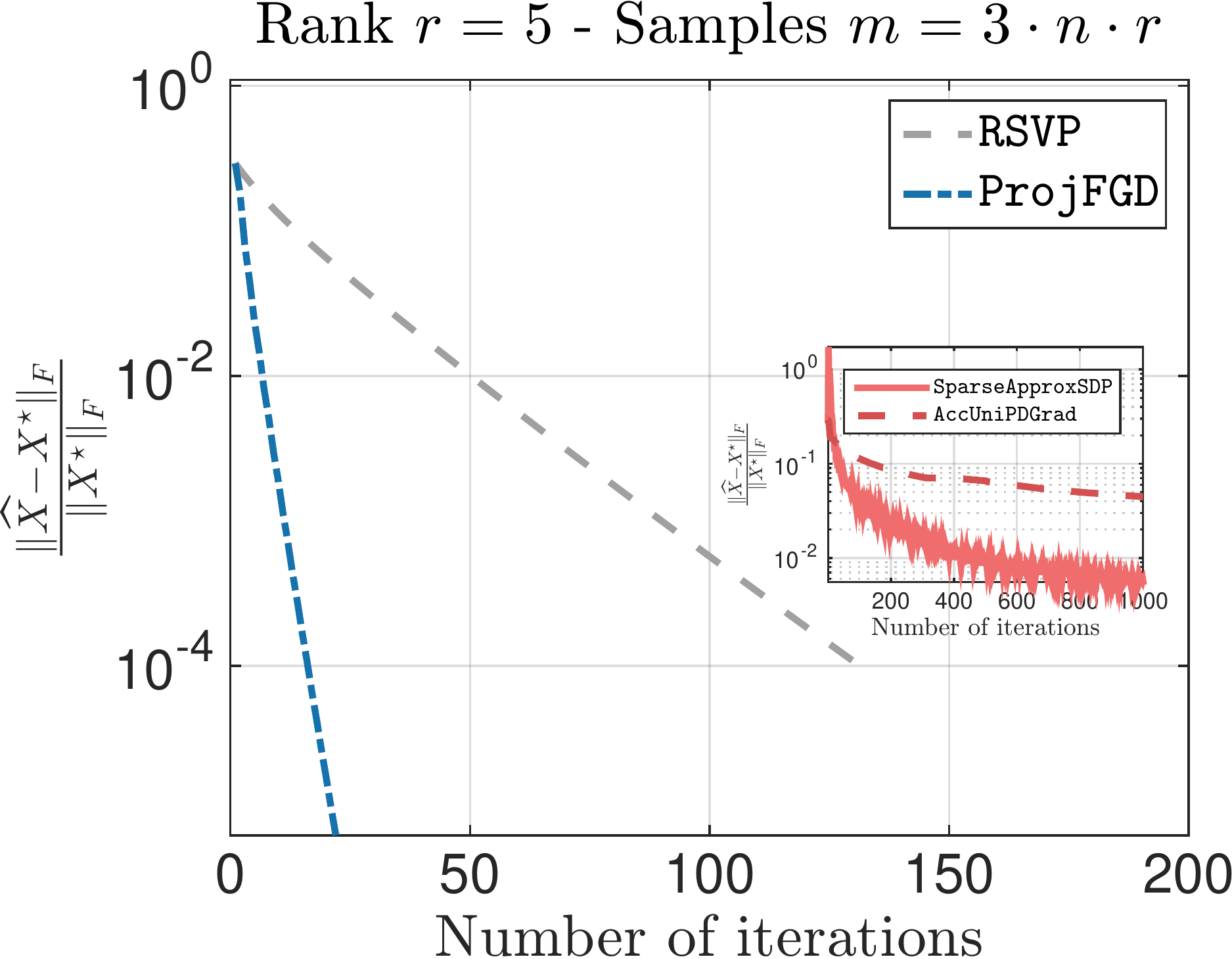} 
\includegraphics[width=0.24\textwidth]{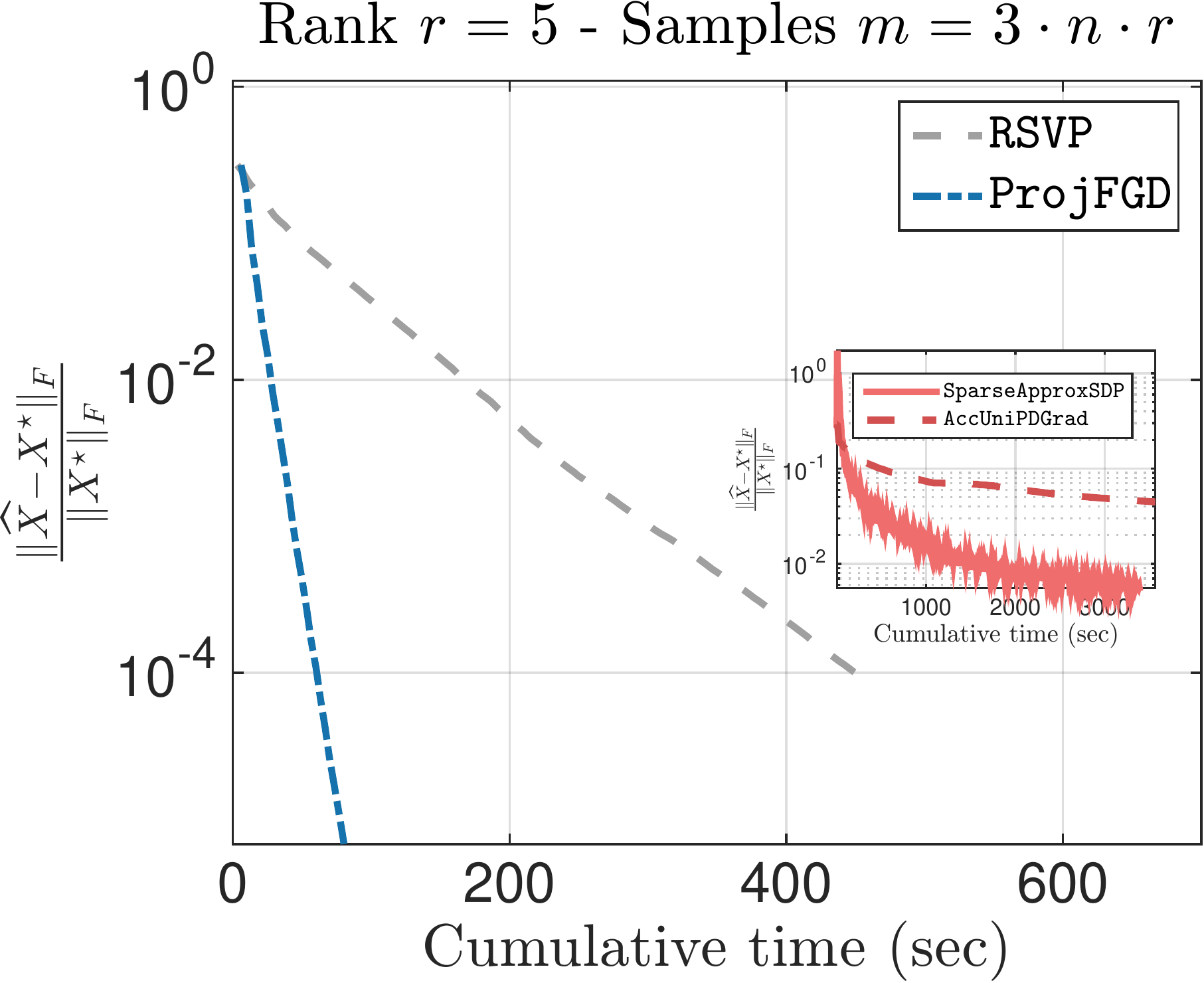}
\includegraphics[width=0.24\textwidth]{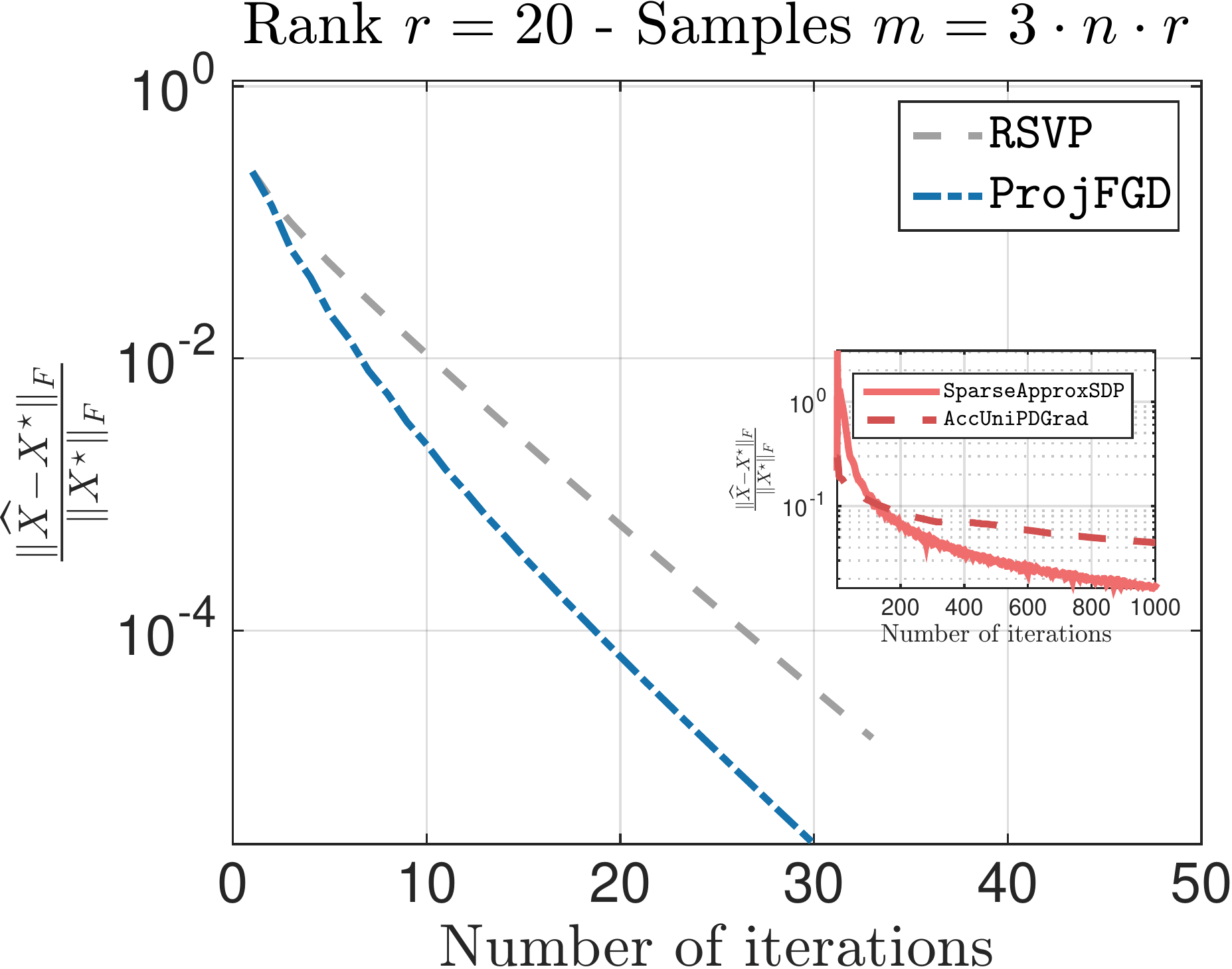} 
\includegraphics[width=0.24\textwidth]{./figs/QST_q_10_C_3_r_20_init_ours_XcurvesTime} 
\caption{\textbf{Quantum state tomography:} Convergence performance of algorithms under comparison w.r.t. $\tfrac{\|\widehat{X} - \X^\star\|_F}{\|\X^\star\|_F}$ vs. $(i)$ the total number of iterations (left) and $(ii)$ the total execution time (right). The two left plots correspond to the case $r = 5$ and the two right plots to the case $r = 20$. In all cases $C_{\rm sam} = 3$ and $q = 10$. Initial point is $U_0 = \Pi_{\C}(\widetilde{\U_0})$ such that $\X_0 = \widetilde{\U}_0 \widetilde{\U}_0^\top$ where $\X_0 = \Pi_{+}\left(-\mathcal{A}^*(y)\right)$.
}
\label{fig:app_exp5}
\end{figure*}

\begin{figure*}[t!]
\centering
\includegraphics[width=0.32\textwidth]{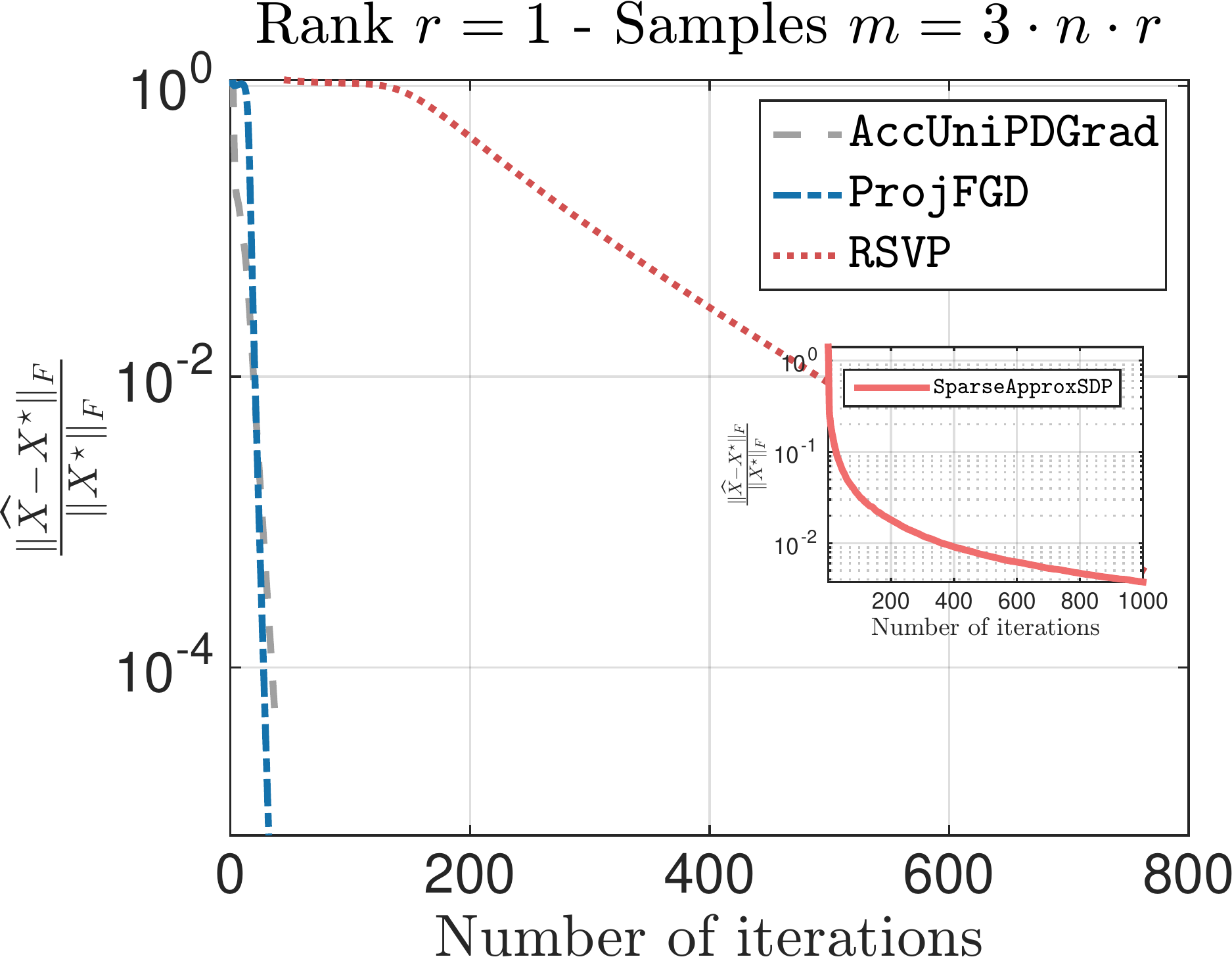} 
\includegraphics[width=0.32\textwidth]{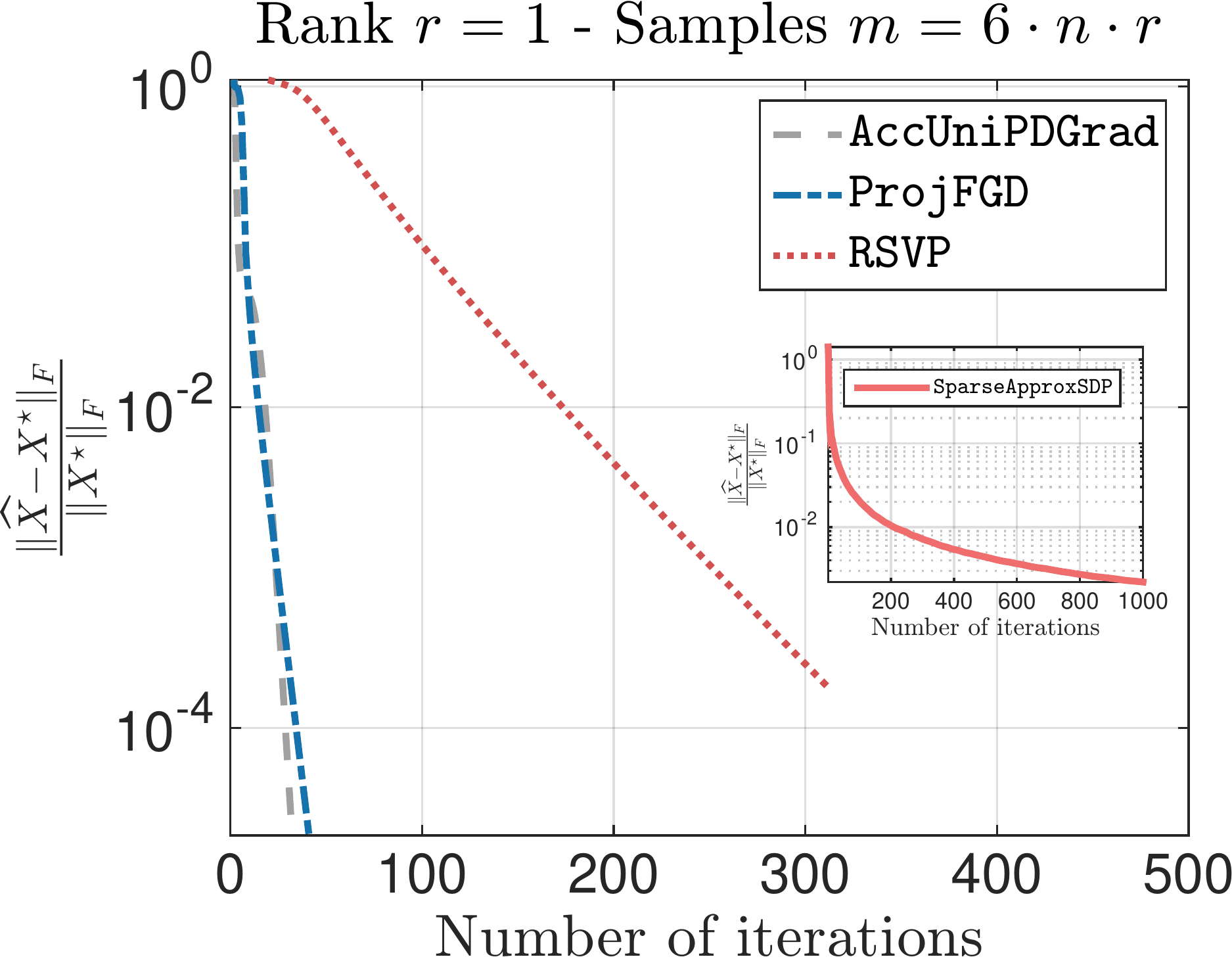}
\includegraphics[width=0.32\textwidth]{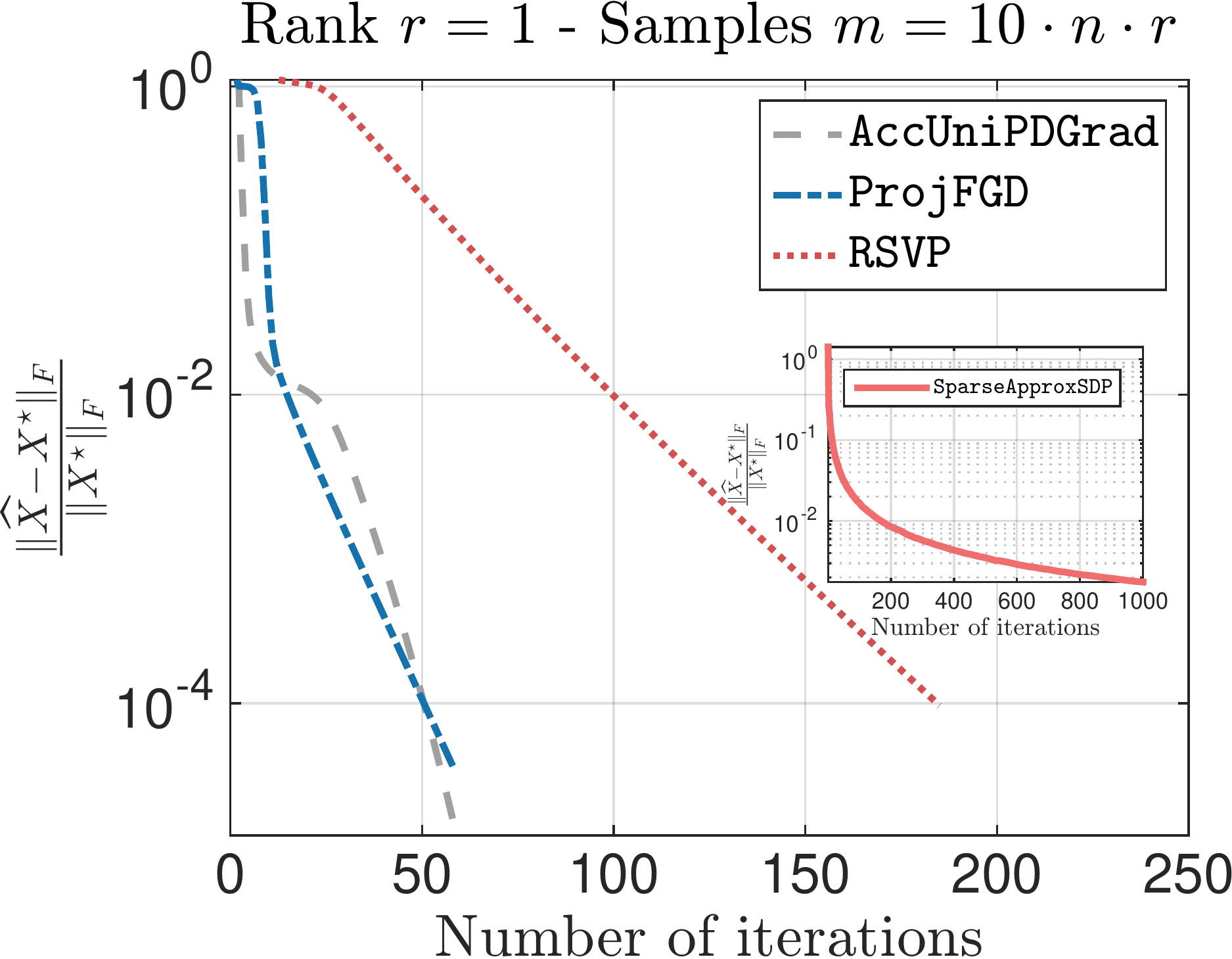} \\
\includegraphics[width=0.32\textwidth]{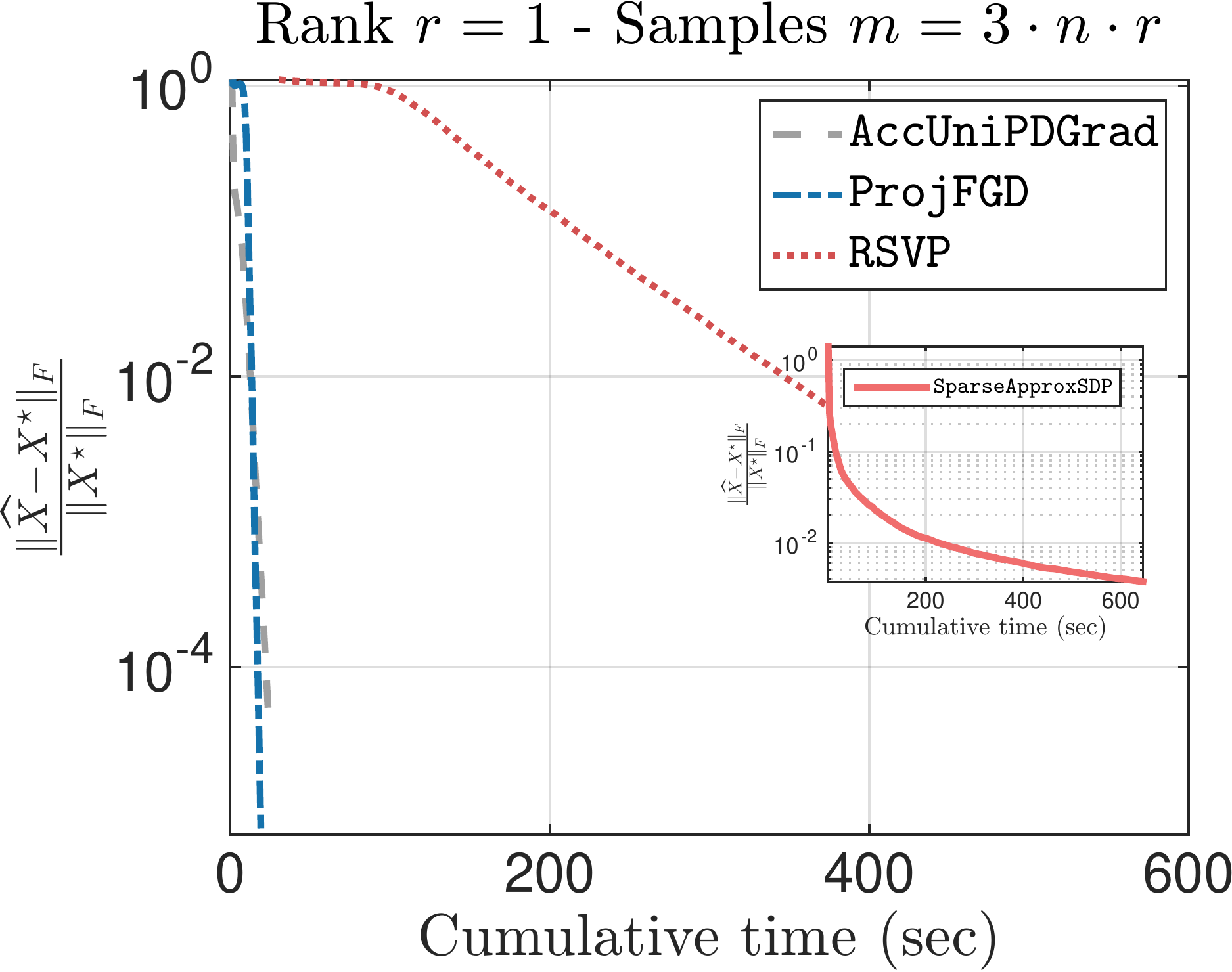} 
\includegraphics[width=0.32\textwidth]{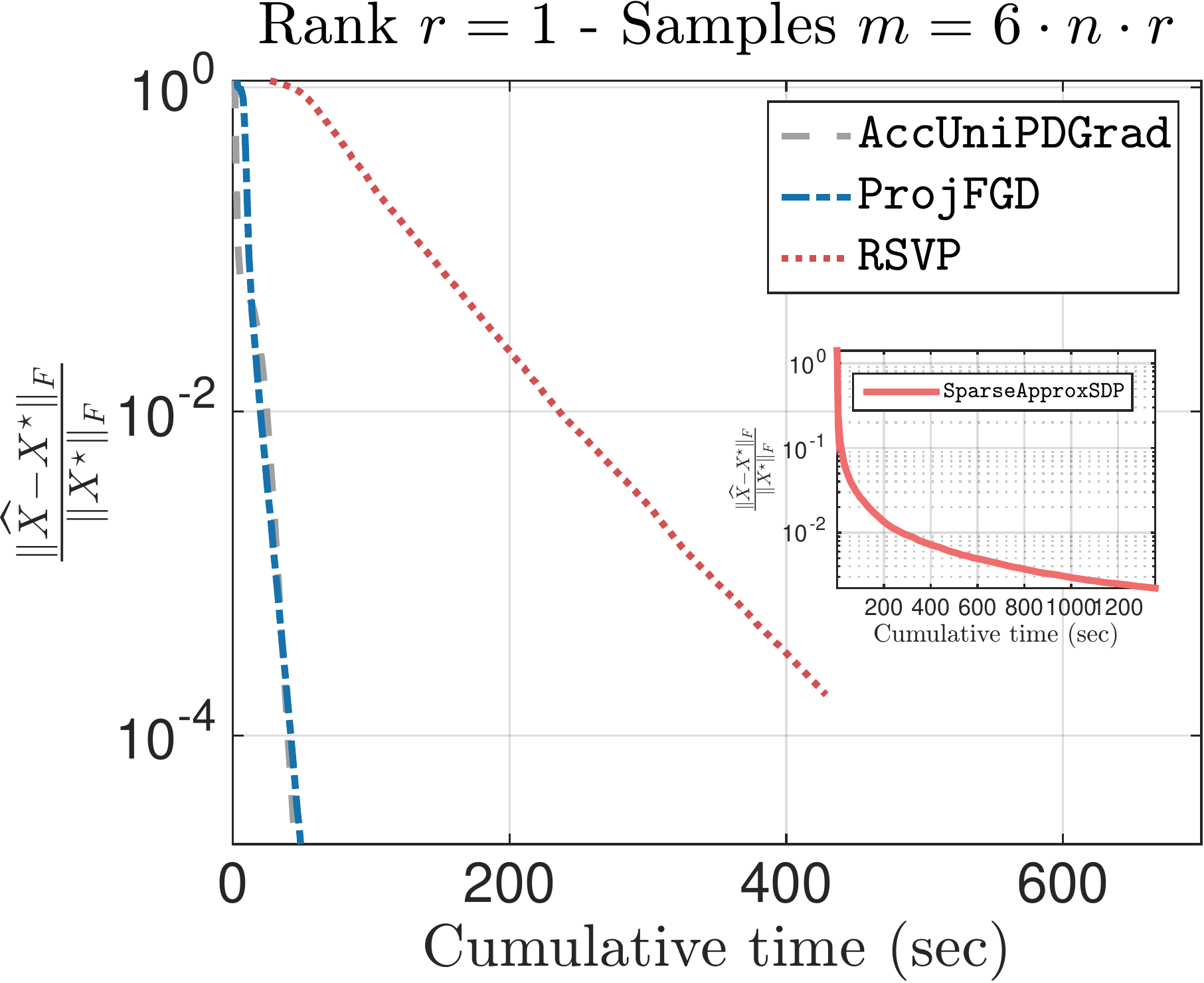}
\includegraphics[width=0.32\textwidth]{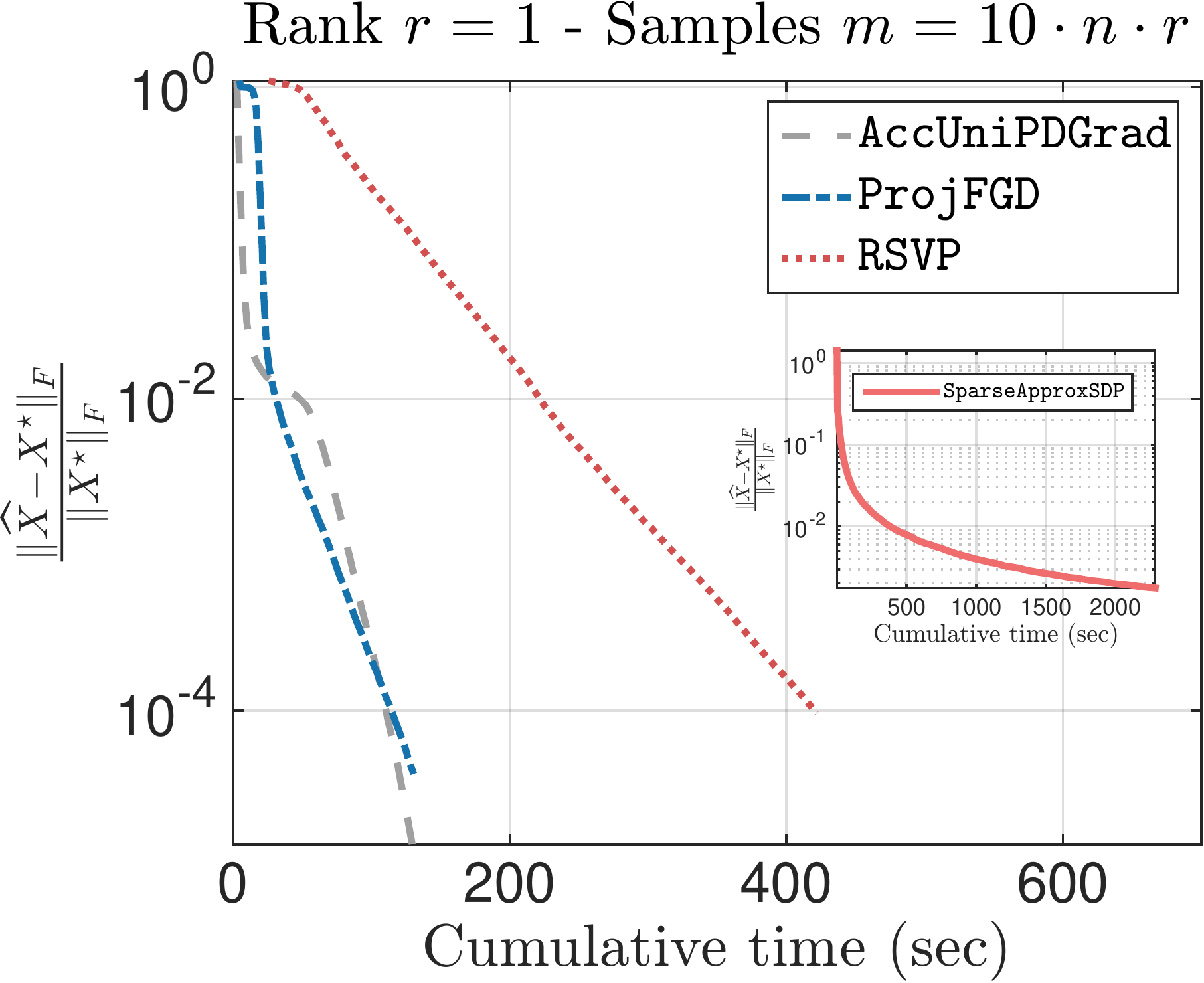}
\caption{\textbf{Quantum state tomography:} Convergence performance of algorithms under comparison w.r.t. $\tfrac{\|\widehat{X} - \X^\star\|_F}{\|\X^\star\|_F}$ vs. $(i)$ the total number of iterations (left) and $(ii)$ the total execution time (right). All results correspond to executions starting from a \textbf{random initialization} (but common to all algorithms). In all cases $r = 1$ and $q = 10$.
}
\label{fig:app_exp6}
\end{figure*}

\begin{table*}[!th]
\centering
\ra{1.3}
\begin{scriptsize}
\rowcolors{2}{white}{black!05!white}
\begin{tabular}{c c c c c c c c c c c c c} \toprule
& \phantom{a} & \multicolumn{3}{c}{$q = 10$, $C_{\rm sam} = 3$.} & \phantom {a} & \multicolumn{3}{c}{$q = 10$, $C_{\rm sam} = 6$.} & \phantom {a} & \multicolumn{3}{c}{$q = 10$, $C_{\rm sam} = 10$.}\\
\cmidrule {3-5} \cmidrule{7-9} \cmidrule{11-13} 
Algorithm & \phantom{a} & $\tfrac{\|\widehat{\X} - \X^\star\|_F}{\|\X^\star\|_F}$ & \phantom{a}  & Total time 
			   & \phantom{a} & $\tfrac{\|\widehat{\X} - \X^\star\|_F}{\|\X^\star\|_F}$ & \phantom{a}  & Total time 
			   & \phantom{a} & $\tfrac{\|\widehat{\X} - \X^\star\|_F}{\|\X^\star\|_F}$ & \phantom{a}  & Total time  \\
\cmidrule{1-1} \cmidrule {3-3} \cmidrule{5-5} \cmidrule{7-7} \cmidrule{9-9} \cmidrule {11-11} \cmidrule{13-13}
\texttt{RSVP} & &  4.5667e-04 & &  545.5525 & & 1.8550e-05 & & 0.3791 & & 1.5774e-04 & & 5.7347 \\ 
\texttt{SparseApproxSDP} & & 3.7592e-03 & &  646.3486 & &  2.2469e-03 & &  4.3775 & &  4.1639e-03 & &  16.1074 \\ 
\texttt{AccUniPDGrad} & & 3.6465e-05 & &  24.8531 & &  2.4064e-05 & &  0.3311 & &  3.5122e-05 & &  1.1006 \\ 
\texttt{ProjFGD} & & 7.0096e-06 & &  19.5502 & &  1.6052e-05 & &  0.0441 & &  2.4388e-05 & &  0.6918 \\ 
\bottomrule
\end{tabular}
\end{scriptsize}
\caption{\textbf{Quantum state tomography:} Summary of comparison results for reconstruction and efficiency for \textbf{random initialization}. As a stopping criterion, we used $\sfrac{\|\X_{i+1} - \X_{i}\|_2}{\|\X_{i+1}\|_2} \leq 5 \cdot 10^{-6}$, where $\X_i$ is the estimate at the $i$-th iteration. Time reported is in seconds.} \label{tbl:Comp3}
\end{table*}

\clearpage
%!TEX root = ProjFGDmain.tex

%\section{Appendix A}
\section*{Proofs of local convergence of the \palgo }{\label{sec:proof}}
Here, we present the full proof of Theorem \ref{thm:projFGD_guarantees}. 
For clarity, we re-state the problem settings: We consider problem cases such as
\begin{equation}{\label{eq:appe_00}}
\begin{aligned}
	& \underset{\X \in \R^{n \times n}}{\text{minimize}}
	& & f(\X) \quad \quad \text{subject to} \quad \X \succeq 0, ~\X \in \C'.
\end{aligned}
\end{equation} 
We assume the optimum $\X^\star$ satisfies $\text{rank}(\X^\star) = r^\star$.
%\footnote{Thus, one can remove the rank-constraint from \eqref{eq:appe_00}; moreover, due to strong convexity of $f$, $\Xo$ is unique.} 
For our analysis, we assume we know $r^\star$ and set $r^\star \equiv r$. 
We solve \eqref{eq:appe_00} in the factored space, by considering the criterion:
\begin{equation}{\label{eq:appe_01}}
\begin{aligned}
	& \underset{\U \in \R^{n \times r}}{\text{minimize}}
	& & f(\U\U^\top) \quad \quad \text{subject to} \quad \U \in \C.
\end{aligned}
\end{equation} 
By faithfulness of $\C$ (Definition \ref{prelim:def_02}), we assume that $\E \subseteq \C$. 
This means that the feasible set $\C$ in \eqref{eq:appe_01} contains all matrices $\Uo$ that lead to $\Xo = \Uo \U^{\star\top}$ in \eqref{eq:appe_00}.
Moreover, we assume both $\C, ~\C'$ are convex sets and there exists a ``mapping" of $\C'$ onto $\C$, such that the two constraints are ``equivalent": for any $\U \in \C$, we are guaranteed that $\X = \U\U^\top \in \C'$. 
We restrict our discussion on norm-based sets for $\C$	such that \eqref{intro:eq_proj} is satisfied.
As a representative example, in our analysis consider the case where, for any $\X = \U\U^\top$, $\trace(\X) \leq 1 \Leftrightarrow \|\U\|_F^2 \leq 1$. 

For our analysis, we will use the following step sizes: 
\begin{align*}
\widehat{\eta} = \tfrac{1}{128(L\|\X_t\|_2 + \|\Q_{\U_t} \Q_{\U_t}^\top\gradf(\X_t)\|_2)}, ~~\eta^\star = \tfrac{1}{128(L\|\X^\star\|_2 + \|\gradf(\X^\star)\|_2)}.
\end{align*}
By Lemma A.5 in \cite{bhojanapalli2015dropping}, we know that $\widehat{\eta} \geq \tfrac{5}{6} \eta$ and $\tfrac{10}{11}\eta^\star \leq \eta \leq \tfrac{11}{10} \eta^\star$. 
In our proof, we will work with step size $\widehat{\eta}$, which is equivalent --up to constants-- to the original step size $\eta$ in the algorithm. 

For ease of exposition, we re-define the sequence of updates:
$\U_t $ is the current estimate in the factored space, 
$\widetilde{\U}_{t+1} = \U_t - \widehat{\eta} \gradf(\X_t)\U_t$ is the putative solution after the gradient step (observe that $\widetilde{\U}_{t+1}$ might belong in $\C$),
and $\U_{t+1} = \Pi_{\C}(\widetilde{U}_{t+1})$ is the projection step onto $\C$. 
Observe that for the constraint cases we consider in this paper, $\U_{t+1} = \Pi_{\C}(\widetilde{\U}_{t+1}) = \xi_t(\widetilde{\U}_{t+1}) \cdot \widetilde{U}_{t+1}$, where $\xi_t(\cdot) \in (0, 1)$; in the case $\xi_t(\cdot) = 1$, the algorithm boils down to the \textsc{Fgd} algorithm.
For simplicity, we drop the subscript and the parenthesis of the $\xi$ parameter; these values are apparent from the context.
% Given $\U_{t+1}$, we define $\X_{t} = \U_{t} \U_{t}^\top$. 

%\subsection{Assumptions}
We assume that \palgo is initialized with a ``good'' starting point $\X_0 = \U_0 \U_0^\top$, such that:
\begin{itemize}
\item [$(A1)$] \quad $\U_0 \in \C$ \quad and \quad $ \dist(\U_0, \Uo) \leq \rho' \sigma_{r}(\Uo)$ ~~\text{for } $\rho' := c \cdot \tfrac{\mu}{L} \cdot \tfrac{\sigma_r(\Xo)}{\sigma_1(\Xo)}  $, where $c \leq \tfrac{1}{200}$.
\end{itemize}
By the assumptions above, $\X_0 = \U_0\U_0^\top \in \C'$. 
%Moreover, by faithfulness Definition \ref{prelim:def_04}, $\Uo \in \E$ and $\E \subseteq \C$.
Next, we show that the above lead to a local convergence result. 
A practical initialization procedure is given in Section \ref{sec:init} and follows from  \cite{bhojanapalli2015dropping}; this also is used in the experimental section \ref{sec:motiv}.
	
\begin{wrapfigure}[9]{r}{4cm}
	\centering
	\vspace{-1cm}
	\includegraphics[width=0.2\columnwidth]{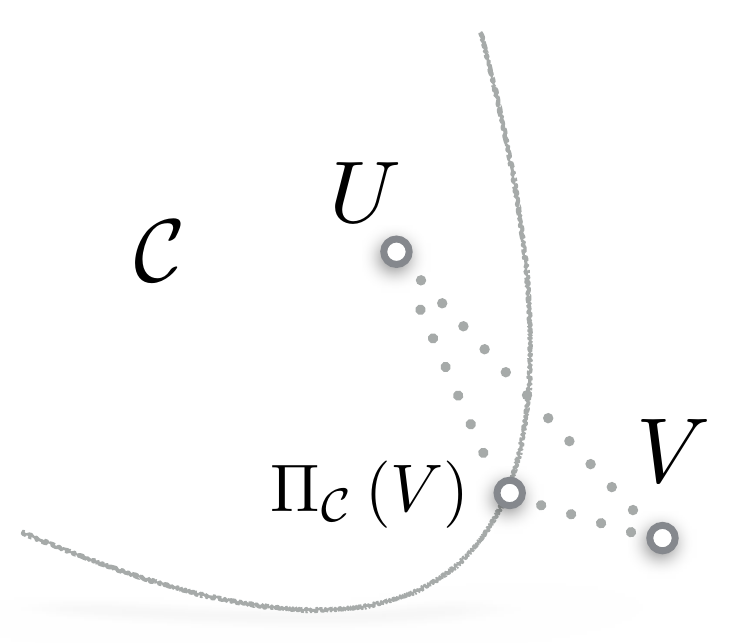}
	\caption{Illustration of Lemma \ref{lem:proj}}{\label{fig:proj}}
\end{wrapfigure}
\subsection{Proof of Theorem \ref{thm:projFGD_guarantees}}
For our analysis, we make use of the following lemma \cite[Chapter 3]{bubeck2014theory}, which characterizes the effect of projections onto convex sets w.r.t. to inner products, as well as provides 
a type-of triangle inequality for such projections; see also Figure \ref{fig:proj} for a simple illustration.
\begin{lemma}{\label{lem:proj}}
Let $\U \in \C \subseteq \R^{n \times r}$ and $\V \in \R^{n \times r}$ where $\V \notin \C$. Then,
\begin{align}\label{eq:proj_00}
\left\langle \Pi_\C(\V) - \U, \V - \Pi_\C(\V) \right\rangle \geq 0.
\end{align} 
\end{lemma}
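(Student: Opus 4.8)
The plan is to invoke the standard variational characterization of the Euclidean projection together with a first-order optimality argument along a line segment. Write $P := \Pi_\C(\V)$, so that by definition $P$ minimizes $\W \mapsto \tfrac12\|\W - \V\|_F^2$ over $\W \in \C$. Since $\C$ is convex and $P, \U \in \C$, for every $t \in [0,1]$ the point $\W_t := (1-t)P + t\U = P + t(\U - P)$ also lies in $\C$, and is therefore a feasible competitor against $P$ in the projection problem.

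Next I would examine the scalar function $\phi(t) := \tfrac12\|\W_t - \V\|_F^2$ on $[0,1]$. Optimality of $P$ forces $\phi(t) \ge \phi(0)$ for all $t \in [0,1]$, hence the one-sided derivative satisfies $\phi'(0^+) \ge 0$. Expanding, $\phi(t) = \tfrac12\|P - \V\|_F^2 + t\,\langle P - \V,\ \U - P\rangle + \tfrac{t^2}{2}\|\U - P\|_F^2$, so $\phi'(0^+) = \langle P - \V,\ \U - P\rangle$, which gives $\langle P - \V,\ \U - P\rangle \ge 0$.

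The proof then concludes by rearranging signs: $\langle P - \V,\ \U - P\rangle \ge 0$ is equivalent to $\langle \V - P,\ \U - P\rangle \le 0$, i.e. $\langle P - \U,\ \V - P\rangle \ge 0$, which is precisely \eqref{eq:proj_00} once we substitute $P = \Pi_\C(\V)$. I do not expect a genuine obstacle here — the statement is classical — so the ``main difficulty'' is really just bookkeeping: (i) keeping $\W_t$ feasible, which is exactly where convexity of $\C$ is used, and (ii) tracking signs correctly in the final rearrangement. I would also remark in passing that the hypothesis $\V \notin \C$ is inessential for the inequality itself (it merely excludes the trivial case $\Pi_\C(\V) = \V$), and that differentiability of $\phi$ is automatic, since $\phi$ is a quadratic polynomial in $t$.
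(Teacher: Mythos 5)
Your proof is correct and is the classical first-order optimality argument for Euclidean projections onto convex sets; the paper does not prove this lemma itself but simply cites \cite[Chapter 3]{bubeck2014theory}, and your argument is exactly the standard one found there. The sign bookkeeping checks out, and your side remark that $\V \notin \C$ is inessential (the inequality holds with equality, trivially, when $\Pi_\C(\V) = \V$) is also accurate.
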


\begin{proof}[Proof of Theorem \ref{thm:projFGD_guarantees}]
We start with the following series of (in)equalities:
\begin{align*}
\dist\left(\U_{t+1}, ~\Uo\right)^2 \nonumber &= \min_{R \in \mathcal{O}} \|\U_{t+1} - \Uo R\|_F^2 \nonumber \\ 
											  &\stackrel{(i)}{\leq} \|\U_{t+1} - \Uo \Rus\|_F^2 \\
											  &\stackrel{(ii)}{=} \|\U_{t+1} - \Uw_{t+1} + \Uw_{t+1} - \Uo \Rus\|_F^2 \\
											  &= \|\U_{t+1} - \Uw_{t+1}\|_F^2 + \|\Uw_{t+1} - \Uo \Rus\|_F^2 \\
											  &\quad \quad \quad \quad \quad \quad \quad \quad ~+ 2\left\langle \U_{t+1} - \Uw_{t+1}, ~\Uw_{t+1} - \Uo \Rus \right\rangle,
\end{align*} where $(i)$ is due to the fact $\Rus := \argmin_{R \in \mathcal{O}} \|\U_{t} - \Uo R\|_F^2$, $(ii)$ is obtained by adding and subtracting $\Uw_{t+1}$.

Focusing on the second term of the right hand side, we substitute $\Uw_{t+1}$ to obtain:
\begin{align*}
\|\Uw_{t+1} - \Uo \Rus\|_F^2 \nonumber &= \|\U_t - \weta \gradf\left(\U_t \U_t^\top\right)\U_t - \Uo \Rus\|_F^2 \\
										 &= \|\U_t - \Uo \Rus\|_F^2 + \weta^2 \|\gradf \left(\U_t \U_t^\top\right) \U_t\|_F^2 \nonumber \\ 
										 &\quad \quad \quad \quad \quad \quad \quad \quad~- 2\weta\left\langle \gradf \left(\U_t \U_t^\top\right) \U_t,~\U_t - \Uo \Rus \right\rangle
\end{align*} 
Then, our initial equation transforms into:
\begin{align*}
\dist\left(\U_{t+1}, ~\Uo\right)^2 \nonumber &\leq \|\U_{t+1} - \Uw_{t+1}\|_F^2 + \dist\left(\U_t, ~\Uo\right)^2 + \weta^2 \|\gradf \left(\U_t \U_t^\top\right) \U_t\|_F^2 \nonumber \\ 
&\quad \quad \quad \quad \quad \quad \quad \quad~- 2\weta \left\langle \gradf \left(\U_t \U_t^\top\right) \U_t,~\U_t - \Uo \Rus \right\rangle \nonumber \\ 
&\quad \quad \quad \quad \quad \quad \quad \quad~+ 2\left\langle \U_{t+1} - \Uw_{t+1}, ~\Uw_{t+1} - \Uo \Rus \right\rangle
\end{align*}
Focusing further on the last term of the expression above, we obtain:
\begin{align*}
\left\langle \U_{t+1} - \Uw_{t+1}, ~\Uw_{t+1} - \Uo \Rus \right\rangle &= \left\langle \U_{t+1} - \Uw_{t+1}, ~\Uw_{t+1} - \U_{t+1} + \U_{t+1} - \Uo \Rus \right\rangle \\
&= \left\langle \U_{t+1} - \Uw_{t+1}, ~\Uw_{t+1} - \U_{t+1} \right\rangle \nonumber \\ 
&\quad \quad \quad \quad \quad \quad \quad+ \left\langle \U_{t+1} - \Uw_{t+1}, ~\U_{t+1} - \Uo \Rus \right\rangle
\end{align*} 
Observe that, in the special case where $\Uw_{t+1} \equiv \U_{t+1}$ for all $t$, \emph{i.e.}, the iterates are always within $\C$ before the projection step, the above equation equals to zero and the recursion is identical to that of \cite{bhojanapalli2015dropping}[Proof of Theorem 4.2]. 
Here, we are more interested in the case where $\Uw_{t+1} \not\equiv \U_{t+1}$ for some $t$---thus $\Uw_{t+1} \not\in \C$. 
By faithfulness (Definition \ref{prelim:def_04}), observe that $\Uo \Rus \in \C$ and $\Xo = \Uo \Rus \left(\Uo \Rus\right)^\top = \Uo \U^{\star \top}$.
Moreover, $\U_{t+1} = \Pi_{\C}(\Uw_{t+1})$:
Then, according to Lemma \ref{lem:proj} and focusing on eq. \eqref{eq:proj_00}, for $U := \Uo \Rus$ and $V := \Uw_{t+1}$, the last term in the above equation satisfies: 
\begin{align*}
\left\langle \U_{t+1} - \Uw_{t+1}, ~\U_{t+1} - \Uo \Rus \right\rangle \leq 0,
\end{align*} 
and, thus, the expression above becomes:
\begin{align*}
\left\langle \U_{t+1} - \Uw_{t+1}, ~\Uw_{t+1} - \Uo \Rus \right\rangle \leq - \|\U_{t+1} - \Uw_{t+1}\|_F^2.
\end{align*}
Therefore, going back to the original recursive expression, we obtain:
\begin{align*}
\dist\left(\U_{t+1}, ~\Uo\right)^2 &\leq - \|\U_{t+1} - \Uw_{t+1}\|_F^2 + \dist\left(\U_t, ~\Uo\right)^2 + \weta^2 \|\gradf \left(\U_t \U_t^\top\right) \U_t\|_F^2 \nonumber \\ 
&\quad \quad \quad \quad \quad  \quad \quad \quad \quad- 2\weta\left\langle \gradf \left(\U_t \U_t^\top\right) \U_t, \U_t - \Uo \Rus \right\rangle 
\end{align*} 

For the last term, we use the descent lemma~\ref{lem:gradU,U-U_r_ bound} in the main text; the proof is provided in Section \ref{descent_lemma_proof}.
%\begin{lemma}[Descent lemma in the projected case]\label{lem:gradU,U-U_r_ bound}
%Let $\xi \gtrsim 0.78$ and consider the assumptions made above and for Theorem \ref{thm:projFGD_guarantees}. For $f$ being $L$-smooth and $\mu$-strongly convex function, the following inequality holds true:
%\begin{align*}
%2 \weta \big \langle \gradf(\U_t\U_t^\top) \cdot \U_t, ~\U_t - \Uo R_{\U_t}^\star \big \rangle &+  \|\U_{t+1} - \widetilde{\U}_{t+1}\|_F^2 \geq \nonumber \\ 
%&\weta^2\|\gradf(\U_t\U_t^\top) U_t\|_F^2 +  \tfrac{3\weta \mu}{10} \cdot \sigma_r(\Xo) \cdot \dist(\U_t, \Uo)^2.
%\end{align*}
%\end{lemma}
Thus, we can conclude that:
\begin{align*}
\dist\left(\U_{t+1}, ~\Uo\right)^2 &\leq \left(1 - \tfrac{3\weta \mu}{10} \cdot \sigma_r(\Xo)\right) \cdot \dist(\U_t, \Uo)^2.
\end{align*} 

%\begin{align*}
%&\dist\left(\U_{t+1}, ~\Uo\right)^2 \\ &\leq \left(1 - \tfrac{3 m \eta}{10} \cdot \sigma_r(\Xo) \right) \cdot \dist\left(\U_t, ~\Uo\right)^2 \\ 
%& \quad \quad \quad \quad \quad \underbrace{- \|\U_{t+1} - \Uw_{t+1}\|_F^2 - \tfrac{1}{3}\eta^2 \|\gradf \left(\U_t \U_t^\top\right) \U_t\|_F^2}_{\leq 0} \\
%&\leq \left(1 - \tfrac{3 m \eta}{10} \cdot \sigma_r(\Xo) \right) \cdot \dist\left(\U_t, ~\Uo\right)^2.
%\end{align*} 

The expression for $\alpha$ is obtained by observing $\weta \geq \tfrac{5}{6} \eta$ and $\tfrac{10}{11}\eta^\star \leq \eta \leq \tfrac{11}{10}\eta^\star$, from Lemma 20 in \cite{bhojanapalli2015dropping}. 
Then, for $\eta^\star \leq \frac{C}{L\norm{\X^\star}_2 + \norm{\gradf(X^\star) }_2}$ and $C = \sfrac{1}{128}$, we have:
\begin{align*}
1 - \frac{3\weta \mu}{10} \cdot \sigma_r(\Xo) &\leq 1 - \frac{3 \cdot \tfrac{10}{11} \cdot \tfrac{5}{6} \eta^\star \mu}{10} \cdot \sigma_r(\Xo) \nonumber \\ 
															&= 1 - \frac{15}{66} \eta^\star \mu \cdot \sigma_r(\Xo) \nonumber \\ 
															&= 1 - \frac{15}{66} \frac{\mu \cdot \sigma_r(\Xo)}{128(L \|\Xo\|_2 + \|\gradf(\Xo)\|_2)} \nonumber \\ 
															&\leq 1 - \frac{\mu \cdot \sigma_r(\Xo)}{550(L \|\Xo\|_2 + \|\gradf(\Xo)\|_2)} =: \alpha
\end{align*} where $\alpha < 1$.

Concluding the proof, the condition $\dist(\U_{t+1}, \Uo)^2 \leq \rho' \sigma_{r}(\Uo)$ is naturally satisfied, since $\alpha < 1$.
\end{proof}

\subsection{Proof of Lemma \ref{lem:gradU,U-U_r_ bound}} {\label{descent_lemma_proof}}
%Instead of plain strong convexity, 
First we recall the definition of \emph{restricted strong convexity}:
\begin{definition}{\label{app:def_00}}
Let $f: \R^{n \times n} \rightarrow \R$ be convex and differentiable. Then, $f$ is $(\mu, r)$-restricted strongly convex if: 
\begin{equation}\label{eq:app_sc}
f(\Y) \geq f(\X) + \ip{\gradf\left(\X\right)}{\Y - \X} + \tfrac{\mu}{2} \norm{Y - \X}_F^2, \quad \text{$\forall \X, \Y \in  \R^{n \times n}$, rank-$r$ matrices.}
\end{equation} 
\end{definition}
The statements below apply also for standard $\mu$-strong convex functions, as defined in Definition \ref{prelim:def_00}.

Recall $\widetilde{U}_{t+1} = \U_t - \weta\gradf(X_t)U_t$ and define $\Delta := \U_t - \Uo R_{\U_t}^\star$. 
Before presenting the proof, we need the following lemma that bounds one of the error terms arising in the proof of Lemma~\ref{lem:gradU,U-U_r_ bound}. 
This is a variation of Lemma 6.3 in \cite{bhojanapalli2015dropping}. 
The proof is presented in Section \ref{sec:DD_proof}.

\begin{lemma}\label{lem:DD_bound_sc}
Let $f$ be $L$-smooth and $(\mu, r)$-restricted strongly convex. 
Then, under the assumptions of Theorem \ref{thm:projFGD_guarantees} and assuming step size $\widehat{\eta} = \tfrac{1}{128(L\|\X\|_2 + \|\gradf(\X_t)\Q_{\U_t} \Q_{\U_t}^\top\|_2)}$,
the following bound holds true:
\begin{align}{\label{eq:appe_combine1}}
\ip{\gradf(X_t) }{ \Delta \Delta^\top} \geq - \tfrac{\weta}{5} \|\gradf(\X_t) U_t \|_F^2 - \tfrac{\mu\sigma_{r}(\Xo)}{10} \cdot \dist(\U_t, \Uo)^2.
\end{align} 
\end{lemma}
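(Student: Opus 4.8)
The plan is to isolate the one feature that distinguishes this statement from Lemma 6.3 of \cite{bhojanapalli2015dropping}, namely that $\gradf(\Xo)$ no longer vanishes, and to absorb its effect into the two budget terms on the right-hand side. Writing $W := \Uo\Rus$ (so that $\Xo = WW^\top$ and $\Delta = \U_t - W$), I would first record the exact algebraic identity
\[
\Delta\Delta^\top = (\X_t - \Xo) - \left(W\Delta^\top + \Delta W^\top\right),
\]
which follows by expanding $(\U_t - W)(\U_t - W)^\top$ and using $WW^\top = \Xo$. Splitting the gradient as $\gradf(\X_t) = \gradf(\Xo) + \left(\gradf(\X_t) - \gradf(\Xo)\right)$, the inner product $\ip{\gradf(\X_t)}{\Delta\Delta^\top}$ breaks into a ``difference'' part and an ``optimum'' part, handled separately.

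For the difference part $\ip{\gradf(\X_t) - \gradf(\Xo)}{\Delta\Delta^\top}$, I would substitute the identity above. The $(\X_t - \Xo)$ piece is bounded below by the combined strong-convexity/smoothness (co-coercivity) inequality
\[
\ip{\gradf(\X_t) - \gradf(\Xo)}{\X_t - \Xo} \geq \tfrac{\mu L}{\mu + L}\norm{\X_t - \Xo}_F^2 + \tfrac{1}{\mu+L}\norm{\gradf(\X_t) - \gradf(\Xo)}_F^2,
\]
which crucially produces a spare $\norm{\gradf(\X_t) - \gradf(\Xo)}_F^2$ reservoir. Using symmetry of the gradient, the cross piece equals $2\ip{(\gradf(\X_t)-\gradf(\Xo))W}{\Delta}$, which I would bound by Cauchy--Schwarz and Young's inequality, pairing $\norm{\gradf(\X_t)-\gradf(\Xo)}_F$ against that reservoir and leaving a residual proportional to $\norm{W}_2^2 \norm{\Delta}_F^2 = \norm{\Xo}_2\,\dist(\U_t,\Uo)^2$. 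To convert this residual into the advertised $\tfrac{\mu\,\sigma_r(\Xo)}{10}$ constant, I would invoke the local-closeness hypothesis $\dist(\U_t,\Uo)\le\rho'\sigma_r(\Uo)$ with $\rho' = c\,\tfrac{\mu}{L}\tfrac{\sigma_r(\Xo)}{\sigma_1(\Xo)}$, together with the standard two-sided comparisons $2(\sqrt2-1)\sigma_r(\Xo)\dist(\U_t,\Uo)^2 \le \norm{\X_t-\Xo}_F^2 \le 4\norm{\Xo}_2\dist(\U_t,\Uo)^2$, so that the condition-number factor $\norm{\Xo}_2/\sigma_r(\Xo)$ hidden in the residual is exactly cancelled by the $\tfrac{\mu}{L}\tfrac{\sigma_r}{\sigma_1}$ factor in $\rho'$.

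For the optimum part $\ip{\gradf(\Xo)}{\Delta\Delta^\top}$, I would use that $\Delta\Delta^\top\succeq0$, giving $\ip{\gradf(\Xo)}{\Delta\Delta^\top}\ge -\norm{\gradf(\Xo)}_2\,\trace(\Delta\Delta^\top) = -\norm{\gradf(\Xo)}_2\,\dist(\U_t,\Uo)^2$, and then relate $\norm{\gradf(\Xo)}_2$ to the step-size quantity $L\norm{\X_t}_2 + \norm{\gradf(\X_t)Q_{\U_t}Q_{\U_t}^\top}_2$ defining $\weta$. This is where the $\tfrac{\weta}{5}\norm{\gradf(\X_t)\U_t}_F^2$ budget enters, via $\gradf(\X_t)\U_t = \gradf(\X_t)Q_{\U_t}Q_{\U_t}^\top\U_t$ and $\sigma_r(\U_t)=\sqrt{\sigma_r(\X_t)}\approx\sqrt{\sigma_r(\Xo)}$, which lets me trade the spectral norm of $\gradf$ restricted to the column space of $\U_t$ against $\norm{\gradf(\X_t)\U_t}_F$. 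Throughout, the closeness hypothesis guarantees $\sigma_r(\X_t)\ge\tfrac12\sigma_r(\Xo)$ and $\norm{\X_t}_2\le 2\norm{\Xo}_2$, so all $\X_t$-dependent constants can be replaced by $\Xo$-dependent ones.

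The main obstacle is precisely the cross term $2\ip{(\gradf(\X_t)-\gradf(\Xo))W}{\Delta}$ and the optimum term $\ip{\gradf(\Xo)}{\Delta\Delta^\top}$: a naive Cauchy--Schwarz bounds them by $O(L\norm{\Xo}_2)\dist(\U_t,\Uo)^2$, which is far larger than the target $\tfrac{\mu\,\sigma_r(\Xo)}{10}\dist(\U_t,\Uo)^2$ and would render the statement vacuous. The whole point is therefore the delicate bookkeeping that $(i)$ feeds the gradient-difference norm into the co-coercivity reservoir rather than against $\norm{\Delta}_F$, $(ii)$ exploits the aggressive smallness $\rho'\sim\tfrac{\mu}{L}\tfrac{\sigma_r}{\sigma_1}$ of the basin radius to kill the condition-number blow-up, and $(iii)$ charges the non-vanishing optimum gradient to the $\weta$-weighted gradient budget --- the three ingredients that together replace the single line ``$\gradf(\Xo)\Uo = 0$'' available to \cite{bhojanapalli2015dropping}. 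Forcing the numerical constants $\tfrac15$ and $\tfrac1{10}$ to emerge exactly is the bulk of the remaining (routine but lengthy) computation.
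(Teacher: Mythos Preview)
Your route is genuinely different from the paper's, and the step you flag as ``routine but lengthy bookkeeping'' is exactly where it breaks down. After Young's inequality absorbs the cross term $2\ip{(\gradf(\X_t)-\gradf(\Xo))W}{\Delta}$ into the co-coercivity reservoir $\tfrac{1}{\mu+L}\norm{\gradf(\X_t)-\gradf(\Xo)}_F^2$, the residual you are left with is of order $(\mu+L)\,\norm{\Xo}_2\,\dist(\U_t,\Uo)^2$, i.e.\ roughly $L\norm{\Xo}_2\,\dist^2$. You claim the factor $\kappa\,\tau(\Xo)$ separating this from the allowed $\tfrac{\mu\sigma_r(\Xo)}{10}\dist^2$ is cancelled by $\rho'$, but $\rho'$ bounds $\dist$ itself, not the coefficient in front of $\dist^2$: replacing one copy of $\dist$ by $\rho'\sigma_r(\Uo)$ turns $\dist^2$ into $\sigma_r(\Uo)\,\dist$, which is \emph{larger} than $\dist^2$ (since $\dist\le\rho'\sigma_r(\Uo)\ll\sigma_r(\Uo)$). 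No allocation between the two budget terms rescues this, because the $\tfrac{\weta}{5}\norm{\gradf(\X_t)\U_t}_F^2$ budget contains the gradient \emph{restricted to the column space of $\U_t$}, whereas your residual carries no gradient at all.

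The paper avoids the split $\gradf(\X_t)=\gradf(\Xo)+(\gradf(\X_t)-\gradf(\Xo))$ and the co-coercivity reservoir entirely. It applies Von Neumann's trace inequality directly:
\[
\ip{\gradf(\X_t)}{\Delta\Delta^\top}\ \ge\ -\norm{Q_\Delta Q_\Delta^\top\gradf(\X_t)}_2\,\dist(\U_t,\Uo)^2,
\]
uses $\text{span}(\Delta)\subseteq\text{span}(\U_t)\cup\text{span}(\Uo)$ together with closeness to replace the $Q_\Delta$- and $Q_{\Uo}$-projected gradient by (a constant times $\tau(\Uo)$ of) the $Q_{\U_t}$-projected gradient, and then multiplies by the identity $1=128\,\weta\bigl(L\norm{\X_t}_2+\norm{Q_{\U_t}Q_{\U_t}^\top\gradf(\X_t)}_2\bigr)$. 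This produces two terms, and a case split on whether $\norm{Q_{\U_t}Q_{\U_t}^\top\gradf(\X_t)}_2$ is below or above $\tfrac{\mu\sigma_r(\X_t)}{21\,\tau(\Uo)}$ routes each into the correct budget. The crucial point is that the bound never leaves the projected-gradient quantity $\norm{Q_{\U_t}Q_{\U_t}^\top\gradf(\X_t)}_2$, which both sits inside $\weta$ and satisfies $\sigma_r(\X_t)\,\norm{Q_{\U_t}Q_{\U_t}^\top\gradf(\X_t)}_2^2\le\norm{\gradf(\X_t)\U_t}_F^2$; only then does the $(\rho')^2$ bound on $\dist^2$ legitimately cancel the $\kappa\,\tau(\Xo)$ factors. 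Your decomposition trades this restricted quantity for the full $\norm{\gradf(\X_t)-\gradf(\Xo)}_F$, and that is the loss you cannot recover.
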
 

Now we are ready to present the proof of Lemma~\ref{lem:gradU,U-U_r_ bound}.
\begin{proof}[Proof of Lemma \ref{lem:gradU,U-U_r_ bound}]
First we rewrite the inner product as shown below.
\begin{align}
\ip{\gradf(X_t)U_t}{\U_t - \Uo R_{\U_t}^\star}&= \ip{\gradf(X_t) }{\X_t - \Uo R_{\U_t}^\star \U_t^\top} \nonumber \\
&=\tfrac{1}{2}\ip{\gradf(X_t) }{\X_t -\Xo}  + \ip{\gradf(X_t) }{\tfrac{1}{2}(\X_t + \Xo) - \Uo R_{\U_t}^\star \U_t^\top} \nonumber \\
&=\tfrac{1}{2}\ip{\gradf(X_t) }{\X_t -\Xo}  + \tfrac{1}{2} \ip{\gradf(X_t) }{\Delta \Delta^\top}, \label{proofsr1:eq_09}
\end{align} 
which follows by adding and subtracting $\tfrac{1}{2}\Xo$.

Let us focus on bounding the first term on the right hand side of \eqref{proofsr1:eq_09}. 
Consider points $\X_t = \U_t\U_t^\top$ and $\X_{t+1} = \U_{t+1} \U_{t+1}^\top$; by assumption, both $\X_t$ and $\X_{t+1}$ are feasible points in \eqref{eq:appe_01}.
By smoothness of $\f$, we get:
\begin{align}
\f(\X_t) &\geq \f(\X_{t+1}) -\ip{\gradf(\X_t)}{\X_{t+1} -\X_t} - \tfrac{L}{2} \norm{\X_{t+1} -\X_t}_F^2 \nonumber \\ 
&\stackrel{(i)}{\geq} \f(\Xo) -\ip{\gradf(\X_t)}{\X_{t+1} -\X_t} - \tfrac{L}{2} \norm{\X_{t+1} -\X_t}_F^2 , \label{eq:gradX,X-X_r_00}
 \end{align}
where $(i)$ follows from optimality of $\Xo$ and since $\X_{t+1}$ is a feasible point $(\X_{t+1} \succeq 0, ~\Pi_{\C'}(\X_{t+1}) = \X_{t+1})$ for problem~\eqref{eq:appe_00}. 
%Here, we remind that $\X_{t+1} = \U_{t+1} \U_{t+1}^\top$ and we use the fact that, since $\U_{t+1} \in \C$ then $\X_{t+1} \in \C'$. 

Moreover, by the $(\mu, r)$-restricted strong convexity of $\f$, we get, 
\begin{align} 
\f(\Xo) \geq \f(\X_t) +\ip{\gradf(\X_t)}{\Xo-\X_t} +\tfrac{\mu}{2}\norm{\Xo -\X_t}_F^2 .\label{eq:gradX,X-X_r_02}
\end{align}
Combining equations~\eqref{eq:gradX,X-X_r_00}, and~\eqref{eq:gradX,X-X_r_02}, we obtain: 
\begin{align} 
\ip{\gradf(\X_t)}{\X_t-\Xo} \geq  \ip{\gradf(\X_t)}{\X_t -\X_{t+1}} -\tfrac{L}{2} \norm{\X_{t+1} -\X_t}_F^2+\tfrac{\mu}{2}\norm{\Xo -\X_t}_F^2  \label{eq:gradX,X-X_r_03} 
\end{align}
By the nature of the projection $\Pi_\C(\cdot)$ step, it is easy to verify that $$\X_{t+1} =  \xi^2 \cdot \left(\X_t - \weta \gradf(\X_t) \X_t \Lambda_t - \weta \Lambda_t^\top \X_t^\top \gradf(\X_t)^\top\right),$$ where $\Lambda_t = I - \tfrac{\weta}{2} Q_{U_t} Q_{U_t}^\top \gradf(X_t) \in \R^{n \times n}$ and $Q_{\U_t} Q_{\U_t}^\top$ denoting the projection onto the column space of $\U_t$.
Notice that, for step size $\weta$, we have 
\begin{align*}
\Lambda_t \succ 0, \quad  \sigma_1\left(\Lambda_t\right) \leq  1+ \sfrac{1}{256} \quad \text{and} \quad \sigma_{n}(\Lambda_t) \geq  1- \sfrac{1}{256}.
\end{align*} 

Using the above $\X_{t+1}$ characterization in~\eqref{eq:gradX,X-X_r_03}, we obtain:
\begin{align}\label{eq:appe_000}
 \ip{\gradf(\X_t)}{\X_t-\Xo} &- \tfrac{\mu}{2}\norm{\Xo -\X_t}_F^2 + \tfrac{L}{2} \norm{\X_t -\X_{t+1}}_F^2 \nonumber \\
 								 &\stackrel{(i)}{\geq} \ip{\gradf(X_t)}{\left(1 - \xi^2\right)X_t} + 
 								 2\weta \cdot \xi^2 \cdot \ip{\gradf(\X_t)}{\gradf(X_t)X_t \Lambda_t} \nonumber \\
 								 &\stackrel{(ii)}{\geq}\left(1 - \xi^2\right) \cdot \ip{\gradf(X_t)U_t}{U_t} + 2\weta \cdot \xi^2 \cdot \trace( \gradf(X_t) \gradf(X_t)X_t) \cdot \sigma_{n}(\Lambda_t) \nonumber \\
 								 &\geq \left(1 - \xi^2\right) \cdot \ip{\gradf(X_t)\U_t}{\U_t} + \tfrac{255\cdot\weta \cdot \xi^2}{128} \|\gradf(X_t) U_t\|_F^2,
 \end{align} 
where: $(i)$ follows from symmetry of $\gradf(X_t)$ and $\X_t$ and, $(ii)$ follows from the sequence equalities an inequalites:
\begin{small}
\begin{align*}
\trace(\gradf(X_t)\gradf(X_t)X_t\Lambda_t) &= \trace(\gradf(X_t)\gradf(X_t)U_tU_t^\top) - \tfrac{\weta}{2}\trace(\gradf(X_t)\gradf(X_t)U_tU_t^\top \gradf(X_t)) \nonumber \\
&\geq \left(1 -\tfrac{\weta}{2} \|Q_{\U_t} Q_{\U_t}^\top\gradf(X_t)\|_2\right) \|\gradf(\X_t) U_t\|_F^2 \nonumber \\ 
&\geq \left(1- \sfrac{1}{256} \right) \|\gradf(\X_t) U_t\|_F^2.
\end{align*} 
\end{small}
 Combining the above in the expression we want to lower bound: $2 \weta \left \langle \gradf(\X_t) \cdot \U_t, \U_t - \Uo R_{\U_t}^\star \right \rangle + \|\U_{t+1} - \widetilde{\U}_{t+1}\|_F^2$, we obtain:
\begin{align}
2 \weta \left \langle \gradf(\X_t) \cdot \U_t, \U_t - \Uo R_{\U_t}^\star \right \rangle &+ \|\U_{t+1} - \widetilde{\U}_{t+1}\|_F^2 \nonumber \\ 
&=\weta \ip{\gradf(X_t) }{\X_t -\Xo}  + \weta \ip{\gradf(X_t) }{\Delta \Delta^\top} + \|\U_{t+1} - \widetilde{\U}_{t+1}\|_F^2 \nonumber \\ 
&\geq \left(1 - \xi^2\right) \cdot \weta \ip{\gradf(X_t)\U_t}{\U_t} + \tfrac{255 \cdot \weta^2 \cdot \xi^2}{128} \|\gradf(X_t) U_t\|_F^2 \nonumber \\
&\quad \quad \quad \quad \quad + \tfrac{\weta \mu}{2}\norm{\Xo -\X_t}_F^2 - \tfrac{\weta L}{2} \norm{\X_t -\X_{t+1}}_F^2 \nonumber \\
&\quad \quad \quad \quad \quad - \tfrac{\weta^2}{5} \|\nabla f(\X_t) \U_t\|_F^2 - \tfrac{\weta \mu \sigma_r(\Xo)}{10} \cdot \dist(U_t, \Uo)^2 \nonumber \\
&\quad \quad \quad \quad \quad + \|\U_{t+1} - \widetilde{\U}_{t+1}\|_F^2 \label{eq:new_p_00}
\end{align} 
 
For the last term in the above expression and given $\U_{t+1} = \Pi_\C\left(\Uw_{t+1}\right) = \xi \cdot \Uw_{t+1}$ for some $\xi \in (0, 1)$, we further observe:
\begin{align*}
\|\U_{t+1} - \widetilde{\U}_{t+1}\|_F^2 &= \|\xi \cdot \widetilde{\U}_{t+1} - \widetilde{\U}_{t+1}\|_F^2 \nonumber \\
&=\left(1 - \xi\right)^2 \cdot \|\U_t\|_F^2 + \left(1 - \xi\right)^2 \weta^2 \cdot \|\gradf(X_t) U_t\|_F^2  \nonumber \\ 
&\quad \quad \quad \quad \quad \quad \quad \quad ~- 2\left(1 - \xi\right)^2 \cdot \weta \cdot \ip{\gradf(X_t)\U_t}{\U_t}
\end{align*} 
Combining the above equality with the first term on the right hand side in \eqref{eq:new_p_00}, we obtain:
\begin{small}
\begin{align*}
\left(1 - \xi^2\right) \cdot \weta \ip{\gradf(X_t)\U_t}{\U_t} &+ \left(1 - \xi\right)^2 \cdot \|\U_t\|_F^2 + \left(1 - \xi\right)^2 \weta^2 \cdot \|\gradf(X_t) U_t\|_F^2  \\&- 2\left(1 - \xi\right)^2 \cdot \weta \cdot \ip{\gradf(X_t)\U_t}{\U_t} = \nonumber \\
\left[\left(1 - \xi^2\right) - 2\left(1 - \xi\right)^2 \right] \cdot \weta \ip{\gradf(X_t)\U_t}{\U_t} &+ \left(1 - \xi\right)^2 \cdot \|\U_t\|_F^2 + \left(1 - \xi\right)^2 \weta^2 \cdot \|\gradf(X_t) U_t\|_F^2 \nonumber = \\
\left(3\xi - 1\right) \left(1 - \xi\right) \cdot \weta \ip{\gradf(X_t)\U_t}{\U_t} &+ \left(1 - \xi\right)^2 \cdot \|\U_t\|_F^2 + \left(1 - \xi\right)^2 \weta^2 \cdot \|\gradf(X_t) U_t\|_F^2 \nonumber = \\
\left\| \tfrac{3\xi - 1}{2} \cdot \U_t + (1 - \xi) \cdot \weta \gradf(\X_t) \cdot \U_t\right\|_F^2 &+ \left(\left(1 - \xi\right)^2 - \tfrac{(3\xi - 1)^2}{4}\right) \|\U_t\|_F^2. %+ \tfrac{\left(1 - \xi\right)^2}{2} \cdot \weta^2 \cdot \|\gradf(X_t) U_t\|_F^2 \nonumber \geq \\
%\left(\left(1 - \xi\right)^2 - \tfrac{(3\xi - 1)^2}{2}\right) \|\U_t\|_F^2 &+ \tfrac{\left(1 - \xi\right)^2}{2} \cdot \weta^2 \cdot \|\gradf(X_t) U_t\|_F^2
\end{align*}
\end{small}
Focusing on the first term, let $\Theta_t := I + \tfrac{2(1-\xi)}{3\xi - 1} \cdot \weta \cdot \gradf(\X_t)\Q_{\U_t} \Q_{\U_t}^\top$; then, $\sigma_n(\Theta_t) \geq 1 - \tfrac{2(1-\xi)}{3\xi - 1} \cdot \tfrac{1}{128} $, by the definition of $\weta$ and the fact that $\weta \leq \tfrac{1}{128 \|\gradf(\X_t)\Q_{\U_t} \Q_{\U_t}^\top\|_2}$. Then:
\begin{align*}
\left\| \tfrac{3\xi - 1}{2} \cdot \U_t + (1 - \xi) \cdot \weta \gradf(\X_t) \cdot \U_t\right\|_F^2 &= \left\| \tfrac{3\xi - 1}{2} \Theta_t \cdot \U_t \right\|_F^2 \nonumber \\ 
&\geq \tfrac{(3\xi - 1)^2}{4} \cdot \|\U_t\|_F^2 \cdot \sigma_n(\Theta_t)^2 \nonumber \\
&\geq \tfrac{(3\xi - 1)^2}{4} \cdot \left(1 - \tfrac{2(1-\xi)}{3\xi - 1} \cdot \tfrac{1}{128} \right)^2 \cdot \|\U_t\|_F^2 \nonumber
\end{align*}
Combining the above, we obtain the following bound:
\begin{align*}
\left(1 - \xi^2\right) \cdot \weta \ip{\gradf(X_t)\U_t}{\U_t} &+ \|\U_{t+1} - \widetilde{\U}_{t+1}\|_F^2 \nonumber \\ &\geq \left((1 - \xi)^2 - \tfrac{(3\xi - 1)^2}{4} \cdot \left(1 - \left(1 - \tfrac{2(1-\xi)}{3\xi - 1} \cdot \tfrac{1}{128} \right)^2\right)\right) \cdot \|\U_t\|_F^2
\end{align*}
The above transform \eqref{eq:new_p_00} as follows:
%where the last inequality is due to the non-negativeness of the term $\left\| \tfrac{3\xi - 1}{\sqrt{2}} \cdot \U_t + \tfrac{1 - \xi}{\sqrt{2}} \cdot \weta \gradf(\X_t) \cdot \U_t\right\|_F^2$. Observe that: $\left(1 - \xi\right)^2 - \tfrac{(3\xi - 1)^2}{2} = -\sfrac{7}{2} \cdot \xi^2 + \xi + \sfrac{1}{2}$.
%Using this last result in \eqref{eq:new_p_00}, we obtain:
\begin{small}
\begin{align}
2 \weta \big \langle &\gradf(\X_t) \cdot \U_t, ~\U_t - \Uo R_{\U_t}^\star \big \rangle +  \|\U_{t+1} - \widetilde{\U}_{t+1}\|_F^2 \nonumber \\ 
&\geq  \left(\tfrac{255 \cdot \xi^2}{128} - \tfrac{1}{5}\right)\cdot \weta^2\|\gradf(X_t) U_t\|_F^2 + \tfrac{\weta \mu}{2}\norm{\Xo -\X_t}_F^2 - \tfrac{\weta \mu \sigma_r(\Xo)}{10} \cdot \dist(U_t, \Uo)^2 \nonumber \\
&\quad \quad~+\left((1 - \xi)^2 - \tfrac{(3\xi - 1)^2}{4} \cdot \left(1 - \left(1 - \tfrac{2(1-\xi)}{3\xi - 1} \cdot \tfrac{1}{128} \right)^2\right)\right) \cdot \|\U_t\|_F^2 - \tfrac{\weta L}{2} \norm{\X_t -\X_{t+1}}_F^2 \label{eq:new_p_01}
\end{align} 
\end{small}
 
Let us focus on the term $\tfrac{\weta L}{2} \norm{\X_t - \X_{t+1}}_F^2$; this can be bounded as follows:
\begin{align*}
\tfrac{\weta L}{2} \norm{X_t - \X_{t+1}}_F^2 &= \tfrac{\weta L}{2} \|\U_t\U_t^\top - \U_{t+1}\U_{t+1}^\top\|_F^2 = \tfrac{\weta L}{2} \|\U_t\U_t^\top - \U_t\U_{t+1}^\top + \U_t\U_{t+1}^\top - \U_{t+1}\U_{t+1}^\top\|_F^2 \\
&= \tfrac{\weta L}{2} \|\U_t\left(\U_t - \U_{t+1}\right)^\top + \left(\U_t - \U_{t+1}\right)\U_{t+1}^\top\|_F^2 \nonumber \\ 
&\stackrel{(i)}{\leq} \weta L \cdot \left(\|\U_t\left(\U_t - \U_{t+1}\right)^\top\|_F^2 + \|\left(\U_t - \U_{t+1}\right)\U_{t+1}^\top\|_F^2\right) \\
&\stackrel{(ii)}{\leq} \weta L \left(\|\U_{t+1}\|_2^2 + \|\U_t\|_2^2\right) \cdot \|\U_{t+1} - \U_t\|_F^2.
\end{align*} where $(i)$ is due to the identity $\|A + B\|_F^2 \leq 2\|A\|_F^2 + 2\|B\|_F^2$ and $(ii)$ is due to the Cauchy-Schwarz inequality. 
By definition of $\U_{t+1}$, we observe that:
\begin{small}
\begin{align*}
\|\U_{t+1}\|_2^2 = \| \xi \cdot \left(\U_t - \weta\nabla f(\X_t)\U_t\right)\|_2^2 \stackrel{(i)}{\leq} \xi^2 \cdot \|\U_t\|_2^2 \cdot \|I - \weta  \nabla f(\X_t)\Q_{\U_t} \Q_{\U_t}^\top\|_2^2 \stackrel{(ii)}{\leq} \left(1 + \tfrac{1}{128}\right)^2 \cdot \|\U_t\|_2^2.
\end{align*}
\end{small} 
where $(i)$ is due to Cauchy-Schwarz and $(ii)$ is obtained by substituting $\weta \leq \tfrac{1}{128\|\nabla f(\X_t)\Q_{\U_t} \Q_{\U_t}^\top\|_2}$ and since $\xi \in (0, 1)$. 
Thus, $\tfrac{\weta L}{2} \norm{\X_t - \X_{t+1}}_F^2$ can be further bounded as follows:
\begin{align*}
\tfrac{\weta L}{2} \norm{\X_t - \X_{t+1}}_F^2 &\leq \weta L \cdot \left( \left(1 + \tfrac{1}{128}\right)^2 + 1\right) \cdot \|\U_t\|_2^2 \cdot \|\U_{t+1} - \U_t\|_F^2 \\ 
&=  \weta L \cdot \left( \left(1 + \tfrac{1}{128}\right)^2 + 1\right) \cdot \|\X_t\|_2 \cdot \|\U_{t+1} - \U_t\|_F^2 \\ 
&\leq \tfrac{ \left(1 + \tfrac{1}{128}\right)^2 + 1}{128 } \cdot \|\U_{t+1} - \U_t\|_F^2 \\
&= \tfrac{ \left(1 + \tfrac{1}{128}\right)^2 + 1}{128 } \cdot \|\xi \cdot \widetilde{\U}_{t+1} - \U_t\|_F^2 \\
&= \tfrac{ \left(1 + \tfrac{1}{128}\right)^2 + 1}{128 } \cdot \|(\xi -1)\U_t - \xi \cdot \weta \gradf(\X_t) \cdot \U_t\|_F^2 \\
&\leq  (1 - \xi)^2 \cdot \tfrac{ \left(1 + \tfrac{1}{128}\right)^2 + 1}{64 } \cdot \|\U_t\|_F^2 + \tfrac{ \left(1 + \tfrac{1}{128}\right)^2 + 1}{64} \cdot \xi^2 \cdot \weta^2 \cdot  \|\gradf(\X_t) \cdot \U_t\|_F^2 \\
\end{align*} 
where in the last inequality we substitute $\weta$; observe that $\weta \leq \tfrac{1}{128 L \|\X_t\|_2}$.
Combining this result with \eqref{eq:new_p_01}, we obtain:
\begin{small}
\begin{align}
2 \weta \big \langle &\gradf(\X_t) \cdot \U_t, ~\U_t - \Uo R_{\U_t}^\star \big \rangle +  \|\U_{t+1} - \widetilde{\U}_{t+1}\|_F^2 \nonumber \\ 
&\geq  \left(\tfrac{255 \cdot \xi^2}{128} - \tfrac{1}{5} -  \tfrac{ \left(1 + \tfrac{1}{128}\right)^2 + 1}{64} \cdot \xi^2\right)\cdot \weta^2\|\gradf(X_t) U_t\|_F^2 + \tfrac{\weta \mu}{2}\norm{\Xo -\X_t}_F^2 - \tfrac{\weta \mu \sigma_r(\Xo)}{10} \cdot \dist(U_t, \Uo)^2 \nonumber \\
&\quad \quad~+\left((1 - \xi)^2\cdot \left(1 - \tfrac{ \left(1 + \tfrac{1}{128}\right)^2 + 1}{64} \right) - \tfrac{(3\xi - 1)^2}{4} \cdot \left(1 - \left(1 - \tfrac{2(1-\xi)}{3\xi - 1} \cdot \tfrac{1}{128} \right)^2\right)\right) \cdot \|\U_t\|_F^2 \nonumber \\ 
&\stackrel{(i)}{\geq} \weta^2\|\gradf(X_t) U_t\|_F^2 + \tfrac{\weta \mu}{2}\norm{\Xo -\X_t}_F^2 - \tfrac{\weta \mu \sigma_r(\Xo)}{10} \cdot \dist(U_t, \Uo)^2 \nonumber \\
&\quad \quad~+\left((1 - \xi)^2\cdot \left(1 - \tfrac{ \left(1 + \tfrac{1}{128}\right)^2 + 1}{64} \right) - \tfrac{(3\xi - 1)^2}{4} \cdot \left(1 - \left(1 - \tfrac{2(1-\xi)}{3\xi - 1} \cdot \tfrac{1}{128} \right)^2\right)\right) \cdot \|\U_t\|_F^2 \nonumber \\ 
&\stackrel{(ii)}{\geq} \weta^2\|\gradf(X_t) U_t\|_F^2 + \tfrac{\weta \mu}{2}\norm{\Xo -\X_t}_F^2 - \tfrac{\weta \mu \sigma_r(\Xo)}{10} \cdot \dist(U_t, \Uo)^2
\label{eq:new_p_02}
\end{align} 
\end{small}
where $(i)$ is due to the assumption $\xi \gtrsim 0.78$ and thus $\big(\tfrac{255 \cdot \xi^2}{128} - \tfrac{1}{5} -  \tfrac{ \left(1 + \tfrac{1}{128}\right)^2 + 1}{64} \cdot \xi^2\big) \geq 1$; see also Figure \ref{fig:appendix_xi} (left panel), 
and $(ii)$ is due to the non-negativity of the constant in front of $\|\U_t\|_F^2$; see also Figure \ref{fig:appendix_xi} (right panel).

\begin{figure}[!ht]
	\centering
	\includegraphics[width=0.36\textwidth]{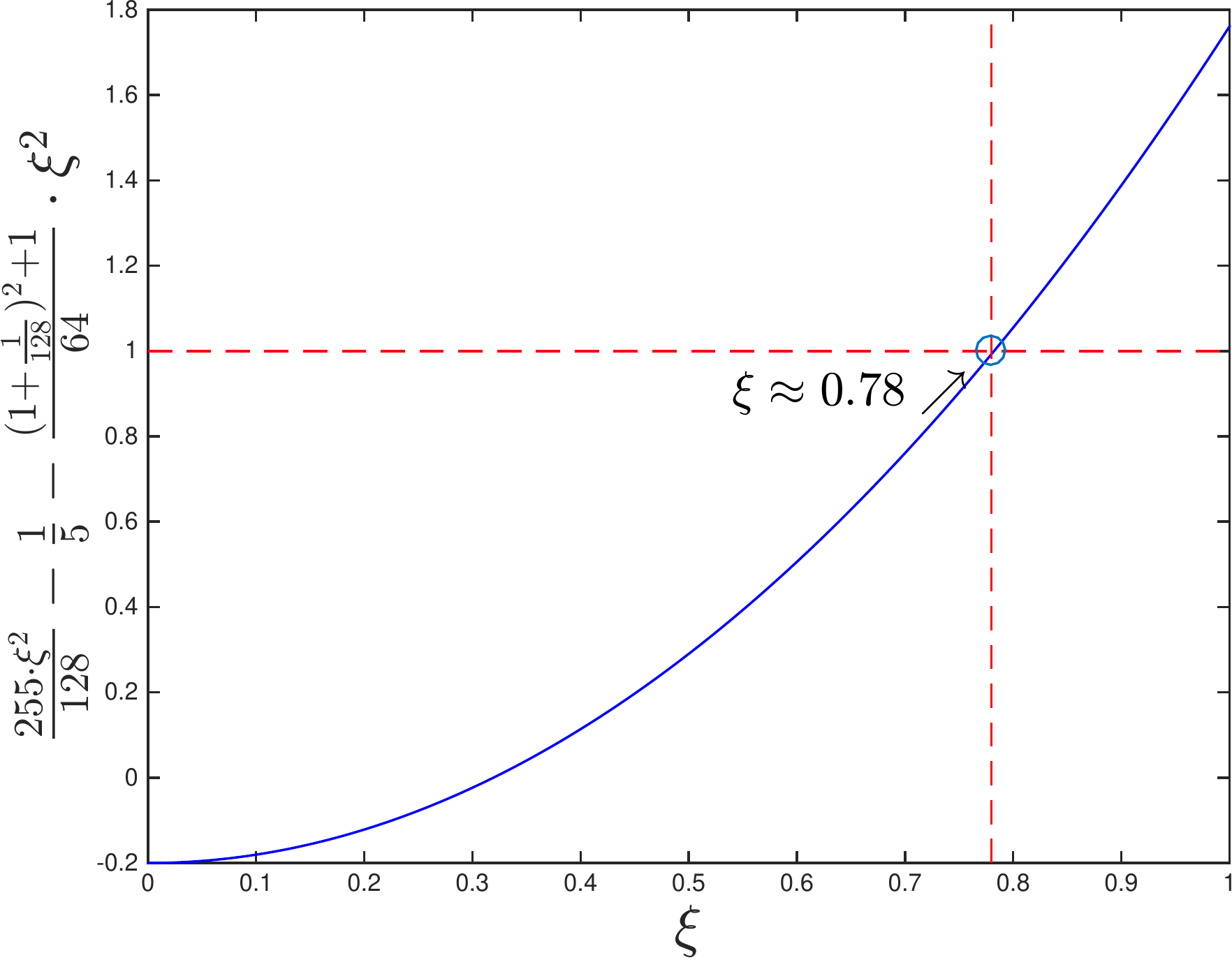} \hspace{0.3cm}
	\includegraphics[width=0.35\textwidth]{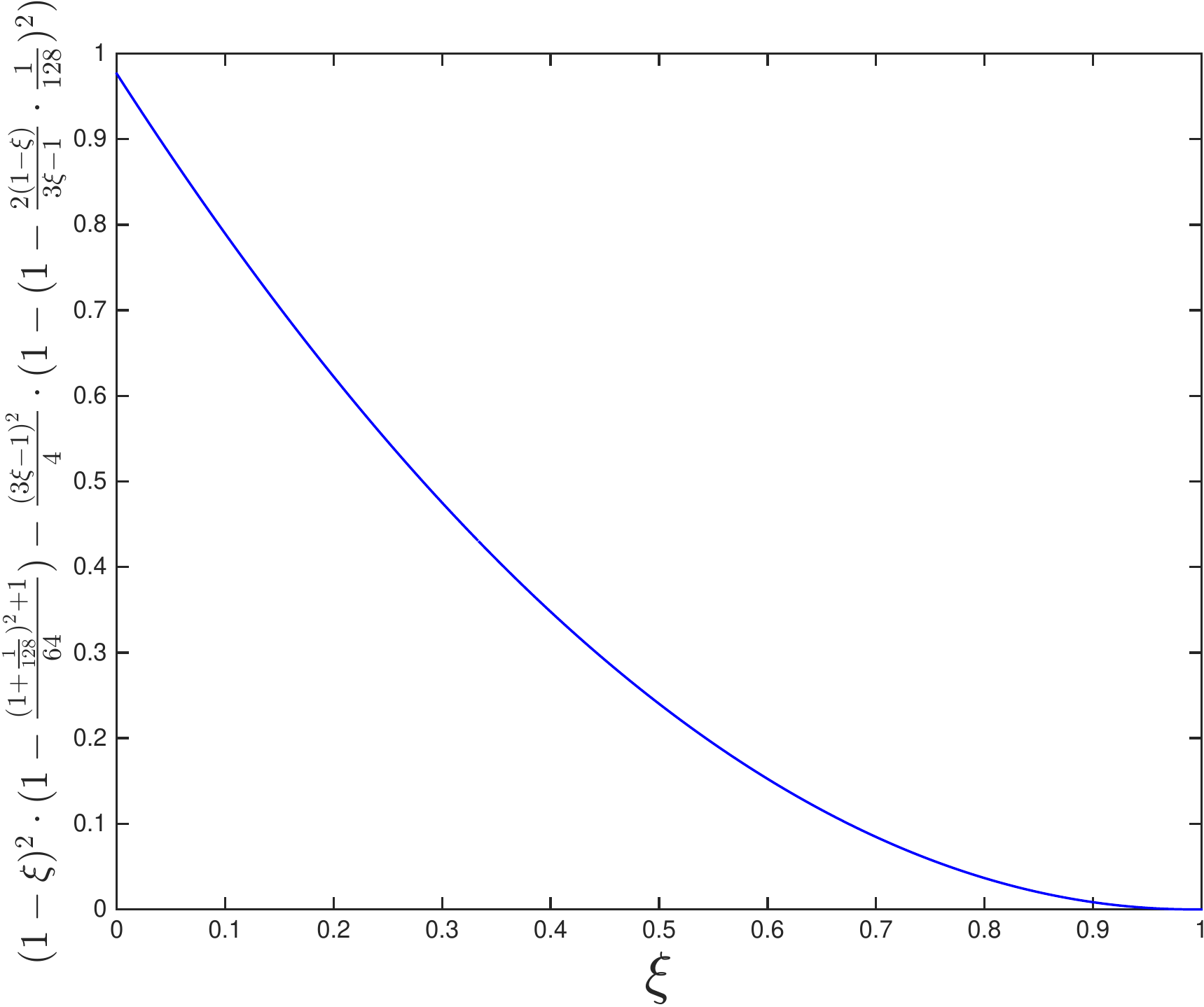}
	\caption{Behavior of constants, depending on $\xi$, in expression \eqref{eq:new_p_02}.} \label{fig:appendix_xi}
\end{figure}

Finally, we bound $\tfrac{\weta \mu}{2}\norm{\Xo -\X_t}_F^2$ using the following Lemma by \cite{tu2015low}:
\begin{lemma}{\label{lem:tu}}
For any $\U, \V \in \R^{n \times r}$, we have:
\begin{align*}
\|\U\U^\top - \V\V^\top\|^2 \geq 2 \cdot \left(\sqrt{2} - 1\right) \cdot \sigma_r(\U)^2 \cdot \dist(\U, \V)^2.
\end{align*} 
\end{lemma}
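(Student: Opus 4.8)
This is a Procrustes-type inequality of the kind used throughout the Burer--Monteiro literature (and quoted here from \cite{tu2015low}); the plan is to reconstruct its proof. The first step is to reduce to an \emph{aligned} configuration: let $R^\star \in \argmin_{R \in \mathcal{O}}\|\U - \V R\|_F$ and replace $\V$ by $\V R^\star$. Since $(\V R^\star)(\V R^\star)^\top = \V\V^\top$, $\sigma_r(\U)$ is unchanged, and $\|\U - \V R^\star\|_F = \dist(\U,\V)$, it suffices to prove $\|\U\U^\top - \V\V^\top\|_F^2 \ge 2(\sqrt2-1)\,\sigma_r(\U)^2\,\|\U - \V\|_F^2$ under the extra hypothesis that $\U^\top \V$ is symmetric positive semi-definite. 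This hypothesis is exactly the optimality condition for $R^\star$: if $\U^\top\V = Q\Sigma P^\top$ is an SVD with $\Sigma \succeq 0$, then $\trace(\U^\top \V R)$ is maximized at $R^\star = PQ^\top$, and after the replacement $\U^\top\V = Q\Sigma Q^\top \succeq 0$.

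Writing $\Delta := \U - \V$, the second step is the algebraic identity $\U\U^\top - \V\V^\top = \tfrac12\big((\U+\V)\Delta^\top + \Delta(\U+\V)^\top\big)$. Squaring and setting $P := (\U+\V)\Delta^\top$ gives
\begin{align*}
\|\U\U^\top - \V\V^\top\|_F^2 &= \tfrac12\|P\|_F^2 + \tfrac12\,\trace\!\big(((\U+\V)^\top\Delta)^2\big).
\end{align*}
The key point is that, because $\U^\top\V$ is symmetric, the cross terms in $(\U+\V)^\top\Delta = \U^\top\U - \U^\top\V + \V^\top\U - \V^\top\V$ cancel, so $(\U+\V)^\top\Delta = \U^\top\U - \V^\top\V$ is a symmetric matrix, whence $\trace(((\U+\V)^\top\Delta)^2) = \|\U^\top\U - \V^\top\V\|_F^2 \ge 0$ and this term may simply be dropped. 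Thus the potentially sign-indefinite ``cubic in $\Delta$'' contribution to $\|\U\U^\top - \V\V^\top\|_F^2$ has been absorbed into a manifestly nonnegative square --- the alignment of $R^\star$ is precisely what makes this work, and I expect this cross term to be the main obstacle in any proof.

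Third, I would bound $\|P\|_F^2 = \langle (\U+\V)^\top(\U+\V),\, \Delta^\top\Delta\rangle \ge \sigma_r(\U+\V)^2\,\|\Delta\|_F^2$, using $\langle A,B\rangle \ge \lambda_{\min}(A)\,\trace(B)$ for $A,B \succeq 0$; and since $(\U+\V)^\top(\U+\V) = \U^\top\U + \V^\top\V + 2\U^\top\V \succeq \U^\top\U$ (again by $\U^\top\V \succeq 0$), we get $\sigma_r(\U+\V)^2 \ge \sigma_r(\U)^2$ (and symmetrically $\ge \sigma_r(\V)^2$). This delivers $\|\U\U^\top - \V\V^\top\|_F^2 \ge \tfrac12\,\sigma_r(\U)^2\,\dist(\U,\V)^2$, i.e.\ the claimed inequality up to the slightly weaker constant $\tfrac12$ in place of $2(\sqrt2-1)\approx 0.83$. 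To recover the sharp constant of \cite{tu2015low} one instead keeps the dropped term $\tfrac12\|\U^\top\U - \V^\top\V\|_F^2$ and trades it against $\sigma_r(\V)^2$, or splits into the cases where $\|\Delta\|_2$ is small versus comparable to $\sigma_r(\cdot)$ --- in the former the quadratic term dominates, in the latter $\U\U^\top$ and $\V\V^\top$ are separated enough to bound the Frobenius distance directly, and the threshold is tuned to balance the two estimates, which produces $2(\sqrt2-1)$. The remaining work after the alignment step is then routine but constant-sensitive bookkeeping.
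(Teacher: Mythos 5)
The paper does not actually prove this lemma --- it is imported verbatim from \cite{tu2015low} and invoked as a black box in the proof of Lemma \ref{lem:gradU,U-U_r_ bound} --- so there is no internal proof to compare against; your proposal has to stand on its own. The structural part of your argument is correct and I could verify every step: the reduction to $\U^\top\V \succeq 0$ via the polar factor of the optimal rotation, the identity $\U\U^\top - \V\V^\top = \tfrac{1}{2}\left((\U+\V)\Delta^\top + \Delta(\U+\V)^\top\right)$, the cancellation $(\U+\V)^\top\Delta = \U^\top\U - \V^\top\V$ (which is exactly where the alignment is used and where the sign-indefinite cross term disappears), and the bound $\langle (\U+\V)^\top(\U+\V), \Delta^\top\Delta\rangle \geq \sigma_r(\U)^2\|\Delta\|_F^2$ using $(\U+\V)^\top(\U+\V) \succeq \U^\top\U$. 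This is a clean, self-contained proof of the inequality with constant $\tfrac{1}{2}$.

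The genuine gap is the constant: the lemma as stated claims $2(\sqrt{2}-1) \approx 0.83$, and your completed argument only delivers $\tfrac{1}{2}$. You flag this honestly, but the two upgrade routes you sketch (trading the retained term $\tfrac{1}{2}\|\U^\top\U - \V^\top\V\|_F^2$ against $\sigma_r(\V)^2$, or a case split on $\|\Delta\|_2$ with a tuned threshold) are gestures rather than arguments --- nothing in the proposal shows that either one actually closes the factor of $\approx 1.66$, and the balancing computation that produces $\sqrt{2}$ is precisely the nontrivial part of the proof in \cite{tu2015low} (which works from the decomposition $\Delta\V^\top + \V\Delta^\top + \Delta\Delta^\top$ and controls the cubic cross term by a case analysis, rather than symmetrizing around $\U+\V$ as you do). So the statement as written is not established. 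That said, the shortfall is purely quantitative and harmless for this paper: in the one place the lemma is used, the constant $2(\sqrt{2}-1)$ only feeds into $\sqrt{2}-1-\tfrac{1}{10} \geq \tfrac{3}{10}$; with your constant $\tfrac{1}{2}$ one gets $\tfrac{1}{4}-\tfrac{1}{10} = \tfrac{3}{20} > 0$, and Lemma \ref{lem:gradU,U-U_r_ bound}, Theorem \ref{thm:projFGD_guarantees}, and the initialization lemma all survive with correspondingly degraded absolute constants. If you want the lemma exactly as stated, you either need to carry out the case analysis in detail or simply cite \cite{tu2015low}; if you only need it for this paper's purposes, your weaker version suffices and is arguably simpler.
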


Thus, 
\begin{align*}
\tfrac{\weta \mu}{2}\norm{\Xo -\X_t}_F^2 \geq \weta \mu \cdot  \left(\sqrt{2} - 1\right) \cdot \sigma_r(\Xo) \cdot \dist(\U_t, \Uo)^2,
\end{align*} and can thus conclude:
\begin{align*}
2 \weta \big \langle &\gradf(\X_t) \cdot \U_t, ~\U_t - \Uo R_{\U_t}^\star \big \rangle +  \|\U_{t+1} - \widetilde{\U}_{t+1}\|_F^2 \nonumber \\ 
&\geq \weta^2\|\gradf(X_t) U_t\|_F^2 +  \weta \mu \cdot  \left(\sqrt{2} - 1\right) \cdot \sigma_r(\Xo) \cdot \dist(\U_t, \Uo)^2 - \tfrac{\weta \mu \sigma_r(\Xo)}{10} \cdot \dist(U_t, \Uo)^2 \\
&= \weta^2\|\gradf(X_t) U_t\|_F^2 +  \left(\sqrt{2} - 1 - \tfrac{1}{10}\right) \cdot \weta \mu\cdot \sigma_r(\Xo) \cdot \dist(\U_t, \Uo)^2 \\
&= \weta^2\|\gradf(X_t) U_t\|_F^2 +  \tfrac{3\weta \mu}{10} \cdot \sigma_r(\Xo) \cdot \dist(\U_t, \Uo)^2 
\end{align*}
This completes the proof.
\end{proof}

\subsection{Proof of Lemma \ref{lem:DD_bound_sc}}{\label{sec:DD_proof}}

\begin{proof}
We can lower bound $\ip{\gradf(\X_t) }{ \Delta \Delta^\top}$ as follows:
 \begin{align}
\ip{\gradf(X_t) }{ \Delta \Delta^\top} &\stackrel{(i)}{=} \ip{Q_{\Delta} Q_{\Delta}^\top \gradf(\X_t) }{ \Delta \Delta^\top} \nonumber \\
&\geq -  \left|\trace\left(Q_{\Delta} Q_{\Delta}^\top \gradf(\X_t) \Delta \Delta^\top\right)\right| \nonumber \\ 
&\stackrel{(ii)}{\geq} - \| Q_{\Delta} Q_{\Delta}^\top \gradf(\X_t) \|_2 \trace( \Delta \Delta^\top)  \nonumber \\
&\stackrel{(iii)}{\geq} - \left( \|Q_{\U_t} Q_{\U_t}^\top\gradf(\X_t)\|_2+  \|Q_{\Uo} Q_{\Uo}^\top\gradf(\X_t)\|_2 \right)  \dist(\U_t, \Uo)^2 .   \label{proofsr1:eq_11}
\end{align}
Note that  $(i)$ follows from the fact $\Delta =Q_{\Delta} Q_{\Delta}^\top \Delta$ and 
$(ii)$ follows from $|\trace(AB)| \leq \|A\|_2 \trace(B)$, for PSD matrix $B$ (Von Neumann's trace inequality~\cite{mirsky1975trace}). 
For the transformation in $(iii)$, we use that fact that the column space of $\Delta$, $\text{\textsc{Span}}(\Delta)$, is a subset of $\text{\textsc{Span}}(\U_t \cup \Uo)$, as $\Delta$ is a linear combination of $\U_t$ and $\Uo R_{U_t}^\star$. % Finally $(iv)$ follows from the hypothesis of the Lemma. 

For the second term in the parenthesis above, we first derive the following inequalities; their use is apparent later on:
\begin{align*}
\|\gradf(\X_t)\Uo\|_2 &\stackrel{(i)}{\leq} \|\gradf(\X_t)\U_t\|_2 +\|\gradf(\X_t)\Delta\|_2 \\
&\stackrel{(ii)}{\leq} \|\gradf(\X_t)\U_t\|_2 +\|\gradf(\X_t)Q_{\Delta}Q_{\Delta}^\top\|_2\|\Delta\|_2 \\
&\stackrel{(iii)}{\leq} \|\gradf(\X_t)\U_t\|_2 +\left( \|\gradf(\X_t)Q_{U_t}Q_{U_t}^\top\|_2 +\|\gradf(\X_t)Q_{\Uo}Q_{\Uo}^\top\|_2 \right) \|\Delta\|_2 \\
&\stackrel{(iv)}{\leq} \|\gradf(\X_t)\U_t\|_2 +\left( \|\gradf(\X_t)Q_{U_t}Q_{U_t}^\top\|_2 +\|\gradf(\X_t)Q_{\Uo}Q_{\Uo}^\top\|_2 \right)\tfrac{1}{200} \sigma_{r}(\Uo)
\end{align*}
\begin{align*}
&\stackrel{(v)}{\leq} \|\gradf(\X_t)\U_t\|_2 +  \tfrac{1}{\left(1-\frac{1}{200}\right)} \cdot \tfrac{1}{200}\|\gradf(\X_t)U_t\|_2 + \tfrac{1}{200} \|\gradf(\X_t)\Uo\|_2 \\
&\leq \tfrac{200}{199}\|\gradf(\X_t)\U_t\|_2 + \tfrac{1}{200} \|\gradf(\X_t)\Uo\|_2.
\end{align*} 
where 
$(i)$ is due to triangle inequality on $\Uo R_{\U_t}^\star = \U_t - \Delta$, 
$(ii)$ is due to generalized Cauchy-Schwarz inequality; we denote as $\Q_\Delta\Q_\Delta$ the projection matrix on the column span of $\Delta$ matrix, 
$(iii)$ is due to triangle inequality and the fact that the column span of $\Delta$ can be decomposed into the column span of $\U_t$ and $\Uo$, by construction of $\Delta$, 
$(iv)$ is due to the assumption $ \dist(\U_t, \Uo) \leq \rho' \cdot \sigma_r(\Uo)$ and 
$$\|\Delta\|_2 \leq \dist(\U_t, \Uo) \leq \tfrac{1}{200} \tfrac{\sigma_r(\Xo)}{\sigma_1(\Xo)} \cdot \sigma_r(\Uo) \leq \tfrac{1}{200} \cdot \sigma_r(\Uo).$$ 
Finally, $(v)$ is due to the facts: $$\|\gradf(\X_t) \Uo\|_2 = \|\gradf(\X_t) \Q_{\Uo}\Q_{\Uo}^\top \Uo\|_2 \geq \|\gradf(\X_t) \Q_{\Uo}\Q_{\Uo}^\top\|_2 \cdot \sigma_r(\Uo),$$ and 
\begin{align*}
\|\gradf(\X_t) \U_t\|_2 &= \|\gradf(\X_t) \Q_{\U_t}\Q_{\U_t}^\top \U\|_2 \geq \|\gradf(\X_t) \Q_{\U_t}\Q_{\U_t}^\top\|_2 \cdot \sigma_r(\U_t) \nonumber \\ 
&\geq \|\gradf(\X_t) \Q_{\U}\Q_{\U}^\top\|_2 \cdot \left(1 - \tfrac{1}{200}\right) \cdot \sigma_r(\Uo),
\end{align*} by the proof of (a variant of) Lemma A.3 in \cite{bhojanapalli2015dropping}. 
Thus, for the term $ \|\gradf(\X_t)Q_{\Uo} Q_{\Uo}^\top\|_2$, we have
\begin{align}\label{proofsjc:eq_067}
 \|\gradf(\X_t)Q_{\Uo} Q_{\Uo}^\top\|_2  &\leq \tfrac{1}{\sigma_r(\Uo)}\|\gradf(\X_t)\Uo\|_2 \nonumber \\ 
 &\leq \tfrac{1}{\sigma_r(\Uo)}\tfrac{201}{199} \|\gradf(\X_t)\U_t\|_2 \nonumber \\ 
 &\leq \tfrac{201 \sigma_1(\Uo)}{200 \sigma_r(\Uo)}\tfrac{201}{199} \|\gradf(\X_t)Q_{U_t} Q_{U_t}^\top\|_2.
\end{align} 

Using \eqref{proofsjc:eq_067} in \eqref{proofsr1:eq_11}, we obtain:
\begin{align}
\ip{\gradf(X_t) }{ \Delta \Delta^\top} &\geq - \left( \|Q_{\U_t} Q_{\U_t}^\top\gradf(\X_t)\|_2 +  \tfrac{201 \sigma_1(\Uo)}{200 \sigma_r(\Uo)}\tfrac{201}{199} \|Q_{U_t} Q_{U_t}^\top\gradf(\X_t)\|_2 \right)  \dist(\U_t, \Uo)^2 \nonumber \\ 
&\geq - \tfrac{21 \cdot \tau(\Uo)}{10} \|Q_{U_t} Q_{U_t}^\top\gradf(\X_t)\|_2 \dist(\U_t, \Uo)^2 \nonumber
\end{align}

We remind that the step size we use here is: $\widehat{\eta} =  \tfrac{1}{128(L\|\X_t\|_2 + \|Q_{U_t} Q_{U_t}^\top\gradf(\X_t)\|_2)} $.
Then, we have:
\begin{small}
\begin{align}
&\tfrac{21 \cdot \tau(\Uo)}{10} \cdot \|Q_{U_t} Q_{U_t}^\top\gradf(\X_t)\|_2 \cdot \dist(\U_t, \Uo)^2 \nonumber \\ 
 &\quad \quad \quad \quad \quad \quad \leq \tfrac{21 \cdot \tau(\Uo)}{10} \cdot \weta \cdot 128 L\|\X_t\|_2 \|Q_{\U_t} Q_{\U_t}^\top\gradf(\X_t)\|_2 \cdot \dist(\U_t, \Uo)^2 \nonumber \\ 
 &\quad \quad \quad \quad \quad \quad \quad \quad \quad \quad \quad \quad + \tfrac{21 \cdot \tau(\Uo)}{10} \cdot \weta \cdot 128 \cdot \|Q_{\U_t} Q_{\U_t}^\top\gradf(\X_t)\|_2^2 \cdot  \dist(\U_t, \Uo)^2 \label{eq:dff_00}
\end{align} 
\end{small}
To bound the first term on the right hand side, we observe that $\|Q_{\U_t} Q_{\U_t}^\top\gradf(\X_t)\|_2 \leq \tfrac{\mu \sigma_r(\X_t)}{\tfrac{21 \cdot \tau(\Uo)}{10} \cdot 10}$ or $\|Q_{\U_t} Q_{\U_t}^\top\gradf(\X_t)\|_2 \geq \tfrac{\mu \sigma_r(\X_t)}{\tfrac{21 \cdot \tau(\Uo)}{10} \cdot 10}$.
This results further into:
\begin{small}
\begin{align*}
\tfrac{21 \cdot \tau(\Uo)}{10} &\cdot \weta \cdot 128 L\|\X_t\|_2 \|Q_{\U_t} Q_{\U_t}^\top\gradf(\X_t)\|_2 \cdot \dist(\U_t, \Uo)^2 \\ 
																				&\leq \max \bigg\{\tfrac{\tfrac{21 \cdot \tau(\Uo)}{10} \cdot 128 \cdot \weta \cdot L \|X_t\|_2 \cdot \mu \sigma_r(\X_t)}{\tfrac{21 \cdot \tau(\Uo)}{10} \cdot 10} \cdot \dist(\U_t, \Uo)^2, \nonumber \\ &\quad \quad \quad \quad \quad \quad \quad \weta \left(\tfrac{21 \cdot \tau(\Uo)}{10}\right)^2 \cdot 128 \cdot 10 \kappa \tau(\X_t) \|Q_{\U_t} Q_{\U_t}^\top\gradf(\X_t)\|_2^2 \cdot  \dist(\U_t, \Uo)^2 \bigg\} \\
																				 &\leq \tfrac{128 \cdot \weta \cdot L \|X_t\|_2 \cdot \mu \sigma_r(\X_t)}{10} \cdot \dist(\U_t, \Uo)^2 \nonumber \\ &\quad \quad \quad \quad \quad \quad \quad+ \weta \left(\tfrac{21 \cdot \tau(\Uo)}{10}\right)^2 \cdot 128 \cdot 10 \kappa \tau(\X_t) \|Q_{\U_t} Q_{\U_t}^\top\gradf(\X_t)\|_2^2 \cdot  \dist(\U_t, \Uo)^2,
\end{align*} 
\end{small} where $\kappa := \tfrac{L}{\mu}$ and $\tau(\X) := \tfrac{\sigma_1(\X)}{\sigma_r(\X)}$ for a rank-$r$ matrix $\X$.
Combining the above with \eqref{eq:dff_00}:
\begin{small}
\begin{align*}
&\tfrac{21 \cdot \tau(\Uo)}{10} \cdot \|Q_{U_t} Q_{U_t}^\top\gradf(\X_t)\|_2 \cdot \dist(\U_t, \Uo)^2 \nonumber \\ 
&\stackrel{(i)}{\leq}  \tfrac{\mu \sigma_{r}(\X_t)}{10} \cdot \dist(\U_t, \Uo)^2 \nonumber \\ 
&\quad \quad \quad \quad \quad \quad \quad \quad+ \left(10 \kappa \tau(\X_t) \cdot \tfrac{21 \cdot \tau(\Uo)}{10} +1 \right)\cdot \tfrac{21 \cdot \tau(\Uo)}{10} \cdot  128 \cdot \weta \|Q_{\U_t} Q_{\U_t}^\top\gradf(\X_t)\|_2^2 \cdot \dist(\U_t, \Uo)^2  \nonumber \\
&\stackrel{(ii)}{\leq}  \tfrac{\mu \sigma_{r}(\X_t)}{10} \cdot \dist(\U_t, \Uo)^2 \nonumber \\ 
&\quad \quad \quad \quad \quad \quad \quad \quad+ \left(11 \kappa \tau(\Xo) \cdot \tfrac{21 \cdot \tau(\Uo)}{10} +1 \right)\cdot \tfrac{21 \cdot \tau(\Uo)}{10} \cdot  128 \cdot \weta \|Q_{\U_t} Q_{\U_t}^\top\gradf(\X_t)\|_2^2 \cdot (\rho')^2 \sigma_{r}(\Xo)   \nonumber \\
&\stackrel{(iii)}{\leq}  \tfrac{\mu \sigma_{r}(\X_t)}{10} \cdot \dist(\U_t, \Uo)^2 + \tfrac{12 \cdot 21^2}{10^2} \cdot \kappa \cdot \tau(\Xo)^2 \cdot 128 \cdot \weta \|\gradf(\X_t)\U_t\|_2^2 \cdot \tfrac{11 \cdot (\rho')^2}{10} \nonumber \\
&\stackrel{(iii)}{\leq}  \tfrac{\mu \sigma_{r}(\X_t)}{10} \cdot \dist(\U_t, \Uo)^2+ \tfrac{\weta}{5} \|\gradf(\X_t)\U_t\|_2^2 \nonumber
\end{align*}
\end{small}
where $(i)$ follows from $\weta \leq \tfrac{1}{128 L \|\X_t\|_2}$, 
$(ii)$ is due to Lemma A.3 in \cite{bhojanapalli2015dropping} and using the bound $\dist(\U_t, \Uo) \leq \rho' \sigma_{r}(\Uo)$ by the hypothesis of the lemma, 
$(iii)$ is due to $\sigma_{r}(\Xo) \leq 1.1 \sigma_{r}(\X_t)$ by Lemma A.3 in \cite{bhojanapalli2015dropping}, due to the facts $\sigma_{r}(\X_t)\|Q_{\U_t} Q_{\U_t}^\top\gradf(\X_t)\|_2^2 \leq  \|U_t^\top\gradf(X_t)\|_F^2$ and $(11 \kappa \tau(\Xo) \cdot \tfrac{21 \cdot \tau(\Uo)}{10} +1) \leq 12 \kappa \tau(\Xo) \cdot \tfrac{21 \cdot \tau(\Uo)}{10}$, and $\tau(\Uo)^2 = \tau(\Xo)$. 
Finally, $(iv)$ follows from substituting $\rho' := c \cdot \tfrac{1}{\kappa} \cdot \tfrac{1}{\tau(\Xo)}$ for $c = \tfrac{1}{200}$ and using Lemma A.3 in \cite{bhojanapalli2015dropping} (due to the factor $\tfrac{1}{200}$, all constants above lead to bounding the term with the constant $\tfrac{1}{5}$).

Thus, we can conclude:
\begin{small}
\begin{align*}
\ip{\gradf(\X_t) }{ \Delta \Delta^\top} \geq - \left(\tfrac{\weta}{5} \|\gradf(\X_t) U_t \|_F^2 + \tfrac{\mu \sigma_{r}(\Xo)}{10} \cdot \dist(\U_t, \Uo)^2\right).
\end{align*}
\end{small}
This completes the proof.
\end{proof}

\subsection{Proof of Corollary \ref{cor:projFGD_schatten}}

We have
\begin{align*}
\| \widetilde{U}_{t+1} \|_F
&\le \| \U_t \|_F + \widehat{\eta} \cdot \| \gradf(X_t) U_t \|_F \\
&\le \| \U_t \|_F + \widehat{\eta} \cdot \| \gradf(X_t) \Q_{\U_t} \Q_{\U_t}^\top\|_2 \cdot \| U_t \|_F \\
&= (1 + \widehat{\eta} \cdot \| \gradf(X_t) \Q_{\U_t} \Q_{\U_t}^\top\|_2) \cdot \lambda \\
&\le (1 + \tfrac{1}{128}) \cdot \lambda
\end{align*}
where the first inequality follows from the triangle inequality, the second holds by the property $\|AB\|_F \le \|A\|_2 \cdot \|B\|_F$, and the third follows because the step size is bounded above by $\widehat{\eta} \le \frac{1}{128 \|\gradf(X_t)\Q_{\U_t} \Q_{\U_t}^\top\|_2}$. Hence, we get $\xi(\widetilde{U}_{t+1}) = \frac{\lambda}{\| \widetilde{U}_{t+1} \|_F} \ge \frac{128}{129}$.

%\clearpage
\section*{Initialization}{\label{sec:init}}
In this section, we present a specific initialization strategy for the \palgo. 
For completeness, we repeat the definition of the optimization problem at hand, both in the original space:
\begin{equation}{\label{init:eq_01}}
\begin{aligned}
	& \underset{\X \in \R^{n \times n}}{\text{minimize}}
	& & f(\X) \quad \quad \text{subject to} \quad \X \in \C'.
\end{aligned}
\end{equation} 
and the factored space:
\begin{equation}{\label{init:eq_00}}
\begin{aligned}
	& \underset{\U \in \R^{n \times r}}{\text{minimize}}
	& & f(\U\U^\top) \quad \quad \text{subject to} \quad \U \in \C.
\end{aligned}
\end{equation} 
For our initialization, we restrict our attention to the full rank ($r = n$) case.
Observe that, in this case, $\C'$ is a convex set and  includes the full-dimensional PSD cone, as well as other norm constraints, as described in the main text.
Let us denote $\Pi_{\C'}(\cdot)$ the corresponding projection step, where all constraints are satisfied simultaneously.
Then, the initialization we propose follows similar motions with that in \cite{bhojanapalli2015dropping}:
We consider the projection of the weighted negative gradient at $0$, \emph{i.e.}, $-\tfrac{1}{L} \cdot \nabla f(0)$, onto $\C'$.\footnote{As in \cite{bhojanapalli2015dropping}, one can approximate easily $L$, if it is unknown.} \emph{I.e.}, 
\begin{equation}
\X_0 = \U_0 \U_0^\top = \Pi_{\C'}\left(\tfrac{-1}{L} \cdot \nabla f(0) \right).\label{eq:projinit}
\end{equation}
Assuming a first-oracle model, where we access $f$ only though function evaluations and gradient calculations, \eqref{eq:projinit} provides a cheap way to find an initial point with some approximation guarantees as follows\footnote{As we show in the experiments section, a random initialization performs well in practice, without requiring the additional calculations involved in \eqref{eq:projinit}. However, a random initialization provides no guarantees whatsoever.}:

\begin{lemma}
Let $\U_0 \in \R^{n \times n}$ be such that $\X_0 = \U_0\U_0^\top = \Pi_{\C'}\left(\tfrac{-1}{L} \cdot \nabla f(0) \right)$. 
Consider the problem in \eqref{init:eq_00} where $f$ is assumed to be $L$-smooth and $\mu$-strongly convex, with optimum point $\X^\star$ such that $\text{rank}(\X^\star) = n$. 
We apply \palgo algorithm with $\U_0$ as the initial point. 
Then, in this generic case, $\U_0$ satisfies:
\begin{align*}
\dist(\U_0, \Uo) \leq \rho' \cdot \sigma_r(\Uo),
\end{align*} where $\rho' = \sqrt{\tfrac{1 - \sfrac{\mu}{L}}{2(\sqrt{2}-1)}} \cdot \tau^2(\Uo) \cdot \sqrt{\texttt{srank}(\Xo)}$ and $\texttt{srank}(\X) = \tfrac{\|X\|_F}{\|X\|_2}$.
\end{lemma}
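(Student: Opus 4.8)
The plan is to compare the initial point $\X_0 = \Pi_{\C'}(-\tfrac{1}{L}\nabla f(0))$ with the optimum $\Xo$ in Frobenius norm, and then convert that $\X$-space bound into a bound on $\dist(\U_0,\Uo)$ in the factored space using Lemma~\ref{lem:tu}. Since we are in the full-rank case $r=n$, the set $\C'$ is genuinely convex and contains $\Xo$, so the projection $\Pi_{\C'}(\cdot)$ is nonexpansive; this is the key structural fact that makes the argument clean. First I would write $\X_0 = \Pi_{\C'}(-\tfrac1L\nabla f(0))$ and recall that $\Xo = \Pi_{\C'}(\Xo)$ since $\Xo \in \C'$. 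By the nonexpansiveness of Euclidean projection onto a convex set,
\begin{align*}
\|\X_0 - \Xo\|_F \;\leq\; \big\|-\tfrac1L\nabla f(0) - \Xo\big\|_F \;=\; \big\|\tfrac1L\big(\nabla f(0) - \nabla f(\Xo)\big) + \big(\tfrac1L\nabla f(\Xo) - \Xo\big)\big\|_F.
\end{align*}
The first term is controlled by $L$-smoothness as $\tfrac1L\|\nabla f(0)-\nabla f(\Xo)\|_F \leq \|\Xo\|_F$. The remaining step is to show the whole quantity is at most $\sqrt{1-\mu/L}\cdot\|\Xo\|_F$; this is exactly the standard "one gradient step from $0$ in the original space" estimate (as in \cite{bhojanapalli2015dropping}), which uses both $\mu$-strong convexity and $L$-smoothness together with the first-order optimality of $\Xo$ over $\C'$ (so that $\langle \nabla f(\Xo), \X - \Xo\rangle \geq 0$ for all $\X \in \C'$, in particular for $\X = \X_0$). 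So the intermediate bound I am aiming for is
\begin{align*}
\|\X_0 - \Xo\|_F \;\leq\; \sqrt{1-\tfrac{\mu}{L}}\cdot\|\Xo\|_F.
\end{align*}

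Next I would pass to the factored space. Apply Lemma~\ref{lem:tu} with $\U = \Uo$ (the optimal factor) and $\V = \U_0$, giving
\begin{align*}
\|\Xo - \X_0\|_F^2 \;=\; \|\Uo(\Uo)^\top - \U_0\U_0^\top\|_F^2 \;\geq\; 2(\sqrt{2}-1)\,\sigma_r(\Uo)^2\,\dist(\U_0,\Uo)^2,
\end{align*}
hence
\begin{align*}
\dist(\U_0,\Uo)^2 \;\leq\; \frac{\|\X_0-\Xo\|_F^2}{2(\sqrt2-1)\,\sigma_r(\Uo)^2} \;\leq\; \frac{(1-\mu/L)\,\|\Xo\|_F^2}{2(\sqrt2-1)\,\sigma_r(\Uo)^2}.
\end{align*}
Finally I would rewrite $\|\Xo\|_F$ in terms of the quantities appearing in the statement: $\|\Xo\|_F = \|\Xo\|_2 \cdot \texttt{srank}(\Xo)^{1/2} = \sigma_1(\Uo)^2\,\texttt{srank}(\Xo)^{1/2}$, and $\sigma_1(\Uo)^2 = \tau^2(\Uo)\,\sigma_r(\Uo)^2$. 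Substituting,
\begin{align*}
\dist(\U_0,\Uo)^2 \;\leq\; \frac{1-\mu/L}{2(\sqrt2-1)}\;\tau^4(\Uo)\,\texttt{srank}(\Xo)\cdot\sigma_r(\Uo)^2,
\end{align*}
which is precisely $\dist(\U_0,\Uo) \leq \rho'\,\sigma_r(\Uo)$ with $\rho' = \sqrt{\tfrac{1-\mu/L}{2(\sqrt2-1)}}\,\tau^2(\Uo)\,\sqrt{\texttt{srank}(\Xo)}$.

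The main obstacle I anticipate is the intermediate $\X$-space bound $\|\X_0-\Xo\|_F \leq \sqrt{1-\mu/L}\,\|\Xo\|_F$: the clean triangle-inequality split above overshoots and needs to be replaced by the sharper argument that expands $\|\tfrac{-1}{L}\nabla f(0) - \Xo\|_F^2$ directly, uses strong convexity at $0$ and $L$-smoothness to relate $\nabla f(0)$ to $\Xo$, and absorbs the cross term via the variational inequality for $\Xo$ on $\C'$. This is exactly the computation carried out in \cite{bhojanapalli2015dropping} for the PSD cone, and the only thing to check here is that it goes through verbatim with the general convex $\C'$ in place of the PSD cone — which it does, since the only property used is that $\C'$ is convex, contains $\Xo$, and $\Pi_{\C'}$ is therefore nonexpansive with $\langle \nabla f(\Xo), \X-\Xo\rangle \geq 0$ on $\C'$. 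A secondary point to be careful about is that the lemma as stated assumes $\text{rank}(\Xo) = n$, so $\sigma_r(\Uo) = \sigma_n(\Uo) > 0$ and all the $\tau(\Uo)$ and $\texttt{srank}$ factors are well defined; no degeneracy arises.
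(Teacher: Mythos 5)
Your second half---converting $\|\X_0-\Xo\|_F$ into a bound on $\dist(\U_0,\Uo)$ via Lemma \ref{lem:tu} and the algebra with $\tau(\Uo)$ and $\texttt{srank}(\Xo)$---matches the paper. The gap is in the first half. You reduce to bounding the \emph{pre-projection} distance via nonexpansiveness, $\|\X_0-\Xo\|_F\le\|-\tfrac1L\gradf(0)-\Xo\|_F$, and then aim to show $\|-\tfrac1L\gradf(0)-\Xo\|_F\le\sqrt{1-\mu/L}\,\|\Xo\|_F$. That intermediate inequality is false in the constrained setting: $\Xo$ is only a constrained minimizer, so $\gradf(\Xo)\ne 0$ in general, and the ``one gradient step from $0$'' contraction you invoke requires $\gradf(\Xo)=0$ (equivalently, the sufficient-decrease bound $f(0)-f(\Xo)\ge\tfrac{1}{2L}\|\gradf(0)\|_F^2$, which fails when $-\tfrac1L\gradf(0)$ is infeasible). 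Concretely, take $n=1$, $f(x)=\tfrac12(x-a)^2$ with $a>1$ and $\C'=\{x: 0\le x\le 1\}$, so $\mu=L=1$ and $\Xo=1$: then $-\tfrac1L\gradf(0)=a$ and $|a-\Xo|=a-1>0=\sqrt{1-\mu/L}\cdot|\Xo|$, even though the projected point satisfies $\X_0=1=\Xo$. This shows exactly why nonexpansiveness is too lossy: the projection must enter through its obtuse-angle characterization at the \emph{projected} point, not merely as a $1$-Lipschitz map, and the variational inequality $\ip{\gradf(\Xo)}{\X-\Xo}\ge 0$ cannot rescue a bound that makes no reference to $\X_0$.

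The paper's proof never leaves the projected point: it expands $\|\X_0-\Xo\|_F^2=\|\Xo\|_F^2+\|\X_0\|_F^2-2\ip{\X_0}{\Xo}$ and lower-bounds $\ip{\X_0}{\Xo}$ directly by combining (i) Lemma \ref{lem:proj} applied at $\X_0=\Pi_{\C'}(-\tfrac1L\gradf(0))$, which gives $\ip{-\tfrac1L\gradf(0)}{\X_0-\Xo}\ge\ip{\X_0}{\X_0-\Xo}$; (ii) strong convexity of $f$ at $0$ evaluated at $\Xo$; (iii) smoothness of $f$ at $0$ evaluated at $\X_0$; and (iv) feasibility of $\X_0$, so that $f(\Xo)\le f(\X_0)$. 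Chaining these yields $\ip{\X_0}{\Xo}\ge\tfrac12\|\X_0\|_F^2+\tfrac{\mu}{2L}\|\Xo\|_F^2$, hence $\|\X_0-\Xo\|_F\le\sqrt{1-\mu/L}\,\|\Xo\|_F$, after which your argument proceeds as written. A last cosmetic point: your substitution $\|\Xo\|_F=\|\Xo\|_2\cdot\texttt{srank}(\Xo)^{1/2}$ presumes the usual stable rank $\|\Xo\|_F^2/\|\Xo\|_2^2$, whereas the lemma defines $\texttt{srank}(\X)=\|\X\|_F/\|\X\|_2$; this mismatch is present in the paper itself and does not affect the substance of your reduction.
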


\begin{proof}
To show this, we start with:%require the following result:
%\begin{lemma}
%Let $\Uo(\Uo)^\top$ be the optimum of $f(X)$ constrained over the PSD cone and $\mathcal{C}$. 
%$$\| U^0 (U^0)^\top - \Uo(\Uo)^\top\|_F \leq \left( 1-\frac{\mu}{L} \right) \|\Uo(\Uo)^\top\|_F.$$
%\end{lemma}
\begin{align}
\|X_0 -\Xo\|_F^2 = \|\Xo\|_F^2 + \|X_0\|_F^2 -2\ip{X_0}{\Xo}\label{eq:projinit0}.  
\end{align}
Recall that $\X_0 = \U_0\U_0^\top = \Pi_{\C'}\left(\tfrac{-1}{L} \cdot \nabla f(0) \right)$ by assumption, where $\Pi_{\C'}(\cdot)$ is a convex projection.
Then, by Lemma \ref{lem:proj}, we get
\begin{align}
\ip{\X_0 - \Xo}{-\tfrac{1}{L}\cdot \nabla f(0) - \Xo} \Rightarrow \ip{\tfrac{-1}{L}\nabla f(0)}{X^0 -X^*} \geq \ip{X^0}{X^0-\Xo}. \label{eq:projinit1}
\end{align}

Observe that $0 \in \R^{n \times n}$ is a feasible point, since it is PSD and satisfy any common \emph{symmetric} norm constraints, as the ones considered in this paper. 
Hence, using strong convexity of $f$ around $0$, we get,
\begin{align}
    f(\Xo)-\frac{\mu}{2}\|\Xo\|_F^2 &\geq f(0)+\ip{\gradf(0)}{\Xo} \nonumber \\
    &\stackrel{(i)}{=} f(0)+\ip{\gradf(0)}{X_0}+\ip{\gradf(0)}{\Xo - X_0} \nonumber \\
    &\stackrel{(ii)}{\geq} f(0)+\ip{\gradf(0)}{X_0}+\ip{L\cdot X_0}{X_0 - \Xo}.\label{eq:projinit2}
\end{align} 
where $(i)$ is by adding and subtracting $\ip{\nabla f(0)}{\X_0}$, and
$(ii)$ is due to \eqref{eq:projinit1}.
Further, using the smoothness of $f$ around $0$, we get:
\begin{align}
    f(X_0) &\leq f(0) + \ip{\gradf(0)}{X_0} + \tfrac{L}{2}\|X_0\|_F^2 \nonumber\\
    &\stackrel{(i)}{\leq} f(\Xo)-\tfrac{\mu}{2}\|\Xo\|_F^2 + \ip{L \cdot X_0}{\Xo} -\tfrac{L}{2}\|X_0\|_F^2 \nonumber \\
    &\leq f(X_0) - \tfrac{\mu}{2}\|\Xo\|_F^2 + \ip{L \cdot X_0}{\Xo} - \tfrac{L}{2}\|X_0\|_F^2. \nonumber
\end{align} where
$(i)$ follows from \eqref{eq:projinit2} by upper bounding the quantity $f(0) + \ip{\gradf(0)}{X_0}$,
$(ii)$ follows from the assumption that $f(\Xo) \leq f(\X_0)$. 
Hence, rearranging the above terms, we get:
\begin{align*}
    \ip{X_0}{\Xo} \geq \tfrac{1}{2}\|X_0\|_F^2 + \tfrac{\mu}{2L}\|\Xo\|_F^2.
\end{align*}
Combining the above inequality with~\eqref{eq:projinit0}, we obtain,
\begin{align*}
    \| \X_0 - \Xo\|_F \leq  \sqrt{1-\tfrac{\mu}{L}} \cdot \|\Xo\|_F.
\end{align*}
Given, $\U_0$ such that $\X_0 = \U_0\U_o^\top$ and $\Uo$ such that $\Xo = \Uo\U^{\star\top}$, we use Lemma \ref{lem:tu} from \cite{tu2015low} to obtain:
\begin{align*}
\|\U_0\U_0^\top - \Uo\U^{\star \top}\|_F \geq \sqrt{2 (\sqrt{2} - 1)} \cdot \sigma_r(\Uo) \cdot \dist(U_0, \Uo).
\end{align*} 
Thus:
\begin{align*}
\dist(U_0, \Uo) &\leq \tfrac{\|\X_0 - \Xo\|_F}{\sqrt{2(\sqrt{2}-1)} \cdot \sigma_r(\Uo)} \cdot \|\Xo\|_F \nonumber \\ 
					  &\leq \rho' \cdot \sigma_r(\Uo)
\end{align*} where $\rho' =  \sqrt{\tfrac{1 - \sfrac{\mu}{L}}{2(\sqrt{2}-1)}} \cdot \tau^2(\Uo) \cdot \sqrt{\texttt{srank}(\Xo)}$.
\end{proof}
Such initialization, while being simple, introduces further restrictions on the condition number $\tau(\Xo)$, and the condition number of function $f$. 
Finding such simple initializations with weaker restrictions remains an open problem; however, as shown in \cite{bhojanapalli2015dropping, tu2015low, chen2015fast}, one can devise specific deterministic initialization for a given application.

As a final comment, we state the following: In practice, the projection $\Pi_{\C'}(\cdot)$ step might not be easy to compute, due to the joint involvement of convex sets. 
A practical solution would be to sequentially project $-\tfrac{1}{L} \cdot \nabla f(0)$ onto the individual constraint sets.
Let $\Pi_{+}(\cdot)$ denote the projection onto the PSD cone.
Then, we can consider the approximate point:
\begin{align*}
\widetilde{\X}_0 = \widetilde{\U}_0 \widetilde{\U}_0^\top = \Pi_{+} \left( \widetilde{\X}_0 \right);
\end{align*} 
Given $\widetilde{\U}_0$, we can perform an additional step:
$$ 
\U_0 = \Pi_{\C}\left(\widetilde{\U}_0\right),
$$
to guarantee that $\U_0 \in \C$.
\end{document}